%% file: iclr2027_conference.tex
\documentclass{article} 
\usepackage{iclr2027_conference,times}

\let\savedaddcontentsline\addcontentsline
\let\addcontentsline\savedaddcontentsline

\usepackage{titletoc}

\dottedcontents{appsection}
  [18pt]{\addvspace{4pt}\bfseries}{18pt}{5pt}

\dottedcontents{appsubsection}
  [42pt]{\addvspace{4pt}\bfseries}{24pt}{5pt}

\dottedcontents{appsubsubsection}
  [72pt]{\addvspace{4pt}}{30pt}{5pt}

\newcommand{\appendixtableofcontents}{
  \startcontents[appendix]
  \section*{Appendix Contents}
  \hrule height 0.4pt
  \vspace{6pt}

  \printcontents[appendix]{app}{1}[3]{
    \setlength{\parskip}{0pt}
    \linespread{1.25}\selectfont
  }

  \vspace{8pt}
  \hrule height 0.4pt
}

\usepackage{amsmath}
\usepackage{amssymb}
\usepackage{amsthm}
\usepackage{hyperref}
\usepackage{wrapfig}

\usepackage{graphicx}
\usepackage{pifont}

\providecommand{\xmark}{\ding{55}}
\usepackage{nicematrix}
\usepackage{tikz}
\usepackage[table]{xcolor}
\definecolor{implicitcolor}{HTML}{b11f23}
\definecolor{explicitcolor}{HTML}{5178a1}
\definecolor{standardcolor}{HTML}{148e6f}
\usepackage{array}
\usepackage{tabularx}
\usepackage{booktabs}
\usepackage{multirow}
\theoremstyle{plain}
\newtheorem{theorem}{Theorem}[section]

\newtheorem{lemma}[theorem]{Lemma}
\newtheorem{corollary}[theorem]{Corollary}

\theoremstyle{definition}

\newtheorem{assumption}[theorem]{Assumption}
\usepackage{enumitem}
\theoremstyle{remark}

\input{math_commands.tex}

\usepackage{url}
\usepackage{tcolorbox}

\title{Opinion Leader Dynamics: How Sparse Attention Shapes Token Clustering}

\author{
Jingkun Liu$^{1,2}$, Yue Song$^{1}$\thanks{Denotes corresponding author.} \\
$^{1}$Tsinghua University, $^{2}$Shanghai Jiao Tong University
}

\iclrfinalcopy 
\begin{document}

\maketitle
\fancyhead{}
\renewcommand{\headrulewidth}{0pt}

\begin{abstract}

Sparse attention reduces the quadratic cost of global self-attention while retaining strong empirical performance, but how its restricted interactions shape the evolution of token representations remains theoretically underexplored. Modeling tokens as particles on the unit sphere, we introduce \textbf{opinion leader dynamics}, a framework that identifies two mechanisms through which token groups converge internally while maintaining distinct limiting directions. In the explicit model, fixed representatives induce a potential that attracts tokens toward distinct local maxima. In the implicit model, disconnected interaction groups evolve toward separate consensus directions. We formulate both models as \emph{reverse Wasserstein gradient flows} and establish exponential convergence under suitable conditions. We further connect these theoretical predictions to token evolution in frontier sparse-attention LLMs that motivate our framework. Across four benchmarks, \textcolor{explicitcolor}{\textbf{Kimi-K3}},  \textcolor{implicitcolor}{\textbf{MiniMax-M3}}, and \textcolor{explicitcolor}{\textbf{DeepSeek-}}\textcolor{implicitcolor}{\textbf{V4-Flash}} consistently exhibit clearer cluster separation and higher clustering scores than the dense-attention model \textcolor{standardcolor}{\textbf{GLM-4.7-Flash}} in projected token representations. These observations support the relevance of the predicted multiple-group structure to trained frontier LLMs, while finite-particle simulations illustrate the theoretical convergence behavior. Together, our results connect restricted token interactions to distinct group-level attractors, providing a dynamical account of how sparse attention can support alignment within groups while preserving separation between them. Code is available at \url{https://github.com/Jingkun-Liu/Opinion-Leader-Dynamics.git}

\end{abstract}


\input{1_introduction.tex}

\input{2_opinion_leader_dynamics.tex}
\input{3_experiments.tex}

\input{4_conclusions.tex}

\bibliography{iclr2027_conference}
\bibliographystyle{iclr2027_conference}

\newpage

\appendix
\input{Appendix.tex}

\end{document}

%% file: math_commands.tex
\usepackage{amsmath,amsfonts,bm}

\def\eqref#1{equation~\ref{#1}}

\def\1{\bm{1}}

\DeclareMathAlphabet{\mathsfit}{\encodingdefault}{\sfdefault}{m}{sl}
\SetMathAlphabet{\mathsfit}{bold}{\encodingdefault}{\sfdefault}{bx}{n}



%% file: 1_introduction.tex
\begin{figure}[htbp]
    \centering
    \includegraphics[width=.9\linewidth]{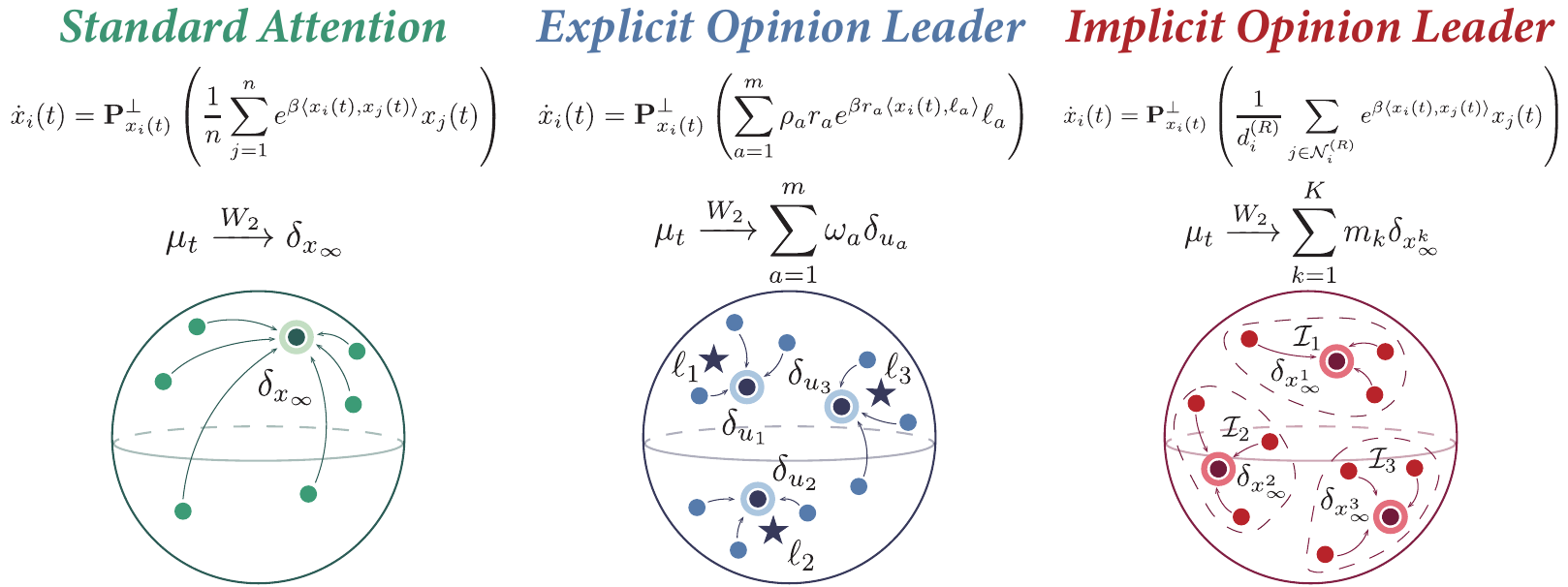}
    \caption{Comparison of three attention dynamics under the respective assumptions of our analysis. \textbf{\emph{Left:}} \textcolor{standardcolor}{\textbf{Standard Attention}} drives tokens toward a single consensus $x_\infty$. \textbf{\emph{Middle:}} \textcolor{explicitcolor}{\textbf{Explicit opinion leader dynamics}} guide token groups toward distinct local maxima $u_a$ of the potential induced by fixed representatives $z_a=r_a\ell_a$. \textbf{\emph{Right:}} \textcolor{implicitcolor}{\textbf{Implicit opinion leader dynamics}} align tokens within each disconnected component $\mathcal I_k$, yielding distinct consensus directions $x_\infty^k$.} 
    \label{fig:teaser}
\end{figure}

\section{Introduction}

Transformers~\citep{vaswani2017attention} underpin modern large language models (LLMs)~\citep{achiam2023gpt,team2026kimi,xu2026deepseek,lai2026minimax,zeng2026glm}, with self-attention governing how tokens exchange and integrate information. Global self-attention allows every token to interact with all other tokens, but its quadratic computational cost limits its efficiency on long sequences. Sparse attention addresses this limitation by restricting interactions or organizing them through compressed representations, often retaining strong empirical performance~\citep{tay2022efficient,tang2024quest,lai2025flexprefill}. These changes affect not only the cost of computation but also the pathways through which tokens influence one another. This raises a fundamental question: \emph{how do restricted interactions shape the clustering and long-term evolution of token representations?}

Recent theoretical work provides a starting point by modeling tokens as interacting particles on the unit sphere and studying their mean-field dynamics~\citep{geshkovski2023emergence,geshkovski2025mathematical,rigollet2026mean}. Within this framework, \citet{chen2025quantitative} establish exponential convergence to a Dirac mass at a single consensus direction under suitable conditions. This result characterizes a regime in which global interactions progressively align token representations. Sparse attention changes the underlying interaction structure, raising the possibility that distinct token groups can retain separate limiting directions. Characterizing the conditions and mechanisms supporting such behavior still remains theoretically underexplored.

\begin{figure}[t]
    \centering
    \includegraphics[width=.99\linewidth]{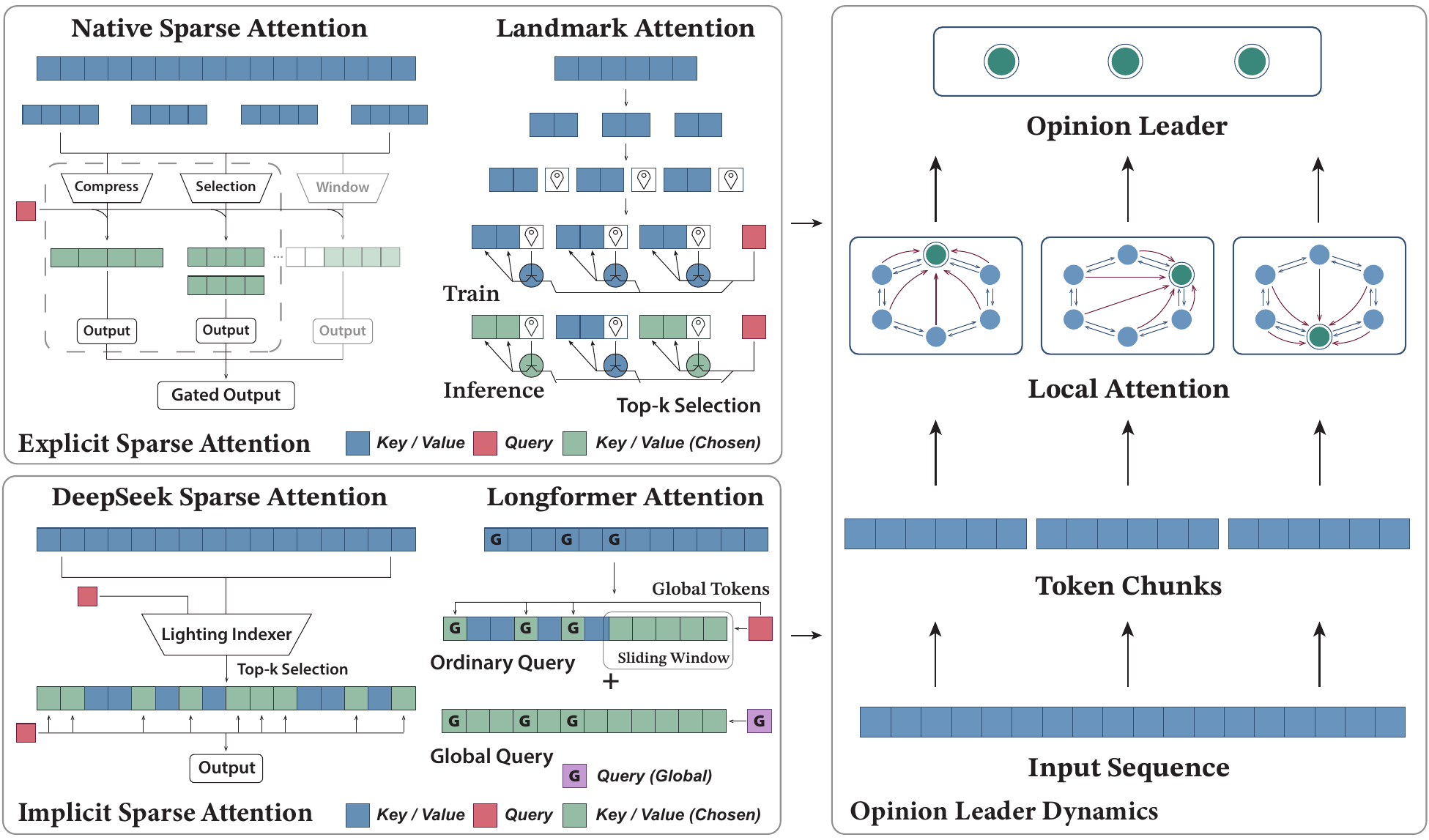}
    \caption{Sparse attention architectures motivating our \emph{opinion leader dynamics} framework. \textbf{\emph{Top left:}} Native Sparse Attention~\citep{yuan2025native} and Landmark Attention~\citep{mohtashami2023landmark} illustrate mechanisms that use compressed or designated representatives, motivating our \emph{explicit} model. \textbf{\emph{Bottom left:}} DeepSeek Sparse Attention~\citep{liu2025deepseek} and Longformer~\citep{beltagy2020longformer} illustrate mechanisms that restrict interactions through selective or structured attention patterns, motivating our \emph{implicit} model. \textbf{\emph{Right:}} A shared schematic of both mechanisms. In the explicit model, fixed representatives induce group-specific opinion leaders; in the implicit model, opinion leaders emerge through groupwise consensus under restricted interactions.
    }
    \label{fig:sparse_architectures}
    \vspace{-3mm}
\end{figure}

Existing sparse attention architectures motivate two complementary mechanisms. Methods based on compressed or landmark representations summarize token groups and use these summaries to aggregate information or guide retrieval~\citep{mohtashami2023landmark,yuan2025native,hu2025hardware}. Such representatives can provide common reference directions toward which tokens evolve. Other methods restrict communication through predefined patterns or content-dependent key selection~\citep{child2019generating,beltagy2020longformer,zaheer2020big,kitaev2020reformer,roy2021efficient,lu2025moba,liu2025deepseek}. These restrictions can favor interaction within token groups, motivating the study of consensus within groups that evolve independently. We isolate these mechanisms through fixed representatives and disconnected interaction components, respectively. These idealizations allow us to characterize two routes to distinct limiting directions and examine their relevance to the token dynamics of frontier sparse-attention LLMs.

\noindent\textbf{Our contributions.} We introduce \textbf{opinion leader dynamics}, comprising explicit dynamics driven by fixed representatives and implicit dynamics driven by restricted mutual interactions. Both model tokens as particles on the unit sphere, with an \emph{opinion leader} denoting a group's limiting direction (see Fig.~\ref{fig:teaser}). Fig.~\ref{fig:sparse_architectures} illustrates the motivating architectures and their shared schematic abstraction.


In the \emph{explicit model}, tokens interact with fixed representatives $\{z_a\}_{a=1}^{m}$, where $z_a=r_a\ell_a$ has magnitude $r_a$ and unit direction $\ell_a$. These representatives jointly induce a time-independent potential, whose attracting local maxima serve as explicit opinion leaders. The token dynamics are given by
\begin{equation}
\dot x_i(t)
=
\mathbf P_{x_i(t)}^{\perp}
\left(
    \sum_{a=1}^{m}
    \rho_a r_a
    e^{\beta r_a\langle x_i(t),\ell_a\rangle}
    \ell_a
\right),
\qquad
x_i(t)\in\mathbb S^{d-1}.
\label{eq:fixed_leader_ode}
\end{equation}
Here, $\rho_a$ is the weight of representative $a$, $\beta>0$ is the inverse temperature, and $\mathbf P_x^\perp(y)=y-\langle x,y\rangle x$ projects onto the tangent space $T_x\mathbb S^{d-1}$. Because all representatives contribute to the potential, the attracting directions need not coincide with the representative directions themselves.

In the \emph{implicit model}, tokens interact through a fixed sparse graph. Fixed interaction coordinates and a radius $R$ define a symmetric graph with $K\geq2$ nontrivial connected components $\{\mathcal I_k\}_{k=1}^{K}$. Within each component, tokens evolve according to
\begin{equation}
\dot{x}_i(t)
=
\mathbf P_{x_i(t)}^\perp
\left(
    \frac{1}{d_i^{(R)}}
    \sum_{j\in\mathcal N_i^{(R)}}
    e^{\beta\langle x_i(t),x_j(t)\rangle}
    x_j(t)
\right),
\qquad
i\in\mathcal I_k.
\label{eq:neighborhood_ode}
\end{equation}
Here, $\mathcal N_i^{(R)}\subseteq\mathcal I_k$ is the interaction neighborhood of token $i$, and $d_i^{(R)}=\lvert\mathcal N_i^{(R)}\rvert$. Each component therefore evolves independently, with its consensus direction determined by the coupled evolution of its tokens. These emergent consensus directions serve as implicit opinion leaders.


Our contributions combine convergence guarantees for both mechanisms with empirical evidence from numerical simulations and frontier sparse-attention LLMs:

\begin{enumerate}[leftmargin=*]
    \item \textbf{Explicit opinion leader dynamics: convergence under fixed representatives.} We formulate token evolution under fixed representatives as a \emph{reverse Wasserstein gradient flow}. We establish convergence of token trajectories to critical points of the induced potential for arbitrary initialization. Under geometric separation and dominance conditions, we further prove exponential convergence of each group to a distinct attracting local maximum.

    \item \textbf{Implicit opinion leader dynamics: consensus under restricted interactions.} We analyze token evolution on a fixed sparse graph. We establish a \emph{reverse Wasserstein gradient flow} formulation in a degree-weighted product space for each connected component. Under suitable graph and initialization conditions, the components converge exponentially to distinct consensus directions. The convergence-rate bound depends explicitly on the normalized Laplacian spectral gap, linking internal connectivity to the rate of token alignment.


    \item \textbf{Empirical validation of predicted clustering behavior in frontier LLMs.} We analyze token evolution in \textcolor{explicitcolor}{\textbf{Kimi-K3}}~\citep{team2026kimi}, \textcolor{implicitcolor}{\textbf{MiniMax-M3}}~\citep{lai2026minimax}, and \textcolor{explicitcolor}{\textbf{DeepSeek-}}\textcolor{implicitcolor}{\textbf{V4-Flash}}~\citep{xu2026deepseek}, motivating \textcolor{explicitcolor}{\textbf{Explicit}}, \textcolor{implicitcolor}{\textbf{Implicit}}, and \textcolor{explicitcolor}{\textbf{Hyb}}\textcolor{implicitcolor}{\textbf{rid}} interpretations, respectively. Across four benchmarks, all three exhibit clearer cluster separation and higher clustering scores in projected representations than the dense-attention model \textcolor{standardcolor}{\textbf{GLM-4.7-Flash}}~\citep{zeng2025glm}, consistent with the multiple-group structure predicted by our theory. Finite-particle simulations further illustrate the theoretical convergence behavior.

\end{enumerate}

%% file: 2_opinion_leader_dynamics.tex
\section{Opinion Leader Dynamics}
\label{sec:leader_induced_dynamics}




\subsection{Explicit Opinion Leader Dynamics}
\label{subsec:fixed_leader_dynamics}

The explicit model describes token evolution under a potential induced by fixed representatives. We first construct the representatives from initial token groups, then characterize the resulting dynamics for arbitrary initialization and under stronger conditions ensuring separated attractors.

Let $\{\mathcal I_a\}_{a=1}^{m}$ be a partition of $\{1,\ldots,n\}$ into nonempty groups, with initial token states $x_i(0)\in\mathbb S^{d-1}$. For each group, define its size, relative mass, and initial empirical distribution:
\begin{equation}
    n_a
    =
    |\mathcal I_a|,
    \qquad
    \omega_a
    =
    \frac{n_a}{n},
    \qquad
    \mu_0^a
    =
    \frac{1}{n_a}
    \sum_{i\in\mathcal I_a}
    \delta_{x_i(0)}.
    \label{eq:compressed_group_initial_measure}
\end{equation}
The initial token distribution is therefore
\begin{equation}
    \mu_0
    =
    \sum_{a=1}^{m}
    \omega_a\mu_0^a.
    \label{eq:compressed_group_decomposition}
\end{equation}
A compression map $\mathsf C_\theta$ assigns each group a nonzero representative:
\begin{equation}
    z_a
    =
    \mathsf C_\theta
    \left(
        (x_i(0))_{i\in\mathcal I_a}
    \right)
    \neq
    0,
    \qquad
    r_a
    =
    \lVert z_a\rVert_2,
    \qquad
    \ell_a
    =
    \frac{z_a}{r_a}
    \in
    \mathbb S^{d-1}.
    \label{eq:compressed_representative_construction}
\end{equation}
The representatives remain fixed throughout the subsequent evolution. More generally, we use the measure-valued formulation in Eq.~\ref{eq:compressed_group_decomposition}, allowing arbitrary $\mu_0^a\in\mathcal P(\mathbb S^{d-1})$, with $\omega_a>0$ and $\sum_{a=1}^{m}\omega_a=1$. Each initial group distribution is associated with a fixed nonzero representative $z_a$.

Define the representative measure
\begin{equation}
    \nu
    =
    \sum_{a=1}^{m}
    \rho_a\delta_{z_a},
    \qquad
    \rho_a>0,
    \qquad
    \sum_{a=1}^{m}\rho_a=1.
    \label{eq:compressed_representative_measure}
\end{equation}

The weights $\omega_a$ and $\rho_a$ play different roles: $\omega_a$ specifies the mass of token group $a$, whereas $\rho_a$ controls the contribution of its representative to the interaction field. The representatives are computed at initialization and remain fixed throughout the dynamics. For $\beta>0$, the time-independent measure $\nu$ induces the potential $\Phi_\nu$ and its spherical gradient field $\mathcal X_{\nu,\beta}$:

\begin{equation}
    \begin{aligned}
        \Phi_\nu(x)
        &=
        \frac{1}{\beta}
        \int_{\mathbb R^d}
        e^{\beta\langle x,z\rangle}
        d\nu(z)
        =
        \frac{1}{\beta}
        \sum_{a=1}^{m}
        \rho_a
        e^{\beta r_a\langle x,\ell_a\rangle},
        \\
        \mathcal X_{\nu,\beta}(x)
        &=
        \nabla^\circ\Phi_\nu(x)
        =
        \sum_{a=1}^{m}
        \rho_a r_a
        e^{\beta r_a\langle x,\ell_a\rangle}
        \mathbf P_x^\perp(\ell_a).
    \end{aligned}
    \label{eq:fixed_leader_potential_and_field}
\end{equation}
Here, $\nabla^\circ$ denotes the spherical gradient. All tokens evolve under this shared field, so the attracting directions are determined jointly by all representatives. The smooth field $\mathcal X_{\nu,\beta}$ generates a global flow $\phi_t$ on $\mathbb S^{d-1}$. Define the evolving group distributions and their mixture by
\begin{equation}
    \mu_t^a
    =
    (\phi_t)_\#\mu_0^a,
    \qquad
    \mu_t
    =
    \sum_{a=1}^{m}
    \omega_a\mu_t^a
    =
    (\phi_t)_\#\mu_0.
    \label{eq:explicit_groupwise_evolution}
\end{equation}
For both empirical and general initial distributions, $\mu_t$ satisfies the continuity equation
\begin{equation}
    \partial_t\mu_t
    +
    \operatorname{div}^\circ
    \left(
        \mu_t\mathcal X_{\nu,\beta}
    \right)
    =
    0.
    \label{eq:fixed_leader_transport}
\end{equation}
where $\operatorname{div}^\circ$ denotes the spherical divergence. Consider the energy
\begin{equation}
    \mathbf E_\beta[\mu|\nu]
    =
    \int_{\mathbb S^{d-1}}
    \Phi_\nu(x)
    \,d\mu(x).
    \label{eq:fixed_leader_energy}
\end{equation}
Along Eq.~\ref{eq:fixed_leader_transport}, it satisfies $\frac{d}{dt}\mathbf E_\beta[\mu_t|\nu]=\int_{\mathbb S^{d-1}}\lVert\mathcal X_{\nu,\beta}(x)\rVert_2^2\,d\mu_t(x)\geq0$. Thus, individual tokens follow spherical gradient ascent on $\Phi_\nu$, while their distribution evolves as the \emph{reverse Wasserstein gradient flow} of $\mathbf E_\beta[\cdot\mid\nu]$. Appendix~\ref{app:explicit_gradient_flow} provides the derivation.

We first establish asymptotic convergence without imposing geometric conditions on the initial groups. Let $\operatorname{Crit}(\Phi_\nu)$ denote the critical set of $\Phi_\nu$, and let $W_2$ be the Wasserstein distance induced by the spherical geodesic distance $d_{\mathbb S}$. Additional notation is summarized in Table~\ref{tab:notation_explicit} of Appendix.

\begin{corollary}[\textbf{Global asymptotic convergence}]
\label{cor:explicit_global_convergence}
For every $x\in\mathbb S^{d-1}$, the limit
$\phi_\infty(x)=\lim_{t\to\infty}\phi_t(x)$ exists and belongs to $\operatorname{Crit}(\Phi_\nu)$. Define
$\mu_\infty^a=(\phi_\infty)_\#\mu_0^a$ for each $a$, and
$\mu_\infty=\sum_{a=1}^{m}\omega_a\mu_\infty^a
=(\phi_\infty)_\#\mu_0$.
Then, as $t\to\infty$,
\begin{equation}
    W_2(\mu_t^a,\mu_\infty^a)
    \longrightarrow 0
    \quad\text{for every }a,
    \qquad
    W_2(\mu_t,\mu_\infty)
    \longrightarrow 0.
    \label{eq:explicit_global_groupwise_convergence}
\end{equation}
Moreover,
\begin{equation}
    \operatorname{supp}(\mu_\infty^a)
    \subseteq
    \operatorname{Crit}(\Phi_\nu)
    \quad\text{for every }a,
    \qquad
    \operatorname{supp}(\mu_\infty)
    \subseteq
    \operatorname{Crit}(\Phi_\nu).
\end{equation}
\end{corollary}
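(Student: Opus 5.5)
The argument rests on one fact: $\Phi_\nu$ is real-analytic on the compact analytic manifold $\mathbb S^{d-1}$, and each trajectory is a spherical gradient ascent curve of it. So we can apply the Łojasiewicz gradient inequality to get pointwise convergence, and then pass to the measures by dominated convergence.

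\emph{Step 1: pointwise convergence.} Fix $x\in\mathbb S^{d-1}$ and set $\gamma(t)=\phi_t(x)$.
\begin{enumerate}
    \item Along the curve, $\frac{d}{dt}\Phi_\nu(\gamma(t))=\lVert\mathcal X_{\nu,\beta}(\gamma(t))\rVert_2^2$.
    \item $\Phi_\nu$ is bounded on the sphere, so $\Phi_\nu(\gamma(t))$ increases to a limit $\Phi^\ast$, and $\int_0^\infty\lVert\dot\gamma\rVert^2\,dt<\infty$.
    \item The $\omega$-limit set $\omega(x)$ is nonempty, compact and connected, because the sphere is compact.
    \item By LaSalle's principle, $\omega(x)\subseteq\operatorname{Crit}(\Phi_\nu)\cap\{\Phi_\nu=\Phi^\ast\}$.
\end{enumerate}
Finite energy alone does not force the curve to converge. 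This is the main obstacle, and Łojasiewicz resolves it. $\Phi_\nu$ is a finite sum of exponentials of linear functions, hence analytic, and its restriction to the sphere is analytic. Near any critical point $p\in\omega(x)$ there are constants $C>0$, $\theta\in(0,1/2]$ and a neighbourhood $U$ with
\[
|\Phi_\nu(y)-\Phi_\nu(p)|^{1-\theta}\le C\lVert\nabla^\circ\Phi_\nu(y)\rVert \quad\text{for } y\in U.
\]
Compute $\frac{d}{dt}(\Phi^\ast-\Phi_\nu(\gamma))^{\theta}$. The inequality bounds it by $-c\lVert\dot\gamma\rVert$ whenever $\gamma$ lies in $U$. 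Hence, once $\gamma$ enters a sufficiently small ball around $p$ with $\Phi^\ast-\Phi_\nu(\gamma)$ small, its remaining arclength is at most $C'(\Phi^\ast-\Phi_\nu(\gamma(t_0)))^\theta/\theta$. So the curve cannot leave $U$, it has finite length, and it converges. The limit is $p$, which lies in $\operatorname{Crit}(\Phi_\nu)$. This defines $\phi_\infty(x)$. One could instead cite the Łojasiewicz convergence theorem for analytic gradient flows on compact manifolds, and I would present it that way, sketching only the above.

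\emph{Step 2: measurability.} $\phi_\infty$ is a pointwise limit of the continuous maps $\phi_t$, taken along $t=k\in\mathbb N$. It is therefore Borel, so the pushforwards $\mu_\infty^a$ and $\mu_\infty$ are well defined. Linearity of pushforward gives $\mu_\infty=\sum_a\omega_a\mu_\infty^a=(\phi_\infty)_\#\mu_0$.

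\emph{Step 3: Wasserstein convergence.} Use the coupling $(\phi_t,\phi_\infty)_\#\mu_0^a$:
\[
W_2^2(\mu_t^a,\mu_\infty^a)\le\int d_{\mathbb S}(\phi_t(x),\phi_\infty(x))^2\,d\mu_0^a(x).
\]
The integrand tends to $0$ pointwise and is bounded by $\pi^2$, so dominated convergence gives $W_2(\mu_t^a,\mu_\infty^a)\to0$. The same argument with $\mu_0$ handles $\mu_t$; alternatively, use joint convexity of $W_2^2$. No uniformity in $x$ is needed here, which is why the corollary is only asymptotic.

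\emph{Step 4: supports.} $\operatorname{Crit}(\Phi_\nu)$ is closed, being the zero set of the continuous field $\mathcal X_{\nu,\beta}$. Also $\mu_\infty^a(\operatorname{Crit})=\mu_0^a(\phi_\infty^{-1}(\operatorname{Crit}))=1$. The support is the smallest closed set of full measure, so $\operatorname{supp}(\mu_\infty^a)\subseteq\operatorname{Crit}(\Phi_\nu)$. The same holds for $\mu_\infty$.
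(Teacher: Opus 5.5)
Your proposal is correct and follows essentially the same route as the paper. The shared steps are:
\begin{itemize}
\item monotonicity of $\Phi_\nu$ along each trajectory, together with the \L{}ojasiewicz inequality for the real-analytic potential, which gives pointwise convergence to a critical point (the paper simply cites a \L{}ojasiewicz-type convergence theorem where you sketch the finite-arclength argument);
\item Borel measurability of $\phi_\infty$ as a pointwise limit of continuous maps;
\item the coupling $(\phi_t,\phi_\infty)_\#\mu_0^a$ combined with dominated convergence;
\item containment of the supports in $\operatorname{Crit}(\Phi_\nu)$, where your remark that this set is closed makes explicit a point the paper leaves implicit.
\end{itemize}
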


Corollary~\ref{cor:explicit_global_convergence} guarantees convergence of every token trajectory, while allowing each group to reach multiple critical points and tokens from different groups to share the same limit. The proof uses the analyticity of $\Phi_\nu$ and is given in Appendix~\ref{app:explicit_global_convergence}.

To obtain one attracting direction per group and an explicit exponential rate, we next impose concentration, separation, and representative-dominance conditions. Let
$M_0^a=\int_{\mathbb S^{d-1}}x\,d\mu_0^a(x)$
be the initial mean of group $a$. We write
$S_\theta^+(v)=\{x\in\mathbb S^{d-1}:
\langle x,v\rangle\geq\cos\theta\}$
for the closed spherical cap of radius $\theta$ centered at $v$.

\begin{assumption}[\textbf{Geometric separation and dominance}]
\label{ass:explicit_separated_basins}
Assume $M_0^a\neq0$ for every $a$, and define
$c_a=M_0^a/\lVert M_0^a\rVert_2$.
Suppose that $m\geq2$ and that there exist
$\delta,\Delta,\eta>0$ and $\alpha\in(0,\pi/2)$ such that
\begin{equation}
\begin{aligned}
    \operatorname{supp}(\mu_0^a)
    &\subseteq S_\delta^+(c_a),
    \qquad
    d_{\mathbb S}(\ell_a,c_a)\leq\eta,
    \qquad&&
    a=1,\ldots,m,
    \\
    d_{\mathbb S}(c_a,c_b)
    &\geq\Delta,
    &&
    1\leq a<b\leq m,
\end{aligned}
\end{equation}
with
\begin{equation}
    \delta+\eta<\alpha,
    \qquad
    2(\alpha+\eta)<\Delta.
    \label{eq:explicit_cap_separation}
\end{equation}
Assume further that
\begin{equation}
    \mathbf I_a(\alpha)>0,
    \qquad
    \mathbf H_a(\alpha)>0,
    \qquad
    a=1,\ldots,m,
    \label{eq:explicit_basin_dominance}
\end{equation}
where $\mathbf I_a(\alpha)$ and $\mathbf H_a(\alpha)$ are defined in Table~\ref{tab:notation_explicit}.
\end{assumption}

The geometric conditions place each initial group inside a cap around its representative and keep these caps separated. The dominance conditions ensure that the field points inward at each cap boundary and that the potential is uniformly strictly concave within the cap. Together, they yield a unique attracting direction for each group.

\begin{theorem}[\textbf{Exponential convergence to separated attractors}]
\label{thm:explicit_separated_attractors}
Under Assumption~\ref{ass:explicit_separated_basins}, the caps
$S_\alpha^+(\ell_a)$ are pairwise disjoint and forward invariant. Each cap contains a unique critical point $u_a$ of $\Phi_\nu$, which is an interior nondegenerate attracting local maximizer. If $B_a$ denotes its attraction basin, then
\begin{equation}
    \operatorname{supp}(\mu_0^a)
    \subseteq
    S_{\delta+\eta}^+(\ell_a)
    \Subset
    \operatorname{int}S_\alpha^+(\ell_a)
    \subseteq
    B_a.
    \label{eq:explicit_support_in_basins}
\end{equation}

The attractor locations satisfy
\begin{equation}
    d_{\mathbb S}(u_a,\ell_a)
    \leq\alpha,
    \qquad
    d_{\mathbb S}(u_a,c_a)
    \leq\alpha+\eta.
    \label{eq:explicit_attractor_location}
\end{equation}
For $a\neq b$, their separation is bounded below by
\begin{align}
    d_{\mathbb S}(u_a,u_b)
    &\geq
    d_{\mathbb S}(c_a,c_b)-2(\alpha+\eta)
    \geq
    \Delta-2\eta-2\alpha
    >0.
    \label{eq:explicit_maximizer_separation}
\end{align}

For each $a$, define
$D_{0,a}^2=\int_{\mathbb S^{d-1}}
d_{\mathbb S}(x,u_a)^2\,d\mu_0^a(x)$.
Then
\begin{equation}
    \operatorname{supp}(\mu_t^a)
    \subseteq S_\alpha^+(\ell_a),
    \qquad
    W_2(\mu_t^a,\delta_{u_a})
    \leq
    D_{0,a}e^{-\mathbf H_a(\alpha)t},
    \qquad
    t\geq0.
    \label{eq:explicit_groupwise_rate}
\end{equation}

Define
\begin{equation}
    \mathbf H_{\min}(\alpha)
    =
    \min_{1\leq a\leq m}\mathbf H_a(\alpha),
    \qquad
    \mu_\infty
    =
    \sum_{a=1}^{m}\omega_a\delta_{u_a},
    \qquad
    D_0^2
    =
    \sum_{a=1}^{m}\omega_aD_{0,a}^2.
    \label{eq:explicit_limit_and_initial_distance}
\end{equation}
The limiting distribution has exactly $m$ support points:
$\operatorname{supp}(\mu_\infty)=\{u_1,\ldots,u_m\}$.
Moreover,
\begin{equation}
    W_2(\mu_t,\mu_\infty)
    \leq
    D_0e^{-\mathbf H_{\min}(\alpha)t}
    \leq
    (\alpha+\delta+\eta)
    e^{-\mathbf H_{\min}(\alpha)t},
    \qquad
    t\geq0.
    \label{eq:explicit_quantitative_rate}
\end{equation}
\end{theorem}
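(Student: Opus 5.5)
The proof proceeds cap by cap, using the two dominance quantities in Table~\ref{tab:notation_explicit} as they are meant to be read. I take $\mathbf I_a(\alpha)>0$ to give an inward-pointing condition on the boundary circle $\partial S_\alpha^+(\ell_a)$, namely $\langle \mathcal X_{\nu,\beta}(x),\ell_a\rangle\geq \mathbf I_a(\alpha)>0$ whenever $\langle x,\ell_a\rangle=\cos\alpha$. I take $\mathbf H_a(\alpha)>0$ to be a uniform lower bound on $-\nabla^{\circ 2}\Phi_\nu$, the negative spherical Hessian (or geodesic concavity modulus), on $S_\alpha^+(\ell_a)$.

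\textbf{Step 1: geometry.} Disjointness of the caps follows from the triangle inequality. For $x\in S_\alpha^+(\ell_a)$ we have $d_{\mathbb S}(x,c_a)\leq\alpha+\eta$, and $2(\alpha+\eta)<\Delta\leq d_{\mathbb S}(c_a,c_b)$. Likewise, $\operatorname{supp}(\mu_0^a)\subseteq S_\delta^+(c_a)\subseteq S_{\delta+\eta}^+(\ell_a)$, and $\delta+\eta<\alpha$ places this set compactly inside $\operatorname{int}S_\alpha^+(\ell_a)$.

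\textbf{Step 2: forward invariance.} Let $f(t)=\langle\phi_t(x),\ell_a\rangle$. At any time where $f=\cos\alpha$, we get $\dot f=\langle\mathcal X_{\nu,\beta},\ell_a\rangle>0$. A standard first-exit-time argument then shows the trajectory cannot leave the cap.

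\textbf{Step 3: the critical point $u_a$.} The cap is compact and forward invariant, and $\Phi_\nu$ is strictly geodesically concave there. Note that $\alpha<\pi/2$, so geodesics between points of the cap stay in the cap. Hence $\Phi_\nu$ attains its maximum on the cap, and because the field points strictly inward on the boundary, the maximizer is interior and is therefore a critical point $u_a$. Strict concavity gives uniqueness of critical points in the cap, and the Hessian bound gives nondegeneracy. The estimate $d_{\mathbb S}(u_a,\ell_a)\le\alpha$ is immediate, and $d_{\mathbb S}(u_a,c_a)\le\alpha+\eta$ and the separation bound \eqref{eq:explicit_maximizer_separation} follow by the triangle inequality. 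By Corollary~\ref{cor:explicit_global_convergence}, every trajectory starting in the cap converges to a critical point, which must then be $u_a$. So $\operatorname{int}S_\alpha^+(\ell_a)\subseteq B_a$.

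\textbf{Step 4: the rate.} This is the main obstacle. Set $D(t)=\tfrac12 d_{\mathbb S}(x(t),u_a)^2$ along a trajectory in the cap. Since the cap lies in an open hemisphere, $d_{\mathbb S}(\cdot,u_a)<\pi$ there, so $D$ is smooth and $\nabla^\circ D(x)=-\log_x u_a$. Then
\[
\dot D=-\langle \mathcal X_{\nu,\beta}(x),\log_x u_a\rangle .
\]
Let $\gamma$ be the geodesic from $x$ to $u_a$, and let $g(s)=\langle\nabla^\circ\Phi_\nu(\gamma(s)),\dot\gamma(s)\rangle$. We have $g(1)=0$ because $u_a$ is critical, and $g'(s)\leq-\mathbf H_a(\alpha)\,d_{\mathbb S}(x,u_a)^2$. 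Integrating gives $g(0)\geq \mathbf H_a(\alpha)\,d_{\mathbb S}(x,u_a)^2$, hence $\dot D\le -2\mathbf H_a(\alpha) D$, i.e.\ $d_{\mathbb S}(\phi_t(x),u_a)\le e^{-\mathbf H_a(\alpha) t}d_{\mathbb S}(x,u_a)$. Coupling $\mu_0^a$ with $\mu_t^a$ through $\phi_t$ yields
\[
W_2(\mu_t^a,\delta_{u_a})^2=\int d_{\mathbb S}(\phi_t(x),u_a)^2\,d\mu_0^a\le D_{0,a}^2e^{-2\mathbf H_a(\alpha) t}.
\]
The care needed here is in confirming that the Table's definition of $\mathbf H_a$ matches this geodesic Hessian bound, including the curvature term $-\langle\nabla\Phi,x\rangle$ in the spherical Hessian.

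\textbf{Step 5: the mixture.} The $u_a$ are distinct, so $\operatorname{supp}(\mu_\infty)=\{u_1,\dots,u_m\}$. Using the coupling $\sum_a\omega_a(\text{coupling of }\mu_t^a\text{ with }\delta_{u_a})$, we get $W_2(\mu_t,\mu_\infty)^2\le\sum_a\omega_a D_{0,a}^2e^{-2\mathbf H_a t}\le D_0^2e^{-2\mathbf H_{\min}t}$. Finally, for $x\in\operatorname{supp}(\mu_0^a)$, $d_{\mathbb S}(x,u_a)\le d_{\mathbb S}(x,c_a)+d_{\mathbb S}(c_a,u_a)\le\delta+\alpha+\eta$, so $D_0\le\alpha+\delta+\eta$.
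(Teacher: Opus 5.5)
Your architecture matches the paper's proof step for step. It uses triangle-inequality geometry, strict inward flux on $\partial S_\alpha^+(\ell_a)$ for forward invariance, and uniform geodesic concavity on the geodesically convex cap to get a unique interior nondegenerate maximizer $u_a$. It then derives the coercivity bound $\langle\mathcal X_{\nu,\beta}(x),\operatorname{Log}_x u_a\rangle\geq\mathbf H_a(\alpha)\,d_{\mathbb S}(x,u_a)^2$ by integrating the Hessian along the geodesic, and finishes with Gr\"onwall and the groupwise coupling. Given its inputs, each step is sound.

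The gap is that those inputs are assumed rather than proved. Assumption~\ref{ass:explicit_separated_basins} only says that the explicit expressions $\mathbf I_a(\alpha)$ and $\mathbf H_a(\alpha)$ of Table~\ref{tab:notation_explicit} are positive. You instead \emph{declare} that they mean $\langle\mathcal X_{\nu,\beta}(x),\ell_a\rangle\geq\mathbf I_a(\alpha)$ on the boundary and $\nabla_{\mathbb S}^2\Phi_\nu\leq-\mathbf H_a(\alpha)$ on the cap. You also concede at the end of Step 4 that the second was never checked. These two inequalities are the real content of the theorem; everything else is standard. As written, you prove a statement with different hypotheses.

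Both are short computations once you have the cross-cap bound. Your Step 1 does not supply it, because you proved disjointness through the $c_a$. From $\theta_{ab}\geq d_{\mathbb S}(c_a,c_b)-2\eta\geq\Delta-2\eta>2\alpha$, every $x\in S_\alpha^+(\ell_a)$ has $d_{\mathbb S}(x,\ell_b)\geq\theta_{ab}-\alpha\in(0,\pi)$. Hence $\langle x,\ell_b\rangle\leq\gamma_{ab}(\alpha)$ for $b\neq a$.

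\emph{Flux.} On $\langle x,\ell_a\rangle=\cos\alpha$,
\[
\langle\mathcal X_{\nu,\beta}(x),\ell_a\rangle=\sum_{b=1}^{m}\rho_b r_b e^{\beta r_b\langle x,\ell_b\rangle}\bigl(\langle\ell_a,\ell_b\rangle-\langle x,\ell_b\rangle\cos\alpha\bigr).
\]
The $b=a$ term is $\rho_a r_a e^{\beta r_a\cos\alpha}\sin^2\alpha$. For $b\neq a$ the factor is at least $\cos\theta_{ab}-\cos\alpha\cos(\theta_{ab}-\alpha)=-\sin\alpha\sin(\theta_{ab}-\alpha)$. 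Raise the exponential to $e^{\beta r_b\gamma_{ab}(\alpha)}$ only when that factor is negative.

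\emph{Hessian.} Write $\Phi_b=\frac{\rho_b}{\beta}e^{\beta r_b\langle\cdot,\ell_b\rangle}$. For unit $v\in T_x\mathbb S^{d-1}$,
\[
\nabla_{\mathbb S}^2\Phi_b(x)[v,v]=\rho_b r_b e^{\beta r_b\langle x,\ell_b\rangle}\bigl(\beta r_b\langle v,\ell_b\rangle^2-\langle x,\ell_b\rangle\bigr),
\]
where $-\langle x,\ell_b\rangle$ is the curvature term you mention. For $b=a$, use $\langle x,\ell_a\rangle\geq\cos\alpha$ and $\langle v,\ell_a\rangle^2\leq\lVert\mathbf P_x^\perp\ell_a\rVert_2^2\leq\sin^2\alpha$. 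Lowering the exponential to $e^{\beta r_a\cos\alpha}$ then requires $\cos\alpha-\beta r_a\sin^2\alpha>0$, which you must extract from $\mathbf H_a(\alpha)>0$. For $b\neq a$, bound the parenthesis by $1+\beta r_b$ and, when it is positive, the exponential by $e^{\beta r_b\gamma_{ab}(\alpha)}$.

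Summing yields $\langle\mathcal X_{\nu,\beta}(x),\ell_a\rangle\geq\mathbf I_a(\alpha)$ on the boundary and $\nabla_{\mathbb S}^2\Phi_\nu(x)[v,v]\leq-\mathbf H_a(\alpha)$ on the cap. With these two lemmas inserted, your argument is complete and coincides with the paper's. Your use of Corollary~\ref{cor:explicit_global_convergence} to get $\operatorname{int}S_\alpha^+(\ell_a)\subseteq B_a$ is a harmless alternative to reading it off the contraction estimate.
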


Theorem~\ref{thm:explicit_separated_attractors} shows that each group contracts toward its own \emph{explicit opinion leader} $u_a$ while remaining separated from the other groups. The bound $\mathbf H_a(\alpha)$ quantifies the local curvature driving this contraction, and the slowest group determines the guaranteed rate for the full distribution. Each leader is a local maximizer of the potential generated by all representatives, rather than necessarily the representative direction $\ell_a$ itself. The proof is given in Appendix~\ref{app:explicit_separated_attractors_proof}.

\subsection{Implicit Opinion Leader Dynamics}
\label{subsec:neighborhood_dynamics}

The implicit model describes alignment through mutual token interactions on a fixed sparse graph. We first identify the connected components, then establish consensus within each component under suitable initial geometry. The limiting directions emerge from the coupled dynamics of the tokens.

Let $(\mathcal Z,d_{\mathcal Z})$ be a metric space and associate each token $i$ with a fixed interaction coordinate $p_i\in\mathcal Z$. These coordinates determine which tokens interact, while the token states evolve on $\mathbb S^{d-1}$. For an interaction radius $R>0$, define the neighborhoods and adjacency entries by
\begin{equation}
    \mathcal N_i^{(R)}
    =
    \left\{
        j\in\{1,\ldots,n\}:
        d_{\mathcal Z}(p_i,p_j)\leq R
    \right\},
    \qquad
    A_{ij}^{(R)}
    =
    \mathbf 1
    \left\{
        j\in\mathcal N_i^{(R)}
    \right\},
\label{eq:neighborhood_radius_graph}
\end{equation}
Set $d_i^{(R)}=|\mathcal N_i^{(R)}|$. Symmetry of the metric and the identity $d_{\mathcal Z}(p_i,p_i)=0$ ensure that $A^{(R)}$ defines an undirected graph $G_R$ with a self-loop at every node.

We write $i\sim_R j$ if tokens $i$ and $j$ are connected by a path in $G_R$. Equivalently, there exist an integer $s\geq0$ and a sequence $i_0,\ldots,i_s$ such that
\begin{equation}
    i_0=i,
    \qquad
    i_s=j,
    \qquad
    d_{\mathcal Z}(p_{i_{r-1}},p_{i_r})\leq R,
    \quad
    r=1,\ldots,s.
\label{eq:neighborhood_reachability}
\end{equation}
We call this relation \emph{$R$-reachability}. To obtain multiple interaction groups, each containing at least two tokens, we impose the following condition on $R$.

\begin{assumption}[\textbf{Separated interaction scales}]
\label{ass:neighborhood_mask}
Let $\mathbb T_n$ denote the set of spanning trees on $\{1,\ldots,n\}$, and let $\mathcal E(T)$ be the edge set of $T\in\mathbb T_n$. Define
\begin{equation}
    \begin{aligned}
        R_{\mathrm{loc}}
        =
        \max_{i\in\{1,\ldots,n\}}
        \min_{j\in\{1,\ldots,n\}\setminus\{i\}}
        d_{\mathcal Z}(p_i,p_j),
        \quad
        R_{\mathrm{conn}}
        =
        \min_{T\in\mathbb T_n}
        \max_{\{i,j\}\in\mathcal E(T)}
        d_{\mathcal Z}(p_i,p_j).
    \end{aligned}
    \label{eq:neighborhood_interaction_scales}
\end{equation}
The interaction radius satisfies
$R_{\mathrm{loc}}\leq R<R_{\mathrm{conn}}$.
\end{assumption}
The lower bound $R_{\mathrm{loc}}$ ensures that every token has a neighbor other than itself. The upper bound $R_{\mathrm{conn}}$ keeps $R$ below the smallest radius at which $G_R$ becomes connected.

For the reachability relation $\sim_R$, the equivalence classes $\{\mathcal I_k\}_{k=1}^{K}$ are precisely the connected components of $G_R$, giving the partition
\begin{equation}
    \{1,\ldots,n\}
    =
    \bigsqcup_{k=1}^{K}\mathcal I_k,
    \qquad
    n_k=|\mathcal I_k|,
    \qquad
    \sum_{k=1}^{K}n_k=n.
\label{eq:neighborhood_coordinate_partition}
\end{equation}

Assumption~\ref{ass:neighborhood_mask} guarantees $K\geq2$ and $n_k\geq2$ for every $k$. Since $\mathcal N_i^{(R)}\subseteq\mathcal I_k$ for $i\in\mathcal I_k$, Eq.~\ref{eq:neighborhood_ode} decomposes into $K$ independent particle systems on the connected subgraphs $G_R[\mathcal I_k]$. These subgraphs need not be complete: the analysis retains the sparse interactions within each component. Appendix~\ref{app:neighborhood_component_decomposition} provides the full proof.

For each component, let $\mathbf A_k$ and $\mathbf D_k$ denote its adjacency and degree matrices. The associated random walk has transition probabilities $\mathsf P_{ij}^k=A_{ij}^{(R)}/d_i^{(R)}$ and stationary weights $\pi_i^k=d_i^{(R)}/\operatorname{vol}_k$, where $\operatorname{vol}_k=\sum_{i\in\mathcal I_k}d_i^{(R)}$. Symmetry of $\mathbf A_k$ gives the detailed-balance identity $\pi_i^k\mathsf P_{ij}^k=\pi_j^k\mathsf P_{ji}^k$. Connectivity ensures that the normalized Laplacian gap $\lambda_k=\lambda_2(\mathbf I_{n_k}-\mathbf D_k^{-1/2}\mathbf A_k\mathbf D_k^{-1/2})$ is positive, where $\lambda_2$ denotes the second-smallest eigenvalue. Table~\ref{tab:notation_implicit} summarizes the notation.

To express the dynamics in a Wasserstein framework while preserving the graph structure, we associate each node $i\in\mathcal I_k$ with a measure $\nu_{i,t}^k\in\mathcal P_2(\mathbb S^{d-1})$ and write $\boldsymbol\nu_t^k=(\nu_{i,t}^k)_{i\in\mathcal I_k}$. The velocity field and continuity equation are
\begin{equation}
    \mathcal X_{i,\boldsymbol\nu^k,\beta}(x)
    =
    \sum_{j\in\mathcal I_k}
    \mathsf P_{ij}^k
    \int_{\mathbb S^{d-1}}
    \mathbf P_x^\perp(y)
    e^{\beta\langle x,y\rangle}
    \,d\nu_j^k(y),
    \
    \partial_t\nu_{i,t}^k
    +
    \operatorname{div}^{\circ}
    \left(
        \nu_{i,t}^k
        \mathcal X_{i,\boldsymbol\nu_t^k,\beta}
    \right)
    =
    0,
    \
    i\in\mathcal I_k.
\label{eq:neighborhood_product_continuity_equation}
\end{equation}
Choosing Dirac initial measures yields
$\nu_{i,t}^k=\delta_{x_i(t)}$ and recovers the particle dynamics in Eq.~\ref{eq:neighborhood_ode}.

The stationary weights induce the degree-weighted product Wasserstein metric
\begin{equation}
    \mathcal W_{2,\pi^k}^{2}
    \left(
        \boldsymbol\nu^k,
        \boldsymbol\eta^k
    \right)
    =
    \sum_{i\in\mathcal I_k}
    \pi_i^k
    W_2^2
    \left(
        \nu_i^k,
        \eta_i^k
    \right).
\label{eq:neighborhood_product_wasserstein}
\end{equation}
Smoothness of the interaction kernel and compactness of the sphere ensure a unique global characteristic solution for every initial tuple. This solution is locally absolutely continuous in $\mathcal W_{2,\pi^k}$, as established in Lemma~\ref{lem:neighborhood_wellposedness}. Detailed balance yields a graph interaction energy whose \emph{reverse Wasserstein gradient flow} is Eq.~\ref{eq:neighborhood_product_continuity_equation} in this weighted product space. Appendix~\ref{app:neighborhood_wasserstein_structure} gives the energy and its derivation. We use two aggregate measures to relate this formulation to token distributions:
\begin{equation}
    \mu_t^k
    =
    \frac{1}{n_k}
    \sum_{i\in\mathcal I_k}
    \nu_{i,t}^k,
    \qquad
    \widehat{\mu}_t^k
    =
    \sum_{i\in\mathcal I_k}
    \pi_i^k\nu_{i,t}^k,
    \qquad
    m_k
    =
    \frac{n_k}{n},
    \qquad
    \mu_t
    =
    \sum_{k=1}^{K}
    m_k\mu_t^k.
\label{eq:neighborhood_aggregate_measures}
\end{equation}
The uniform aggregate $\mu_t^k$ assigns equal weight to each token, while $\widehat{\mu}_t^k$ uses the degree weights natural to the graph dynamics. We next impose concentration within components and separation between them. Let
$\Omega_k=\bigcup_{i\in\mathcal I_k}\operatorname{supp}(\nu_{i,0}^k)$
denote the union of initial supports in component $k$.

\begin{assumption}[\textbf{Concentrated and separated initial states}]
\label{ass:neighborhood_initial_states}
There exist $\alpha\in[0,\pi/4)$ and directions
$u_1,\ldots,u_K\in\mathbb S^{d-1}$ such that
\begin{equation}
    \Omega_k
    \subseteq
    S_\alpha^+(u_k),
    \quad
    k=1,\ldots,K,
    \qquad
    d_{\mathbb S}(u_k,u_\ell)>2\alpha,
    \quad
    1\leq k<\ell\leq K.
\end{equation}
The directions $u_k$ specify the centers of the initial caps; they are not prescribed consensus directions.
\end{assumption}
This assumption places each component inside a spherical cap, with a positive gap between caps. The proof shows that these caps remain invariant and that disagreement within each component decays exponentially. The convergence bounds use the initial degree-weighted disagreement $V_0^k$ and the degree-imbalance factor $\chi_k$, which accounts for the difference between uniform and degree-weighted aggregates. Their definitions, together with the constants $\kappa_k$, $C_k$, $\kappa_{\min}$, and $C_G$ used below, are collected in Table~\ref{tab:notation_implicit}.

\begin{theorem}[\textbf{Componentwise exponential consensus}]
\label{thm:neighborhood_multicluster}
Under Assumptions~\ref{ass:neighborhood_mask} and~\ref{ass:neighborhood_initial_states}, each component remains in its initial cap:
\begin{equation}
    \operatorname{supp}(\nu_{i,t}^k)
    \subseteq
    S_\alpha^+(u_k),
    \qquad
    t\geq0,
    \quad
    k=1,\ldots,K,
    \quad
    i\in\mathcal I_k.
    \label{eq:neighborhood_cap_invariance_main}
\end{equation}
There exist pairwise distinct directions
$x_\infty^k\in S_\alpha^+(u_k)$ such that, with
$\boldsymbol\nu_\infty^k=(\delta_{x_\infty^k})_{i\in\mathcal I_k}$,
\begin{equation}
    \mathcal W_{2,\pi^k}
    \left(
        \boldsymbol\nu_t^k,
        \boldsymbol\nu_\infty^k
    \right)
    \leq
    C_k(V_0^k)^{1/2}e^{-\kappa_k t},
    \qquad
    t\geq0.
    \label{eq:neighborhood_product_rate_main}
\end{equation}
Consequently, the full token distribution converges to a mixture of $K$ distinct point masses:
\begin{equation}
    \mu_\infty
    =
    \sum_{k=1}^{K}m_k\delta_{x_\infty^k},
    \qquad
    W_2(\mu_t,\mu_\infty)
    \leq
    C_Ge^{-\kappa_{\min}t},
    \qquad
    t\geq0.
    \label{eq:neighborhood_global_rate_main}
\end{equation}
\end{theorem}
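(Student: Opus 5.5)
The plan is to work componentwise and with characteristics. By the decomposition already established (Appendix~\ref{app:neighborhood_component_decomposition}), the system in Eq.~\ref{eq:neighborhood_product_continuity_equation} splits into $K$ independent systems, one on each connected subgraph $G_R[\mathcal I_k]$. By Lemma~\ref{lem:neighborhood_wellposedness}, each $\nu_{i,t}^k$ is the pushforward of $\nu_{i,0}^k$ under a characteristic flow $\Phi_t^{i}$ driven by $\mathcal X_{i,\boldsymbol\nu_t^k,\beta}$. It therefore suffices to prove pointwise statements for characteristics $x(t)$ issued from points of $\Omega_k$ and then integrate.

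\textbf{Step 1: cap invariance.} Fix $k$ and set $h(t)=\min$ over all characteristics of $\langle x(t),u_k\rangle$. At a characteristic attaining the minimum, compute
\begin{equation}
\frac{d}{dt}\langle x,u_k\rangle=\sum_j\mathsf P^k_{ij}\int e^{\beta\langle x,y\rangle}\bigl(\langle y,u_k\rangle-\langle x,y\rangle\langle x,u_k\rangle\bigr)\,d\nu_j^k(y).
\end{equation}
Suppose all supports lie in $S_\alpha^+(u_k)$ with $\alpha<\pi/4$, and $x$ is on the boundary, so $\langle x,u_k\rangle=\cos\alpha$. Then $\langle y,u_k\rangle\ge\cos\alpha$ and $\langle x,y\rangle\le 1$, so the integrand is $\ge\cos\alpha(1-\langle x,y\rangle)\ge0$. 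A standard barrier or Dini-derivative argument then gives that $h$ is nondecreasing, which proves Eq.~\ref{eq:neighborhood_cap_invariance_main}. The condition $\alpha<\pi/4$ also guarantees $\langle x,y\rangle\ge\cos 2\alpha>0$ inside the cap. This positivity is what makes the kernel weights uniformly comparable, between $e^{\beta\cos2\alpha}$ and $e^{\beta}$.

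\textbf{Step 2: exponential decay of disagreement (main obstacle).} Define the degree-weighted disagreement
\begin{equation}
V^k(t)=\tfrac12\sum_{i,j}\pi_i^k\pi_j^k\iint\|x-y\|^2\,d\nu_{i,t}^k(x)\,d\nu_{j,t}^k(y),
\end{equation}
or equivalently the $\pi$-weighted variance about the degree-weighted mean $\widehat M_t^k$ of $\widehat\mu_t^k$. Differentiate along the flow, using detailed balance $\pi_i\mathsf P_{ij}=\pi_j\mathsf P_{ji}$ to symmetrize the interaction terms. The aim is a differential inequality of the form $\dot V\le-2\kappa_kV$. The plan splits this into two bounds:
\begin{itemize}
\item The edgewise dissipation $\sum_{i,j}\pi_i\mathsf P_{ij}\iint e^{\beta\langle x,y\rangle}\|x-y\|^2\,d\nu_i\,d\nu_j$ is bounded below by $e^{\beta\cos2\alpha}$ times a graph Dirichlet form. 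The Dirichlet form is in turn bounded below by $\lambda_k$ times the $\pi$-variance, via the Poincaré inequality for the reversible walk with normalized-Laplacian gap $\lambda_k$.
\item The curvature and projection error terms coming from $\mathbf P_x^\perp$ are absorbed using $\cos2\alpha>0$. They are of higher order, $O(\|x-y\|^4)$, and inside the cap this is at most a $\sin^2$-type fraction of the main term.
\end{itemize}
Getting clean constants here, which presumably define $\kappa_k$ in Table~\ref{tab:notation_implicit}, is where I expect most of the work. The sphere makes the velocity nonlinear, and the mean $\widehat M_t^k$ also moves. If needed, I would instead work with $\max_{x,y}\|x-y\|$ over the supports, a diameter contraction argument. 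But the Poincaré route is the one that produces $\lambda_k$.

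\textbf{Step 3: existence of the limit and the rates.} The speed of each characteristic is bounded by $\sum_j\mathsf P_{ij}\int e^\beta\|\mathbf P_x^\perp y\|\,d\nu_j\lesssim e^\beta (V^k)^{1/2}$, since $\|\mathbf P_x^\perp y\|\le\|x-y\|$. Integrating this exponentially decaying speed shows that characteristics are Cauchy, and they all share the limit $x_\infty^k$ because the disagreement tends to $0$. Invariance of the cap gives $x_\infty^k\in S_\alpha^+(u_k)$. The tail integral yields $d_{\mathbb S}(x(t),x_\infty^k)\le C(V_0^k)^{1/2}e^{-\kappa_kt}$. Coupling each $\nu_{i,t}^k$ with $\delta_{x_\infty^k}$ then gives Eq.~\ref{eq:neighborhood_product_rate_main}. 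Distinctness of the limits follows because the caps are disjoint: $d_{\mathbb S}(u_k,u_\ell)>2\alpha$.

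\textbf{Step 4: the global rate.} For the full distribution, pass from $\pi$-weights to uniform weights using $\pi_i^k\ge\min_i d_i/\operatorname{vol}_k$; this is the factor $\chi_k$. By convexity of $W_2^2$ under mixtures,
\begin{equation}
W_2^2(\mu_t,\mu_\infty)\le\sum_k m_k\tfrac{1}{n_k}\sum_i W_2^2(\nu_{i,t}^k,\delta_{x_\infty^k})\le\sum_k m_k\chi_k C_k^2V_0^ke^{-2\kappa_kt}.
\end{equation}
This yields Eq.~\ref{eq:neighborhood_global_rate_main} with the constant $C_G$ and rate $\kappa_{\min}$.
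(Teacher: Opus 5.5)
Your Steps 1, 2 and 4 follow the paper's route in outline: cap invariance via an inward-pointing argument, decay of the $\pi^k$-weighted disagreement via detailed balance, the kernel floor $e^{\beta\cos(2\alpha)}$ and the normalized-Laplacian Poincar\'e inequality, and the $\chi_k$ mixture coupling at the end. The step that fails is Step 3. There you bound the speed of \emph{each individual characteristic} by $\lesssim e^\beta (V_t^k)^{1/2}$ and integrate pointwise to get $d_{\mathbb S}(x(t),x_\infty^k)\le C(V_0^k)^{1/2}e^{-\kappa_k t}$. But $V_t^k$ is only an $L^2(\widehat\mu_t^k)$ average, and the theorem covers arbitrary measure tuples. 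Take $\nu_{i,0}^k=(1-\epsilon)\delta_a+\epsilon\delta_b$ at one node and $\delta_a$ at all other nodes, with fixed $a\neq b$ in $S_\alpha^+(u_k)$. Then $V_0^k=O(\epsilon)$. Your bound at $t=0$, applied to the characteristics issued from $a$ and from $b$, would force both $d_{\mathbb S}(a,x_\infty^k)$ and $d_{\mathbb S}(b,x_\infty^k)$ to be $O(\sqrt\epsilon)$, which is impossible. Likewise, the characteristic from $b$ has speed of order one while $(V_0^k)^{1/2}=O(\sqrt\epsilon)$, so no pointwise speed bound with a data-independent constant can hold. Even for Dirac data, the pointwise route only gives $\lVert\dot x_i\rVert_2\le 2e^\beta(V_t^k/\pi_i^k)^{1/2}$. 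After weighting by $\pi_i^k$, this yields $\sqrt{n_k}\,C_k$ instead of the stated $C_k=2e^\beta/\kappa_k$.

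The fix is to control the velocity only in the averaged sense that $\mathcal W_{2,\pi^k}$ requires. Couple $\boldsymbol\nu_t^k$ and $\boldsymbol\nu_s^k$ through the characteristics and apply Minkowski's integral inequality. This gives $\mathcal W_{2,\pi^k}(\boldsymbol\nu_t^k,\boldsymbol\nu_s^k)\le\int_t^s(\mathcal I_{\beta,k}^G[\boldsymbol\nu_r^k])^{1/2}\,dr$. Jensen together with $\lVert\mathbf P_x^\perp(y)\rVert_2\le\lVert x-y\rVert_2$ gives $\mathcal I_{\beta,k}^G\le 2e^{2\beta}\mathcal D_k\le 4e^{2\beta}V_t^k$. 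The last step uses $\lVert x-y\rVert_2^2\le2\lVert x-\overline x\rVert_2^2+2\lVert y-\overline x\rVert_2^2$ and $\sum_j q_{ij}^k=\pi_i^k$. Integrating $2e^\beta(V_0^k)^{1/2}e^{-\kappa_k r}$ over $[t,\infty)$ produces exactly $C_k(V_0^k)^{1/2}e^{-\kappa_k t}$. Completeness then gives the limit, and $V_t^k\to0$ with $\pi_i^k>0$ forces it to be $(\delta_{x_\infty^k})_i$.

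Your Step 2, which you leave as a plan, also closes cleanly. Write $V_t^k=1-\lVert\overline x_t^k\rVert_2^2$, so the moving mean causes no trouble, and symmetrize to obtain $\frac{d}{dt}V_t^k=-\sum_{i,j}q_{ij}^k\iint e^{\beta\langle x,y\rangle}(1-\langle x,y\rangle)\langle\overline x_t^k,x+y\rangle\,d\nu_{i,t}^k(x)\,d\nu_{j,t}^k(y)$. Here the projection enters as the multiplicative factor $\langle\overline x_t^k,x+y\rangle\ge2\cos(2\alpha)$, not as an additive $O(\lVert x-y\rVert^4)$ error. For non-Dirac nodes you also need the within/between-node variance split and $\lambda_k\le1$ (which follows from the self-loops) before invoking Poincar\'e.
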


Theorem~\ref{thm:neighborhood_multicluster} establishes internal alignment within each component while preserving separation across components. The limiting directions $x_\infty^k$ are the \emph{implicit opinion leaders}, determined by the initial states and interactions within their respective groups. The graph structure enters the convergence bound through
$\kappa_k=\lambda_k\cos(2\alpha)e^{\beta\cos(2\alpha)}$.
For fixed $\alpha$ and $\beta$, this guaranteed rate scales as
$\Theta(n_k^{-2})$ for self-looped path graphs and as $\Theta(1)$ for graph families with a uniformly positive normalized Laplacian gap. Thus, the theorem quantifies how connectivity within a group affects its guaranteed rate of consensus. The full proof is provided in Appendix~\ref{app:neighborhood_proof}.


%% file: 3_experiments.tex
\begin{figure}[t]
    \centering
    \includegraphics[width=.99\linewidth]{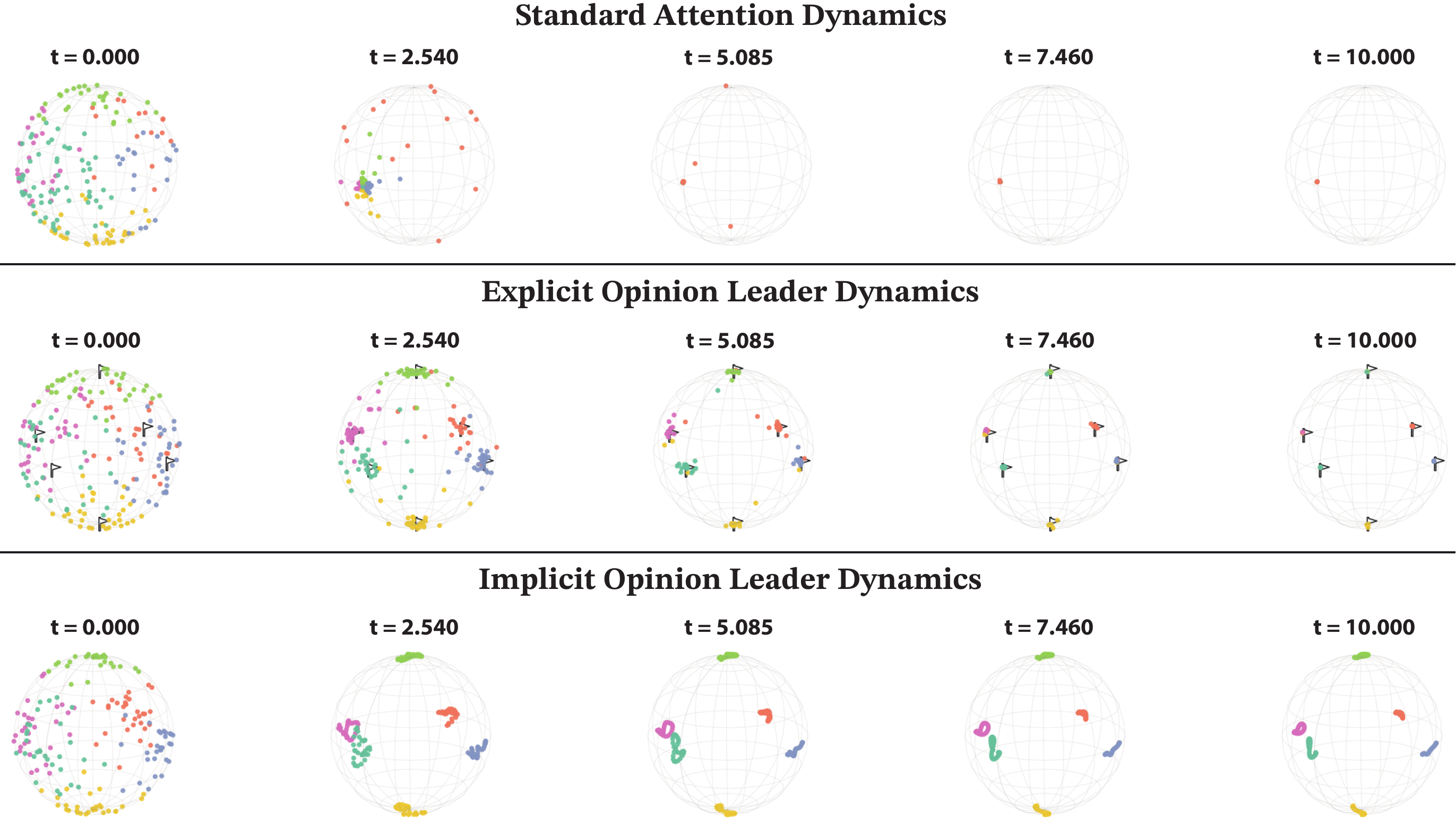}
    \caption{Finite-particle simulations on $\mathbb S^2$ of Standard Attention dynamics (\emph{top}), explicit opinion leader dynamics under general initialization (\emph{middle}), and implicit opinion leader dynamics (\emph{bottom}). Columns show five snapshots over $t\in[0,10]$.  Colors are assigned at $t=0$ and remain fixed.} 
    \label{fig:attention_dynamics_simulation}
    \vspace{-3mm}
\end{figure}

\section{Experiments}
\subsection{Experimental Setup}


\noindent\textbf{Numerical simulations.} We simulate Standard Attention dynamics, implicit opinion leader dynamics, and explicit opinion leader dynamics under both general initialization (Corollary~\ref{cor:explicit_global_convergence}) and initialization satisfying the geometric conditions of Theorem~\ref{thm:explicit_separated_attractors}. Implementation details are provided in Appendix~\ref{app:simulation_implementation}. Additional simulations of explicit opinion leader dynamics under the Theorem~\ref{thm:explicit_separated_attractors}'s initialization conditions appear in Appendix~\ref{app:explicit_conditional_simulation}.

\noindent\textbf{Frontier LLM analysis.} We track layerwise evolution of token representations in three frontier sparse-attention LLMs: \textcolor{explicitcolor}{\textbf{Kimi-K3}}~\citep{team2026kimi}, \textcolor{implicitcolor}{\textbf{MiniMax-M3}}~\citep{lai2026minimax}, and \textcolor{explicitcolor}{\textbf{DeepSeek-}}\textcolor{implicitcolor}{\textbf{V4-Flash}}~\citep{xu2026deepseek}. We also use \textcolor{standardcolor}{\textbf{GLM-4.7-Flash}}~\citep{zeng2025glm}, which employs standard dense attention for comparison. The analysis covers inputs from four benchmarks: HumanEval~\citep{chen2021evaluating}, HellaSwag~\citep{zellers2019hellaswag}, ARC-Easy~\citep{clark2018think}, and MATH~\citep{hendrycks2021measuring}. These benchmarks provide diverse inputs for examining representation dynamics.

We obtain low-dimensional embeddings using UMAP~\citep{mcinnes2018umap}, and its spherical variants, with spherical visualizations displaying token representations on $\mathbb S^2$. We apply HDBSCAN~\citep{mcinnes2017hdbscan} to the reduced representations to identify token clusters. The resulting cluster labels determine token colors, which remain fixed across layers so that the earlier representations of each final-layer group can be inspected. Extended results are provided in Appendix~\ref{app:extended_llm_analysis}.

\noindent\textbf{Measuring cluster separation.} We use the \emph{cosine silhouette score}~\citep{rousseeuw1987silhouettes} to quantify \textbf{\emph{within-cluster cohesion}} and \textbf{\emph{between-cluster separation}}. For each token, the score compares its average cosine distance to tokens in its own cluster with that to the nearest competing cluster. Scores range in $[-1,1]$: higher values indicate more cohesive and better-separated clusters, values near zero indicate weak separation, and negative values indicate greater proximity to another cluster. We average the tokenwise scores to obtain an overall measure of clustering quality; the detailed computation is provided in Appendix~\ref{app:s_cos}. All scores are computed in the reduced representation space. For reference, we apply the same analysis to standard Gaussian vectors with matched dimensions.

\subsection{Numerical Simulations}


We compare the three dynamics under the initialization settings described below. For Standard Attention, particles are sampled independently from a von Mises--Fisher distribution on $\mathbb S^2$. For \emph{explicit opinion leader}, particles are initialized uniformly on $\mathbb S^2$ and evolve under $m=6$ fixed representatives. For \emph{implicit opinion leader}, the $n=180$ particles are divided equally into $K=6$ disconnected components, with initial states satisfying our concentration and separation conditions.

Fig.~\ref{fig:attention_dynamics_simulation} illustrates the resulting contrast. Standard Attention progressively aligns particles toward a single consensus direction. Explicit opinion leader dynamics organize initially dispersed particles into multiple clusters around attractors of the representative-induced potential, whereas implicit opinion leader dynamics align particles within each connected component while maintaining separation. The standard and implicit simulations illustrate the consensus behavior characterized by Theorems~\ref{thm:sa_general_consensus} and~\ref{thm:neighborhood_multicluster}. The explicit simulation illustrates convergence to critical points as established by Corollary~\ref{cor:explicit_global_convergence}, with the chosen representatives yielding several attracting clusters.




\begin{table}[htbp]
\vspace{-3mm}
  \caption{
    Cosine silhouette scores across LLMs and benchmarks. For each model, the \emph{upper} row reports mean final-layer scores across $100$ random samples after projecting hidden states with spherical UMAP, while the \emph{lower} row gives the Gaussian reference score at the same hidden dimension.}
  \label{tab:cosine_silhouette}
  \centering
  \small
  \setlength{\tabcolsep}{3pt}

  \begin{NiceTabularX}{\linewidth}{
    @{}l *{4}{>{\centering\arraybackslash}X}@{}
  }
    \toprule
    \textbf{Model}
    & \textbf{HumanEval}
    & \textbf{ARC-Easy}
    & \textbf{HellaSwag}
    & \textbf{MATH} \\
    \midrule

    & \textbf{0.77 $\pm$ 0.22} ($\uparrow$ \textbf{0.60})
    & \textbf{0.87 $\pm$ 0.15} ($\uparrow$ \textbf{0.68})
    & \textbf{0.77 $\pm$ 0.17} ($\uparrow$ \textbf{0.62})
    & \textbf{0.86 $\pm$ 0.18} ($\uparrow$ \textbf{0.67}) \\

    \multirow{-2}{*}{\textcolor{explicitcolor}{\textbf{Kimi-K3}}}
    & 0.17 $\pm$ 0.04
    & 0.19 $\pm$ 0.05
    & 0.15 $\pm$ 0.05
    & 0.19 $\pm$ 0.04 \\
    \midrule

    & \textbf{0.87 $\pm$ 0.07} ($\uparrow$ \textbf{0.71})
    & \textbf{0.84 $\pm$ 0.07} ($\uparrow$ \textbf{0.67})
    & \textbf{0.85 $\pm$ 0.09} ($\uparrow$ \textbf{0.69})
    & \textbf{0.83 $\pm$ 0.11} ($\uparrow$ \textbf{0.66}) \\

    \multirow{-2}{*}{\textcolor{implicitcolor}{\textbf{MiniMax-M3}}}
    & 0.16 $\pm$ 0.04
    & 0.17 $\pm$ 0.04
    & 0.16 $\pm$ 0.04
    & 0.17 $\pm$ 0.04 \\
    \midrule

    & \textbf{0.71 $\pm$ 0.17} ($\uparrow$ \textbf{0.52})
    & \textbf{0.85 $\pm$ 0.12} ($\uparrow$ \textbf{0.63})
    & \textbf{0.64 $\pm$ 0.19} ($\uparrow$ \textbf{0.47})
    & \textbf{0.84 $\pm$ 0.12} ($\uparrow$ \textbf{0.63}) \\

    \multirow{-2}{*}{\textcolor{explicitcolor}{\textbf{DeepSeek-}}\textcolor{implicitcolor}{\textbf{V4-Flash}}}
    & 0.19 $\pm$ 0.06
    & 0.22 $\pm$ 0.12
    & 0.17 $\pm$ 0.04
    & 0.21 $\pm$ 0.13 \\
    \midrule

    & 0.35 $\pm$ 0.27 ($\uparrow$ 0.19)
    & 0.24 $\pm$ 0.17 ($\uparrow$ 0.08)
    & 0.34 $\pm$ 0.26 ($\uparrow$ 0.19)
    & 0.30 $\pm$ 0.20 ($\uparrow$ 0.13) \\

    \multirow{-2}{*}{\textcolor{standardcolor}{\textbf{GLM-4.7-Flash}}}
    & 0.16 $\pm$ 0.06
    & 0.16 $\pm$ 0.13
    & 0.15 $\pm$ 0.04
    & 0.17 $\pm$ 0.12 \\
    \bottomrule
  \end{NiceTabularX}
  \vspace{-3mm}
\end{table}

\subsection{Frontier LLM Analysis}

\begin{figure}[t]
    \centering
    \includegraphics[width=.99\linewidth]{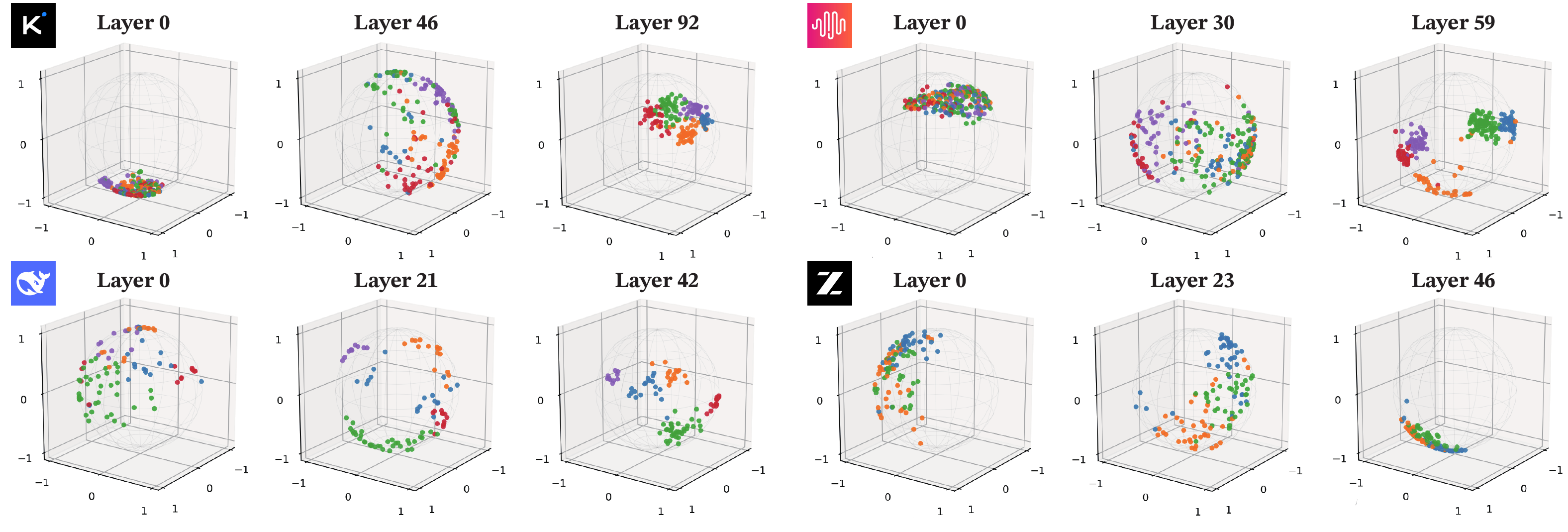}
    \caption{Layerwise evolution of token hidden states for ARC-Easy samples in \textcolor{explicitcolor}{\textbf{Kimi-K3}}, \textcolor{implicitcolor}{\textbf{MiniMax-M3}}, \textcolor{explicitcolor}{\textbf{DeepSeek-}}\textcolor{implicitcolor}{\textbf{V4-Flash}}, and \textcolor{standardcolor}{\textbf{GLM-4.7-Flash}}, visualized on unit sphere using spherical UMAP.}
    \vspace{-3mm}
    \label{fig:llm_observation}
\end{figure}

\noindent\textbf{Layerwise token dynamics.}
Fig.~\ref{fig:llm_observation} visualizes token representations for ARC-Easy samples using spherical UMAP. \textcolor{explicitcolor}{\textbf{Kimi-K3}}, \textcolor{implicitcolor}{\textbf{MiniMax-M3}}, and \textcolor{explicitcolor}{\textbf{DeepSeek-}}\textcolor{implicitcolor}{\textbf{V4-Flash}} exhibit distinct final-layer clusters, whereas \textcolor{standardcolor}{\textbf{GLM-4.7-Flash}} shows weaker intermediate separation followed by a more concentrated final configuration. These observations are qualitatively consistent with the contrast between separated token groups and global alignment studied in our framework.

\noindent\textbf{Quantitative cluster separation.}
Table~\ref{tab:cosine_silhouette} reports final-layer cosine silhouette scores over $100$ samples per benchmark. Across all four benchmarks, the three sparse-attention LLMs score $\mathbf{0.64}$--$\mathbf{0.87}$, compared with $0.24$--$0.35$ for \textcolor{standardcolor}{\textbf{GLM-4.7-Flash}}. Their gains over matched Gaussian references are also larger: $\mathbf{0.47}$--$\mathbf{0.71}$ versus $0.08$--$0.19$. These consistent differences across models and benchmarks indicate stronger clustering in the representation space, supporting the relevance of the multiple-group structure predicted by our theory to frontier sparse-attention LLMs.

%% file: 4_conclusions.tex
\section{Conclusions}

We introduce \emph{opinion leader dynamics} to characterize two mechanisms for token clustering under sparse interactions: the \textcolor{explicitcolor}{\textbf{explicit}} model where attraction toward representative-induced maxima and the \textcolor{implicitcolor}{\textbf{implicit}} model consensus within disconnected interaction groups. Both admit reverse Wasserstein gradient flow formulations and converge exponentially to distinct limiting directions under suitable conditions. Finite-particle simulations illustrate these results. Across four benchmarks, \textcolor{explicitcolor}{\textbf{Kimi-K3}}, \textcolor{implicitcolor}{\textbf{MiniMax-M3}}, and \textcolor{explicitcolor}{\textbf{DeepSeek-}}\textcolor{implicitcolor}{\textbf{V4-Flash}} exhibit stronger clustering behavior than dense \textcolor{standardcolor}{\textbf{GLM-4.7-Flash}}, consistent with the predicted group structure. Together, these findings connect tractable convergence guarantees with representation patterns in frontier sparse-attention LLMs.


\section*{AI Use Statement}

In this work, we used generative AI tools to assist with code generation and to improve the clarity, grammar, and readability of the manuscript. Generative AI tools did not directly generate or alter the experiment results reported in this paper. The authors reviewed and tested all AI-assisted code and carefully verified all AI-assisted manuscript edits. We take full responsibility for the final content, including all text, claims, results, and associated artifacts.


%% file: Appendix.tex
\clearpage
\appendix
\appendixtableofcontents
\clearpage

\section{Discussion and Future Work}
\label{app:discussion}

Our analysis isolates two mechanisms through which restricted interactions support alignment within token groups while preserving separation between them. To obtain tractable convergence guarantees, we study fixed representatives in the explicit model and disconnected interaction components in the implicit model, together with suitable geometric conditions on initialization. These idealizations allow us to characterize distinct limiting directions and quantify convergence rates. 

An important future extension is to incorporate \emph{causal masking} into our analysis. Our current framework does not impose causal constraints, whereas all frontier LLMs examined in our experiments use causal attention. Causal masking makes token interactions directional and can break the symmetry underlying our gradient-flow formulation. We therefore plan to develop a framework for causal opinion leader dynamics, investigating how directed interactions affect the emergence, separation, and stability of token groups.

Despite these differences between theory and practice, the frontier sparse-attention LLMs studied here exhibit the qualitative behavior captured by our framework: their final-layer representations show distinct token groups and consistently stronger cluster separation than the dense-attention comparison model across four benchmarks. These observations support the relevance of the proposed mechanisms beyond the idealized setting, although they do not establish that trained models satisfy every theoretical assumption. Future work will examine which conditions are essential for this correspondence and extend the analysis to evolving representatives, time-dependent interaction patterns, and weak communication between groups.


\section{Related Work}
\label{app:A}

\noindent\textbf{Attention Dynamics.} A growing line of theoretical work models self-attention as an interacting particle system and studies it through mean-field analysis~\citep{geshkovski2023emergence,chen2025quantitative,geshkovski2025mathematical,rigollet2026mean}. Within this framework, \citet{bruno2025emergence} show that metastable clusters form and persist near uniform initialization as the number of tokens grows, while \citet{bruno2025multiscale} identify several regimes at moderate interaction strength, including one in which clusters merge slowly and sequentially. Complementary work considers restricted interaction patterns. \citet{karagodin2024clustering} analyze finite-token dynamics under causal masking, while \citet{liu2026krause} introduce a bounded-confidence sparse attention and study its clustering behavior.

\noindent\textbf{Sparse Architectures.} Sparse attention reduces computation by allowing each query to interact with only a subset of keys. Existing methods use local windows or predefined sparse patterns~\citep{child2019generating,beltagy2020longformer,zaheer2020big}, content-dependent hashing or routing~\citep{kitaev2020reformer,roy2021efficient,lu2025moba}, or representative tokens to guide retrieval~\citep{mohtashami2023landmark,hu2025hardware,leng2026understanding}. Native Sparse Attention (NSA) combines token compression, blockwise selection, and sliding-window attention~\citep{yuan2025native}. DeepSeek Sparse Attention (DSA) instead uses a lightweight learned indexer to select a query-dependent top-$k$ subset of preceding tokens~\citep{liu2025deepseek}. Recent theoretical work also shows that combining local and global attention can be more expressive than using either alone~\citep{li2026characterizing}. Motivated by these architectures, we formulate two idealized models within \emph{opinion leader dynamics} framework: one uses fixed representatives, while the other restricts interactions through a sparse graph.

\section{Extended Notation and Definitions}
\label{app:notation}

This section collects supplementary notation for the two models of \emph{opinion leader dynamics}. Table~\ref{tab:notation_explicit} records the notation associated with \emph{explicit opinion leader dynamics}, while Table~\ref{tab:notation_implicit} reports the notation used in \emph{implicit opinion leader dynamics}. The listed notation is used throughout the model formulations, assumptions, convergence results, and proofs developed in the main text and appendix.

\begin{table}[htbp]
  \caption{Extended key notation for \emph{explicit opinion leader dynamics}.}
  \label{tab:notation_explicit}
  \centering
  \small
  \renewcommand{\arraystretch}{1.35}
  \begin{tabular}{
    c
    @{\hspace{3em}}
    c
  }
    \toprule
    \textbf{Notation} & \textbf{Definition} \\
    \midrule

    $\operatorname{Crit}(\Phi_\nu)$
    &
    $\left\{
      x\in\mathbb S^{d-1}:
      \nabla^\circ\Phi_\nu(x)=0
    \right\}$
    \\

    $\mu_t^a$
    &
    $(\phi_t)_\#\mu_0^a,
    \qquad a=1,\ldots,m$
    \\

    $\theta_{ab}$
    &
    $d_{\mathbb S}(\ell_a,\ell_b),
    \qquad a\neq b$
    \\

    $\gamma_{ab}(\alpha)$
    &
    $\cos\bigl(\theta_{ab}-\alpha\bigr),
    \qquad a\neq b$
    \\

    $\mathbf I_a(\alpha)$
    &
    $\displaystyle
      \begin{aligned}
        &\rho_a r_a
        \exp\bigl(\beta r_a\cos\alpha\bigr)
        \sin^2\alpha\\
        &\quad
        -\sin\alpha
        \sum_{b\neq a}
        \rho_b r_b
        \exp\bigl(\beta r_b\gamma_{ab}(\alpha)\bigr)
        \sin\bigl(\theta_{ab}-\alpha\bigr),
        \qquad a=1,\ldots,m
      \end{aligned}
    $
    \\

    $\mathbf H_a(\alpha)$
    &
    $\displaystyle
      \begin{aligned}
        &\rho_a r_a
        \exp\bigl(\beta r_a\cos\alpha\bigr)
        \bigl(
          \cos\alpha-\beta r_a\sin^2\alpha
        \bigr)\\
        &\quad
        -\sum_{b\neq a}
        \rho_b r_b
        \exp\bigl(\beta r_b\gamma_{ab}(\alpha)\bigr)
        \bigl(1+\beta r_b\bigr),
        \qquad a=1,\ldots,m
      \end{aligned}
    $
    \\

    \bottomrule
  \end{tabular}
\end{table}


\begin{table}[htbp]
  \caption{Extended key notation for \emph{implicit opinion leader dynamics}.}
  \label{tab:notation_implicit}
  \centering
  \small
  \renewcommand{\arraystretch}{1.35}
  \begin{tabular}{
    c
    @{\hspace{4em}}
    c
  }
    \toprule
    \textbf{Notation} & \textbf{Definition} \\
    \midrule

    $\mathbf A_k$
    &
    $\left(A_{ij}^{(R)}\right)_{i,j\in\mathcal I_k}$
    \\

    $\mathbf D_k$
    &
    $\operatorname{diag}\!\left(
      \left(d_i^{(R)}\right)_{i\in\mathcal I_k}
    \right)$
    \\

    $\lambda_k$
    &
    $\lambda_2\!\left(
      \mathbf I_{n_k}
      -\mathbf D_k^{-1/2}
       \mathbf A_k
       \mathbf D_k^{-1/2}
    \right)$
    \\

    $\operatorname{vol}_k$
    &
    $\displaystyle\sum_{i\in\mathcal I_k} d_i^{(R)}$
    \\

    $\mathsf P_{ij}^k$
    &
    $\frac{A_{ij}^{(R)}}{d_i^{(R)}}$
    \\

    $\pi_i^k$
    &
    $\frac{d_i^{(R)}}{\operatorname{vol}_k}$
    \\

    $V_0^k$
    &
    $\frac{1}{2}
    \iint_{\mathbb S^{d-1}\times\mathbb S^{d-1}}
    \|x-y\|_2^2
    \,d\widehat{\mu}_0^k(x)
    \,d\widehat{\mu}_0^k(y)$
    \\

    $\chi_k$
    &
    $\frac{\operatorname{vol}_k}{
    n_k\min_{i\in\mathcal I_k}
    d_i^{(R)}}$
    \\

    $\kappa_k$
    &
    $\lambda_k
    \cos(2\alpha)
    e^{\beta\cos(2\alpha)}$
    \\

    $C_k$
    &
    $\frac{2e^{\beta(1-\cos(2\alpha))}}{
        \lambda_k\cos(2\alpha)}$
    \\

    $\kappa_{\min}$
    &
    $\min_{1\leq k\leq K}
    \kappa_k$
    \\

    $C_G$
    &
    $\left(
        \sum_{k=1}^{K}
        m_k\chi_kC_k^2V_0^k
    \right)^{1/2}$
    \\
    
    \bottomrule
  \end{tabular}
\end{table}

\section{Interpreting Sparse Attention through Opinion Leader Dynamics}
\label{app:connections_sparse_attention}

A wide range of existing sparse attention architectures can be interpreted through the \textcolor{explicitcolor}{\textbf{Explicit}} and \textcolor{implicitcolor}{\textbf{Implicit}} models of our \emph{opinion leader dynamics}. As summarized in Table~\ref{tab:sparse_attention_connections}, Landmark Attention~\citep{mohtashami2023landmark}, Kimi Delta Attention~\citep{team2026kimi}, and Hierarchical Sparse Attention~\citep{hu2025hardware} are \textcolor{explicitcolor}{\textbf{Explicit}} Sparse Attention for their \textbf{representative-based mechanisms}; Longformer~\citep{beltagy2020longformer}, BigBird~\citep{zaheer2020big}, DeepSeek Sparse Attention (V3.2)~\citep{liu2025deepseek}, and MiniMax Sparse Attention~\citep{lai2026minimax} are \textcolor{implicitcolor}{\textbf{Implicit}} Sparse Attention for their \textbf{restricted interaction mechanisms}; NSA~\citep{yuan2025native} combines three branches: compressed block representations, top-$k$ block selection, and local sliding-window attention, which together incorporate both \textbf{explicit representatives} and a \textbf{restricted interaction pattern}. DeepSeek-V4 Attention similarly combines the two mechanisms through its Compressed Sparse Attention (CSA) and Heavily Compressed Attention (HCA) modules. Both compress token groups into key-value entries that serve as \textbf{explicit representatives}. Their local sliding-window branches, together with query-dependent selection in CSA, define a \textbf{restricted interaction pattern}~\citep{xu2026deepseek}. Accordingly, we classify NSA and DeepSeek-V4 Attention as \textcolor{explicitcolor}{\textbf{Hyb}}\textcolor{implicitcolor}{\textbf{rid}} Sparse Attention.

\begin{table}[htbp]
    \centering
    \caption{Connections between existing sparse attention mechanisms and our \emph{opinion leader dynamics} framework. For each method, we summarize its main design and identify its corresponding primary dynamical interpretation within our framework: \textcolor{explicitcolor}{\textbf{Explicit}}, \textcolor{implicitcolor}{\textbf{Implicit}}, and \textcolor{explicitcolor}{\textbf{Hyb}}\textcolor{implicitcolor}{\textbf{rid}}.}
    \label{tab:sparse_attention_connections}
    \small
    \setlength{\tabcolsep}{5pt}
    \renewcommand{\arraystretch}{1.15}
    \renewcommand{\tabularxcolumn}[1]{m{#1}}
    
    \begin{tabularx}{\linewidth}{@{}
     >{\centering\arraybackslash}m{0.24\linewidth}
     >{\centering\arraybackslash}X
     >{\centering\arraybackslash}m{0.13\linewidth}
    @{}}
    \toprule
    \textbf{Attention mechanism}
    & \textbf{Description}
    & \textbf{Interpretation} \\
    \midrule

    \textbf{Landmark Attention}\newline
    \citep{mohtashami2023landmark}
    & Partitions the sequence into fixed-size blocks, each followed by a landmark token that serves as an \textbf{explicit representative}. Query-landmark attention scores gate access to the associated blocks during training and guide their retrieval at inference.
    & \textcolor{explicitcolor}{\textbf{Explicit}} \\
    \addlinespace

    \textbf{Kimi Delta Attention}\newline
    \citep{team2026kimi}
    & Compresses past-token information into a gated recurrent state that serves as an \textbf{explicit representative} and is updated at each step to provide query-dependent context.
    & \textcolor{explicitcolor}{\textbf{Explicit}} \\
    \addlinespace

    \textbf{Hierarchical Sparse Attention}\newline
    \citep{hu2025hardware,leng2026understanding}
    & Encodes each fixed-length chunk as a retrieval representation that can be interpreted as an \textbf{explicit representative}. Each query scores these representations, attends within the top-$k$ chunks, and combines their outputs with chunk-level weights.
    & \textcolor{explicitcolor}{\textbf{Explicit}} \\
    \addlinespace
   
    \textbf{Longformer}\newline
    \citep{beltagy2020longformer}
    & Defines a \textbf{sparse interaction pattern} that combines local sliding-window attention with full-sequence interactions through designated global tokens.
    & \textcolor{implicitcolor}{\textbf{Implicit}} \\
    \addlinespace

    \textbf{BigBird}\newline
    \citep{zaheer2020big}
    & Defines a \textbf{sparse interaction pattern} that combines local windows, random connections, and global tokens to capture long-range dependencies.
    & \textcolor{implicitcolor}{\textbf{Implicit}} \\
    \addlinespace

    \textbf{DeepSeek Sparse Attention (V3.2)}\newline
    \citep{liu2025deepseek}
    & Restricts interactions for each query to the top-$k$ preceding tokens selected by a lightweight learned indexer, yielding a query-dependent \textbf{sparse interaction pattern}.
    & \textcolor{implicitcolor}{\textbf{Implicit}} \\
    \addlinespace

    \textbf{MiniMax Sparse Attention}\newline
    \citep{lai2026minimax}
    & Restricts attention within each query-head group to the top-$k$ key-value blocks while always retaining the local block, thereby defining a blockwise \textbf{sparse interaction pattern}.
    & \textcolor{implicitcolor}{\textbf{Implicit}} \\
    \addlinespace

    \textbf{Native Sparse Attention}\newline
    \citep{yuan2025native}
    & Combines compressed blocks that act as \textbf{explicit representatives} with a \textbf{sparse interaction pattern} defined by top-$k$ block selection and local sliding-window attention.
    & \textcolor{explicitcolor}{\textbf{Hyb}}\textcolor{implicitcolor}{\textbf{rid}} \\
    \addlinespace

    \textbf{DeepSeek-V4 Attention}\newline
    \citep{xu2026deepseek}
    & Combines Compressed Sparse Attention (CSA) and Heavily Compressed Attention (HCA). Compressed key-value entries in both modules serve as \textbf{explicit representatives}, while their sliding-window branches and query-dependent selection in CSA define a \textbf{sparse interaction pattern}.
    & \textcolor{explicitcolor}{\textbf{Hyb}}\textcolor{implicitcolor}{\textbf{rid}} \\
    \bottomrule
  \end{tabularx}
\end{table}

\section{Standard Attention Dynamics}
\label{app:standard_attention}

Prior work has modeled Standard Attention as an interacting particle system on the unit sphere~\citep{chen2025quantitative,geshkovski2025mathematical}. We briefly review their derivations here for completeness. The analysis of our framework instead uses fixed-potential gradient analysis for \emph{explicit opinion leader dynamics} and graph decomposition with spectral-gap estimates for \emph{implicit opinion leader dynamics}.

Let $x_i(t)\in\mathbb S^{d-1}$ denote the representation of token $i$ at time $t$. For $\beta>0$, the tokens evolve according to
\begin{equation}
    \dot{x}_i(t)
    =
    \mathbf{P}_{x_i(t)}^{\perp}
    \left(
        \frac{1}{n}\sum_{j=1}^{n}
        e^{\beta\langle x_i(t),x_j(t)\rangle}x_j(t)
    \right).
    \label{eq:sa_particle_dynamics}
\end{equation}

The associated empirical measure $\mu_t=\frac{1}{n}\sum_{i=1}^{n}\delta_{x_i(t)}$ satisfies
\begin{equation}
    \partial_t\mu_t
    +
    \operatorname{div}^{\circ}
    \left(
        \mu_t\mathcal{X}_{\mu_t,\beta}
    \right)
    =0,
    \qquad
    \mathcal{X}_{\mu_t,\beta}(x)
    =
    \int_{\mathbb{S}^{d-1}}
    \mathbf{P}_{x}^{\perp}(y)
    e^{\beta\langle x,y\rangle}
    \,d\mu_t(y),
    \label{eq:sa_mean_field_dynamics}
\end{equation}
where $\operatorname{div}^{\circ}$ denotes the spherical divergence. More generally, Eq.~\ref{eq:sa_mean_field_dynamics} defines the mean-field dynamics for any probability measure on $\mathbb S^{d-1}$, with the associated interaction energy
\begin{equation}
    \mathbf{E}_{\beta}[\mu]
    =
    \frac{1}{2\beta}
    \iint_{\mathbb S^{d-1}\times\mathbb S^{d-1}}
    e^{\beta\langle x,y\rangle}
    \,d\mu(x)d\mu(y).
    \label{eq:sa_energy_main}
\end{equation}

Let $\chi$ be a signed measure with zero total mass, and suppose that $\mu_\varepsilon=\mu+\varepsilon\chi$ remains a probability measure for sufficiently small $|\varepsilon|$. Differentiating Eq.~\ref{eq:sa_energy_main} at $\varepsilon=0$ and using the symmetry of the kernel gives
\begin{equation}
    \left.
    \frac{d}{d\varepsilon}
    \mathbf E_\beta[\mu_\varepsilon]
    \right|_{\varepsilon=0}
    =
    \int_{\mathbb S^{d-1}}
    \left(
        \frac{1}{\beta}
        \int_{\mathbb S^{d-1}}
        e^{\beta\langle x,y\rangle}
        d\mu(y)
    \right)
    d\chi(x).
\end{equation}
Therefore, the first variation of $\mathbf E_\beta$ is
\begin{equation}
    \frac{\delta\mathbf E_\beta}{\delta\mu}(x)
    =
    \frac{1}{\beta}
    \int_{\mathbb S^{d-1}}
    e^{\beta\langle x,y\rangle}
    d\mu(y).
    \label{eq:app_sa_first_variation}
\end{equation}
Taking the spherical gradient yields
\begin{align}
    \nabla_x^\circ
    \frac{\delta\mathbf E_\beta}{\delta\mu}(x)
    &=
    \int_{\mathbb S^{d-1}}
    \mathbf P_x^\perp(y)
    e^{\beta\langle x,y\rangle}
    d\mu(y)
    \notag\\
    &=
    \mathcal X_{\mu,\beta}(x).
    \label{eq:app_sa_gradient_velocity}
\end{align}
Hence, Eq.~\ref{eq:sa_mean_field_dynamics} defines a \emph{reverse Wasserstein gradient flow} of $\mathbf E_\beta$:
\begin{equation}
    \partial_t\mu_t
    +
    \operatorname{div}^{\circ}
    \left(
        \mu_t
        \nabla^\circ
        \frac{\delta\mathbf E_\beta}{\delta\mu_t}
    \right)
    =
    0.
    \label{eq:app_sa_gradient_ascent}
\end{equation}

Define the squared Wasserstein slope by
\begin{equation}
    I_\beta(\mu)
    =
    \int_{\mathbb S^{d-1}}
    \|\mathcal X_{\mu,\beta}(x)\|_2^2
    d\mu(x).
    \label{eq:app_sa_slope}
\end{equation}
Using Eq.~\ref{eq:app_sa_gradient_ascent} and integration by parts on the sphere, we obtain
\begin{align}
    \frac{d}{dt}\mathbf E_\beta[\mu_t]
    &=
    \int_{\mathbb S^{d-1}}
    \left\langle
        \nabla_x^\circ
        \frac{\delta\mathbf E_\beta}{\delta\mu_t}(x),
        \mathcal X_{\mu_t,\beta}(x)
    \right\rangle
    d\mu_t(x)
    \notag\\
    &=
    \int_{\mathbb S^{d-1}}
    \|\mathcal X_{\mu_t,\beta}(x)\|_2^2
    d\mu_t(x)
    \notag\\
    &=
    I_\beta(\mu_t)
    \geq
    0.
    \label{eq:app_sa_energy_production}
\end{align}
Thus, $\mathbf E_\beta[\mu_t]$ is nondecreasing along the flow.

For any $x,y\in\mathbb S^{d-1}$, we have $\langle x,y\rangle\leq1$. Therefore, every probability measure $\mu\in\mathcal P(\mathbb S^{d-1})$ satisfies
\begin{align}
    \mathbf E_\beta[\mu]
    &=
    \frac{1}{2\beta}
    \iint_{\mathbb S^{d-1}\times\mathbb S^{d-1}}
    e^{\beta\langle x,y\rangle}
    d\mu(x)d\mu(y)
    \notag\\
    &\leq
    \frac{1}{2\beta}
    \iint_{\mathbb S^{d-1}\times\mathbb S^{d-1}}
    e^\beta
    d\mu(x)d\mu(y)
    \notag\\
    &=
    \frac{e^\beta}{2\beta}
    =
    \mathbf E_\beta[\delta_{x_0}],
    \qquad
    x_0\in\mathbb S^{d-1}.
    \label{eq:app_sa_energy_upper_bound}
\end{align}
Equality holds only if $\langle x,y\rangle=1$ for $\mu\otimes\mu$-almost every $(x,y)$. Since $x$ and $y$ are unit vectors, this condition implies that $\mu$ is a Dirac measure. Hence,
\begin{equation}
    \mathbf E_\beta^\star
    =
    \sup_{\mu\in\mathcal P(\mathbb S^{d-1})}
    \mathbf E_\beta[\mu]
    =
    \frac{e^\beta}{2\beta},
    \label{eq:app_sa_energy_maximum}
\end{equation}
and the global maximizers of $\mathbf E_\beta$ are precisely the Dirac measures.

We next recall two convergence results established by \citet[\emph{Theorems~2.3 and~2.5}]{chen2025quantitative}. 

\begin{theorem}[\textbf{Localized initialization}]
\label{thm:sa_localized_consensus}
Let $d\geq 2$, $\beta>0$, $u\in\mathbb{S}^{d-1}$, and $\alpha\in[0,\pi/2)$. We define $S_{\alpha}^{+}(u)=\left\{x \in \mathbb{S}^{d-1}:\langle x,u\rangle \geq \cos\alpha\right\}$, which means the closed spherical cap of radius $\alpha$ centered at $u$. Assume that
\begin{equation}
    \operatorname{supp}(\mu_0)
    \subseteq
    S_{\alpha}^{+}(u),
    \qquad
    10(1+\sqrt{\beta})\tan\alpha
    \leq
    1.
    \label{eq:sa_localized_condition_main}
\end{equation}
Then $S_{\alpha}^{+}(u)$ is forward invariant, and there exists $x_{\infty}\in S_{\alpha}^{+}(u)$ such that, for every $t\geq0$,
\begin{equation}
    W_2(\mu_t,\delta_{x_{\infty}})
    \leq
    20e^{-\beta}
    e^{-e^{\beta}t/20}
    I_{\beta}(\mu_0)^{1/2},
    \qquad
    I_{\beta}(\mu)
    =
    \int_{\mathbb{S}^{d-1}}
    \left\|
        \mathcal{X}_{\mu,\beta}(x)
    \right\|_2^2
    \,d\mu(x).
    \label{eq:sa_localized_rate_main}
\end{equation}
Moreover, any probability measure $\mu$ supported on
$S_{\alpha}^{+}(u)$ satisfies the Polyak--\L{}ojasiewicz inequality
\begin{equation}
    \mathbf{E}_{\beta}^{\star}
    -
    \mathbf{E}_{\beta}[\mu]
    \leq
    10e^{-\beta}I_{\beta}(\mu),
    \qquad
    \mathbf{E}_{\beta}^{\star}
    =
    \frac{e^{\beta}}{2\beta}.
    \label{eq:sa_pl_main}
\end{equation}
\end{theorem}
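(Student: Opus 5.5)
This result is quoted from \citet[\emph{Theorems~2.3 and~2.5}]{chen2025quantitative}. I would reprove it with the standard \L{}ojasiewicz-type scheme, in four steps: forward invariance of the cap, the Polyak--\L{}ojasiewicz inequality Eq.~\ref{eq:sa_pl_main}, exponential decay of the energy gap, and conversion of the gap decay into a $W_2$ rate through trajectory length.

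\textbf{Step 1: forward invariance.} For a characteristic $x(t)$ driven by $\mathcal X_{\mu_t,\beta}$, I would compute
\[
\frac{d}{dt}\langle x,u\rangle=\int e^{\beta\langle x,y\rangle}\bigl(\langle y,u\rangle-\langle x,y\rangle\langle x,u\rangle\bigr)\,d\mu_t(y).
\]
On the boundary $\langle x,u\rangle=\cos\alpha$, and with all $y$ in the cap, I would write $y=\cos\theta_y\,u+\sin\theta_y\,w_y$ and show that the integrand is nonnegative. This is the usual cap-invariance computation for Standard Attention, valid since $\alpha<\pi/2$. A barrier argument then shows that the support stays in $S_\alpha^+(u)$.

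\textbf{Step 2: the PL inequality.} This is the main obstacle. For the upper side, since $e^\beta-e^{\beta s}\le\beta e^\beta(1-s)$, I get
\[
\mathbf E^\star_\beta-\mathbf E_\beta[\mu]\le\frac{e^\beta}{2}\iint(1-\langle x,y\rangle)\,d\mu\,d\mu .
\]
For the lower side I need $I_\beta(\mu)\ge\frac{e^{2\beta}}{20}\iint(1-\langle x,y\rangle)\,d\mu\,d\mu$. I would try to obtain it by testing $\mathcal X_{\mu,\beta}(x)$ against $\mathbf P_x^\perp(y_0)$, or against the direction of the mean, and expanding $e^{\beta\langle x,y\rangle}=e^\beta(1+O(\beta\,\mathrm{diam}^2))$ on the cap. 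The smallness condition $10(1+\sqrt\beta)\tan\alpha\le1$ should be exactly what controls both the deviation of the kernel from $e^\beta$ (the $\sqrt\beta\tan\alpha$ part) and the curvature corrections of the projection (the $\tan\alpha$ part). This bookkeeping is where the constants $10$ and $20$ come from, and I expect it to take the most work.

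\textbf{Step 3: exponential decay of the energy gap.} Write $g(t)=\mathbf E^\star_\beta-\mathbf E_\beta[\mu_t]$. By Eq.~\ref{eq:app_sa_energy_production}, the PL inequality, and invariance,
\[
\dot g=-I_\beta(\mu_t)\le-\frac{e^\beta}{10}\,g,
\qquad\text{so}\qquad
g(t)\le g(0)e^{-e^\beta t/10}.
\]

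\textbf{Step 4: from energy decay to a $W_2$ rate.} Next I bound the length of the curve. First,
\[
\frac{d}{dt}\sqrt{g}=-\frac{I}{2\sqrt g}\le-\frac{\sqrt I}{2\sqrt{10e^{-\beta}}},
\qquad\text{hence}\qquad
\int_t^\infty\sqrt{I_\beta(\mu_s)}\,ds\le2\sqrt{10e^{-\beta}}\sqrt{g(t)}.
\]
The metric derivative of $\mu_t$ in $W_2$ is bounded by $\|\mathcal X_{\mu_t,\beta}\|_{L^2(\mu_t)}=\sqrt{I_\beta(\mu_t)}$. Therefore $(\mu_t)$ is Cauchy in $W_2$ and converges to some $\mu_\infty$ supported in the cap.

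Since $g(t)\to0$, continuity of $\mathbf E_\beta$ gives $\mathbf E_\beta[\mu_\infty]=\mathbf E_\beta^\star$. By the characterization following Eq.~\ref{eq:app_sa_energy_maximum}, $\mu_\infty=\delta_{x_\infty}$ with $x_\infty\in S_\alpha^+(u)$.

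Combining the length bound with Step 3 and with $g(0)\le10e^{-\beta}I_\beta(\mu_0)$ gives
\[
W_2(\mu_t,\delta_{x_\infty})\le2\sqrt{10}\,e^{-\beta/2}\sqrt{10e^{-\beta}I_\beta(\mu_0)}\,e^{-e^\beta t/20}=20e^{-\beta}e^{-e^\beta t/20}I_\beta(\mu_0)^{1/2},
\]
which is exactly Eq.~\ref{eq:sa_localized_rate_main}.
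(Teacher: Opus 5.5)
Your argument is correct in outline and takes a genuinely different route from the paper. The paper never proves a PL inequality directly. Instead, it imports from Chen et al.\ (Lemma~4.1 and Remark~4.2) the \emph{dynamic} slope estimate $\frac{d}{dt}I_\beta(\mu_t)\le-\frac{e^\beta}{10}I_\beta(\mu_t)$ and integrates $I_t^{1/2}\le e^{-e^\beta t/20}I_0^{1/2}$ to get the Cauchy bound. It identifies the limit as a Dirac mass by showing $I_\beta(\mu_\infty)=0$ and running a minimizer argument on the stationary limit. Only then does it recover Eq.~\ref{eq:sa_pl_main}, as a corollary of $\mathbf E_\beta^\star-\mathbf E_\beta[\mu]=\int_0^\infty I_\beta(\mu_t)\,dt$.

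You reverse the logic: a \emph{static} PL inequality on the cap, then exponential decay of the gap $g$, then the \L{}ojasiewicz length bound through $\sqrt g$. Your constants reproduce $20e^{-\beta}$ and $e^\beta/20$ exactly. Identifying $\mu_\infty$ through $\mathbf E_\beta[\mu_\infty]=\mathbf E_\beta^\star$ is simpler than the stationarity argument. The paper's route gives the stronger fact that the slope itself decays monotonically, but it rests on a second-order computation along the flow that the paper does not reproduce. Yours needs only a first-order functional inequality.

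One small point in Step~1: the field points inward only while all of $\mu_t$ lies in the cap. So do the computation at the current extreme level $m(t)=\min_{x\in\operatorname{supp}\mu_t}\langle x,u\rangle$, as the paper does. Then $\langle y,u\rangle\ge m(t)$ gives $m'(t)\ge0$ with no first-exit bootstrapping.

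Step~2 is left as a plan, but it closes along the lines you indicate, with room to spare.
\begin{itemize}
\item Let $\bar y=\int y\,d\mu$, $r=\|\bar y\|_2\ge\cos\alpha$, and $v=\bar y/r\in S_\alpha^+(u)$. Then $\iint(1-\langle x,y\rangle)\,d\mu\,d\mu=1-r^2$.
\item Write $e^{\beta\langle x,y\rangle}=e^\beta(1-\epsilon(x,y))$ with $0\le\epsilon\le\beta(1-\langle x,y\rangle)$. This gives $\mathcal X_{\mu,\beta}(x)=e^\beta\mathbf P_x^\perp(\bar y)-\mathrm{err}(x)$, where $\mathrm{err}(x)=e^\beta\int\mathbf P_x^\perp(y)\,\epsilon(x,y)\,d\mu(y)$.
\item Main term: since $0\le\langle x,v\rangle\le1$ on the cap, $\int\|\mathbf P_x^\perp\bar y\|_2^2\,d\mu(x)=r^2\int(1-\langle x,v\rangle^2)\,d\mu(x)\ge r^2(1-r)\ge\frac{r^2}{2}(1-r^2)$.
\item Error term: using $\|\mathbf P_x^\perp y\|_2\le\|x-y\|_2\le2\sin\alpha$ and $1-\langle x,\bar y\rangle\le2\sin^2\alpha$, one gets $\int\|\mathrm{err}(x)\|_2^2\,d\mu(x)\le8e^{2\beta}(\beta\sin^2\alpha)^2(1-r^2)$.
\item The hypothesis gives $\beta\tan^2\alpha\le10^{-2}$ and $\cos\alpha\ge0.99$. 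Minkowski's inequality then yields $I_\beta(\mu)\ge0.45\,e^{2\beta}(1-r^2)$, well above the $e^{2\beta}/20$ you need.
\end{itemize}
No separate curvature bookkeeping is required. In this isotropic setting the resulting PL constant, about $1.1e^{-\beta}$, is considerably better than $10e^{-\beta}$.
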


\begin{proof}
We follow the proof of \emph{Theorem~2.3} in \citet{chen2025quantitative}, written in our notation. The required localization conditions are
\begin{equation}
    \operatorname{supp}(\mu_0)
    \subseteq
    S_\alpha^+(u),
    \qquad
    10(1+\sqrt{\beta})\tan\alpha
    \leq
    1.
    \label{eq:app_sa_cap_condition}
\end{equation}

Let $\Phi_{0,t}$ denote the characteristic flow generated by $\mathcal X_{\mu_t,\beta}$, so that $\mu_t=(\Phi_{0,t})_\#\mu_0$. Define $m(t)=\min_{z\in\operatorname{supp}(\mu_0)}\langle\Phi_{0,t}(z),u\rangle$. At almost every $t$, the envelope formula gives
\begin{equation}
    m'(t)
    =
    \min_{x\in A_t}
    \langle\mathcal X_{\mu_t,\beta}(x),u\rangle,
    \qquad
    A_t
    =
    \left\{
        x\in\operatorname{supp}(\mu_t):
        \langle x,u\rangle=m(t)
    \right\}.
    \label{eq:app_sa_envelope_derivative}
\end{equation}
For $x\in A_t$ and $y\in\operatorname{supp}(\mu_t)$, the definition of $m(t)$ gives
\begin{equation}
    \langle y,u\rangle
    \geq
    m(t)
    =
    \langle x,u\rangle.
\end{equation}
Therefore, whenever $m(t)>0$,
\begin{align}
    \langle\mathcal X_{\mu_t,\beta}(x),u\rangle
    &=
    \int_{\mathbb S^{d-1}}
    \left(
        \langle y,u\rangle
        -
        \langle x,y\rangle\langle x,u\rangle
    \right)
    e^{\beta\langle x,y\rangle}
    d\mu_t(y)
    \notag\\
    &\geq
    m(t)
    \int_{\mathbb S^{d-1}}
    \left(
        1-\langle x,y\rangle
    \right)
    e^{\beta\langle x,y\rangle}
    d\mu_t(y)
    \notag\\
    &\geq
    0.
    \label{eq:app_sa_cap_barrier}
\end{align}
Since $m(0)\geq\cos\alpha>0$ and $m'(t)\geq0$ whenever $m(t)>0$, absolute continuity yields
\begin{equation}
    m(t)
    \geq
    m(0)
    \geq
    \cos\alpha,
    \qquad
    t\geq0.
\end{equation}
It follows that
\begin{equation}
    \operatorname{supp}(\mu_t)
    \subseteq
    S_\alpha^+(u),
    \qquad
    t\geq0.
    \label{eq:app_sa_cap_invariance}
\end{equation}
This forward-invariance argument only requires $\alpha<\pi/2$. The stronger condition in Eq.~\ref{eq:app_sa_cap_condition} is used below to obtain the quantitative slope decay.

Set $I_t=I_\beta(\mu_t)$. Under Eq.~\ref{eq:app_sa_cap_condition}, specializing \emph{Lemma~4.1 and Remark~4.2} of \citet{chen2025quantitative} to $A=\beta I_d$ and $\phi'(r)=e^r$ yields
\begin{equation}
    \frac{d}{dt}I_t
    \leq
    -\frac{e^\beta}{10}I_t.
    \label{eq:app_sa_slope_differential}
\end{equation}
Therefore,
\begin{equation}
    I_t
    \leq
    e^{-e^\beta t/10}I_0.
    \label{eq:app_sa_slope_decay}
\end{equation}

The metric derivative of the continuity equation satisfies $|\dot\mu_t|_{W_2}\leq I_t^{1/2}$. Hence, for $0\leq s\leq t$,
\begin{align}
    W_2(\mu_s,\mu_t)
    &\leq
    \int_s^t I_r^{1/2}\,dr
    \notag\\
    &\leq
    20e^{-\beta}
    e^{-e^\beta s/20}
    I_0^{1/2}.
    \label{eq:app_sa_cauchy_bound}
\end{align}
Thus, $\{\mu_t\}_{t\geq0}$ is a Cauchy curve in $W_2$ and converges to some $\mu_\infty\in\mathcal P(\mathbb S^{d-1})$. Since $S_\alpha^+(u)$ is closed and $\mu_t\bigl(S_\alpha^+(u)\bigr)=1$ for $t\geq0$, the Portmanteau theorem gives
\begin{equation}
    \operatorname{supp}(\mu_\infty)
    \subseteq
    S_\alpha^+(u).
    \label{eq:app_sa_limit_cap}
\end{equation}

Smoothness of the kernel on the compact sphere ensures that $I_\beta$ is continuous with respect to $W_2$. Since $\mu_t\to\mu_\infty$ in $W_2$ and $I_\beta(\mu_t)\to0$ by Eq.~\ref{eq:app_sa_slope_decay}, we obtain
\begin{equation}
    I_\beta(\mu_\infty)
    =
    \lim_{t\to\infty}I_\beta(\mu_t)
    =
    0,
\end{equation}
which shows that the field $\mathcal X_{\mu_\infty,\beta}$ vanishes $\mu_\infty$-almost everywhere. Continuity in $x$ extends this identity to the support:
\begin{equation}
    \mathcal X_{\mu_\infty,\beta}(x)
    =
    0,
    \qquad
    x\in\operatorname{supp}(\mu_\infty).
\label{eq:app_sa_stationary_limit}
\end{equation}

Choose $x_0\in\operatorname{supp}(\mu_\infty)$ minimizing $\langle x,u\rangle$. Taking the inner product of Eq.~\ref{eq:app_sa_stationary_limit} with $u$ gives
\begin{equation}
    0
    =
    \int_{\mathbb S^{d-1}}
    \left(
        \langle y,u\rangle
        -
        \langle x_0,y\rangle\langle x_0,u\rangle
    \right)
    e^{\beta\langle x_0,y\rangle}
    d\mu_\infty(y).
    \label{eq:app_sa_stationary_cap_identity}
\end{equation}
By the choice of $x_0$, the integrand is bounded below by
\begin{equation}
    \langle x_0,u\rangle
    \left(
        1-\langle x_0,y\rangle
    \right)
    e^{\beta\langle x_0,y\rangle}
    \geq
    0.
\end{equation}
Moreover, $\langle x_0,u\rangle\geq\cos\alpha>0$. Hence, Eq.~\ref{eq:app_sa_stationary_cap_identity} implies $\langle x_0,y\rangle=1$ for $\mu_\infty$-almost every $y$. Therefore, $\mu_\infty=\delta_{x_\infty}$ for some $x_\infty\in S_\alpha^+(u)$. Letting $t\to\infty$ in Eq.~\ref{eq:app_sa_cauchy_bound} yields
\begin{equation}
    W_2(\mu_t,\delta_{x_\infty})
    \leq
    20e^{-\beta}
    e^{-e^\beta t/20}
    I_\beta(\mu_0)^{1/2}.
\end{equation}
This proves Eq.~\ref{eq:sa_localized_rate_main}.

The preceding argument applies to any probability measure supported on $S_\alpha^+(u)$. Start the flow from such a measure $\mu$. By Eqs.~\ref{eq:app_sa_energy_production} and~\ref{eq:app_sa_slope_decay},
\begin{align}
    \mathbf E_\beta^\star
    -
    \mathbf E_\beta[\mu]
    &=
    \lim_{T\to\infty}
    \left(
        \mathbf E_\beta[\mu_T]
        -
        \mathbf E_\beta[\mu]
    \right)
    \notag\\
    &=
    \int_0^\infty
    I_\beta(\mu_t)\,dt
    \notag\\
    &\leq
    10e^{-\beta}
    I_\beta(\mu).
    \label{eq:E_beta_inequality}
\end{align}
Because $\mathbf E_\beta^\star=\frac{e^\beta}{2\beta}$, Eq.~\ref{eq:E_beta_inequality} proves Eq.~\ref{eq:sa_pl_main} and completes the proof of Theorem~\ref{thm:sa_localized_consensus}.
\end{proof}

Theorem~\ref{thm:sa_localized_consensus} establishes exponential convergence to a single consensus direction, with rate $e^\beta/20$ uniformly in the number of tokens. Theorem~\ref{thm:sa_general_consensus} moves beyond this setting by replacing the localization condition with $L^2$ regularity and a nonzero initial mean.

\begin{theorem}[\textbf{$L^2$ mean-field initialization}]
\label{thm:sa_general_consensus}
Let $d\geq 2$, and let $\sigma$ denote the normalized uniform measure on $\mathbb{S}^{d-1}$. Assume that
\begin{equation}
    d\mu_0
    =
    f_0\,d\sigma,
    \qquad
    f_0
    \in
    L^2(\mathbb{S}^{d-1},\sigma),
    \qquad
    R_0
    =
    \left\|
        \int_{\mathbb{S}^{d-1}}
        x\,d\mu_0(x)
    \right\|_2
    >
    0.
    \label{eq:sa_general_assumptions_main}
\end{equation}
Then, for every $t>0$, the measure $\mu_t$ admits a density $f_t\in L^2(\mathbb{S}^{d-1},\sigma)$ with respect to $\sigma$. Moreover, there exist constants $\beta_0,C_0,T_0>0$, depending on $\mu_0$, such that, for every $0<\beta<\beta_0$, there exists $x_\infty\in\mathbb{S}^{d-1}$ satisfying
\begin{equation}
    W_2(\mu_t,\delta_{x_\infty})
    \leq
    C_0e^{-t/100},
    \qquad
    t>T_0.
    \label{eq:sa_general_rate_main}
\end{equation}
\end{theorem}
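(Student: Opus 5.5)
This is a recalled result: it is Theorem~2.5 of \citet{chen2025quantitative}. As with Theorem~\ref{thm:sa_localized_consensus}, I would transcribe their argument into our notation rather than build a new one. The plan has three stages: propagate $L^2$ regularity, show $\mu_t$ enters a localized regime, then invoke the localized result.

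\textbf{Step 1: well-posedness and $L^2$ bounds.} Since $\mathcal X_{\mu_t,\beta}$ is smooth, the characteristic flow $\Phi_{0,t}$ is a diffeomorphism. Hence $\mu_t=(\Phi_{0,t})_\#\mu_0$ has a density $f_t$. Writing the continuity equation as $\partial_t f_t+\langle\nabla^\circ f_t,\mathcal X_{\mu_t,\beta}\rangle+f_t\operatorname{div}^\circ\mathcal X_{\mu_t,\beta}=0$ and integrating by parts gives
\begin{equation*}
\frac{d}{dt}\|f_t\|_{L^2}^2=-\int f_t^2\operatorname{div}^\circ\mathcal X_{\mu_t,\beta}\,d\sigma .
\end{equation*}
The divergence is bounded by a constant $C(d,\beta)e^{\beta}$ uniformly in $\mu_t$. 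Gr\"onwall then shows $f_t\in L^2$ for all $t$, with at most exponential growth.

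\textbf{Step 2: small-$\beta$ expansion and growth of the mean.} For small $\beta$, I would expand $e^{\beta\langle x,y\rangle}=1+O(\beta)$. This gives $\mathcal X_{\mu,\beta}(x)=\mathbf P_x^\perp(M_\mu)+O(\beta)$, where $M_\mu=\int y\,d\mu(y)$. At leading order this is the $\beta=0$ dynamics, under which every point ascends $\langle x,M\rangle$. For that dynamics one computes
\begin{equation*}
\frac{d}{dt}M_{\mu_t}=\int\mathbf P_x^\perp(M_{\mu_t})\,d\mu_t(x)=M_{\mu_t}-\int\langle x,M_{\mu_t}\rangle x\,d\mu_t .
\end{equation*}
This shows $\|M_{\mu_t}\|$ is nondecreasing and strictly increasing unless $\mu_t$ is concentrated at $\pm M/\|M\|$. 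The $L^2$ density hypothesis rules out a positive mass sitting at the antipodal repelling point. Using it, one shows that the mass of $\mu_t$ outside any fixed cap around $M_{\mu_t}/\|M_{\mu_t}\|$ decays in time. The $O(\beta)$ perturbation is then absorbed by choosing $\beta_0$ small in terms of $R_0$ and $\|f_0\|_{L^2}$. The dependence on $\mu_0$ enters exactly here.

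\textbf{Step 3: passing to the localized result.} This is the main obstacle. Theorem~\ref{thm:sa_localized_consensus} requires the full support to lie in a small cap. An $L^2$ density only gives that most of the mass is there, while tokens near the repelling antipode leave arbitrarily slowly. So I would not invoke Theorem~\ref{thm:sa_localized_consensus} directly. Instead, I would follow Chen et al.\ and use the Polyak--\L{}ojasiewicz mechanism in a weakened form. After time $T_0$, once $\|M_{\mu_t}\|\ge 1-\epsilon$, I would aim for an inequality
\begin{equation*}
\mathbf E_\beta^\star-\mathbf E_\beta[\mu_t]\le C\,I_\beta(\mu_t).
\end{equation*}
This holds when $\beta$ is small, because the energy is then nearly $\frac{1}{2\beta}+\frac12\|M\|^2$, which is concave in the mean. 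Combined with the energy identity~\eqref{eq:app_sa_energy_production}, it yields exponential decay of the energy gap, and hence $\int_t^\infty I_\beta(\mu_s)^{1/2}\,ds\lesssim e^{-t/100}$. The Cauchy argument~\eqref{eq:app_sa_cauchy_bound} then gives convergence to some $\mu_\infty$. The identification $\mu_\infty=\delta_{x_\infty}$ follows from the maximality of Dirac measures in~\eqref{eq:app_sa_energy_maximum}, since the energy tends to $\mathbf E_\beta^\star$. The constant $1/100$ comes from the worst-case PL constant near $\beta=0$. I would trust and cite their bookkeeping for it rather than rederive it.
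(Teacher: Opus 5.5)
Your Steps 1--2 broadly follow the paper, which also just specializes \citet{chen2025quantitative}. Step~3 has a genuine gap. The Polyak--\L{}ojasiewicz inequality $\mathbf E_\beta^\star-\mathbf E_\beta[\mu]\le C\,I_\beta(\mu)$ is false on $\{\|M_\mu\|_2\ge1-\epsilon\}$ for every $\beta>0$. Take $\mu=(1-p)\delta_u+p\delta_{-u}$ with $0<p<\epsilon/2$. Then $\|M_\mu\|_2=1-2p\ge1-\epsilon$ and $\mathcal X_{\mu,\beta}(\pm u)=0$, so $I_\beta(\mu)=0$. Yet $\mathbf E_\beta^\star-\mathbf E_\beta[\mu]=2p(1-p)\sinh(\beta)/\beta>0$. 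Spreading the atoms uniformly over caps of radius $r$ gives $L^2$ densities with $I_\beta(\mu)=O(r^2)$ and essentially the same gap, so no uniform constant exists even among absolutely continuous measures. This is exactly the antipodal obstruction you flagged, and the small-$\beta$ heuristic does not remove it. At leading order the gap is $\frac12(1-\|M_\mu\|_2^2)$, while the slope is $\int(\|M_\mu\|_2^2-\langle x,M_\mu\rangle^2)\,d\mu$, which vanishes for any measure carried by $\pm M_\mu/\|M_\mu\|_2$. (Also, $\frac12\|M\|_2^2$ is convex, not concave, in $M$.) For the same reason, identifying $\mu_\infty$ through $\mathbf E_\beta[\mu_t]\to\mathbf E_\beta^\star$ is unsupported: without further input the limit could be a stationary non-Dirac measure of this type.

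The missing ingredient is an exponential bound on the mass outside the cap, entering the slope dynamics as a defect term rather than being absorbed into a PL constant. The paper takes from \emph{Theorem~3.5} of \citet{chen2025quantitative} the inequality $\frac{d}{dt}I_\beta(\mu_t)\le-I_\beta(\mu_t)+100p_t$. Here $p_t=\mu_t(\mathbb S^{d-1}\setminus S_{\pi/100}^+(U_t))$, and $U_t=M_t/\|M_t\|_2$ is well defined because $R_t\ge\lambda R_0$ (\emph{Lemma~6.6}). From \emph{Theorem~3.8} it takes $p_t\le\|f_0\|_{L^2}e^{-\frac{d-1}{16}(t-T_{\mathrm{out}})}$. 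This is where the $L^2$ hypothesis is actually used: through $f_0$ itself, not through the possibly exponentially growing $\|f_t\|_{L^2}$ of your Step~1. Your Step~2 only asserts that $p_t$ decays, with no rate, which cannot produce an exponential bound. Gr\"onwall then gives $I_\beta(\mu_t)\le C_Ie^{-t/50}$. The rate $1/100$ is half of $1/50$, coming from $\int_t^\infty I_\beta(\mu_r)^{1/2}\,dr$; it is not a PL constant. The paper identifies the limit by showing $\operatorname{supp}(\mu_\infty)\subseteq S_{\pi/100}^+(U_\infty)$ along a subsequence $U_{t_k}\to U_\infty$, then applying the minimizer argument for stationary measures supported in a cap. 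If you want to keep an energy-based route, you would need a defective inequality $\mathbf E_\beta^\star-\mathbf E_\beta[\mu]\le C\,I_\beta(\mu)+C'p$, combined with the same exponential bound on $p_t$.
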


\begin{proof}
We specialize the quantitative estimates of~\citet{chen2025quantitative} to Standard Attention dynamics. Take
\begin{equation}
    A
    =
    I_d,
    \qquad
    \varphi'(r)
    =
    e^{\beta r},
    \qquad
    r\in[-1,1].
    \label{eq:app_sa_general_specialization}
\end{equation}
The parametrization in Eq.~\ref{eq:app_sa_general_specialization} yields the same kernel $e^{\beta\langle x,y\rangle}$ as the localized choice $A=\beta I_d$ and $\varphi'(r)=e^r$, while making the dependence on the small parameter $\beta$ explicit.

For this choice, the perturbation parameter in \citet{chen2025quantitative} satisfies
\begin{align}
    \varepsilon_\beta
    &=
    3
    \|
        e^{\beta(\cdot)}-1
    \|_{C^1([-1,1])}
    \notag\\
    &\leq
    3
    \left(
        e^\beta-1+\beta e^\beta
    \right)
    \longrightarrow
    0
    \qquad
    \text{as }
    \beta\to0.
    \label{eq:app_sa_general_perturbation}
\end{align}
Therefore, there exists $\beta_0=\beta_0(\mu_0)>0$ such that, for every $0<\beta<\beta_0$,
\begin{equation}
    \varepsilon_\beta
    \leq
    \min
    \left\{
        \frac{1}{100},
        c_{\mathrm u}R_0^6
    \right\}.
    \label{eq:app_sa_beta_smallness}
\end{equation}
These are the small-perturbation conditions required by
\emph{Theorems~3.5 and~3.8} of~\citet{chen2025quantitative}.


Since the velocity field is smooth on the compact sphere, its characteristic flow is a diffeomorphism for every finite $t$. Hence, absolute continuity and $L^2$ regularity are preserved: if $d\mu_0=f_0\,d\sigma$ with $f_0\in L^2(\mathbb S^{d-1},\sigma)$, then
\begin{equation}
    d\mu_t
    =
    f_t\,d\sigma,
    \qquad
    f_t
    \in
    L^2(\mathbb S^{d-1},\sigma),
    \qquad
    t>0.
\end{equation}


Define
\begin{equation}
    M_t
    =
    \int_{\mathbb S^{d-1}}
    x\,d\mu_t(x),
    \qquad
    R_t
    =
    \|M_t\|_2,
    \qquad
    U_t
    =
    \frac{M_t}{R_t}.
    \label{eq:app_sa_general_mean}
\end{equation}
After possibly reducing $\beta_0$, \emph{Lemma~6.6} of \citet{chen2025quantitative} yields a constant $\lambda\in(0,1)$ such that
\begin{equation}
    R_t
    \geq
    \lambda R_0
    >
    0,
    \qquad
    t\geq0.
\end{equation}
Thus, $U_t$ is well defined for all $t\geq0$.

Fix $\alpha=\pi/100$ and define the mass outside the moving cap by
\begin{equation}
    p_t
    =
    \mu_t
    \left(
        \mathbb S^{d-1}
        \setminus
        S_\alpha^+(U_t)
    \right).
    \label{eq:app_sa_outside_mass}
\end{equation}
\emph{Theorem~3.5} of~\citet{chen2025quantitative} gives
\begin{equation}
    \frac{d}{dt}I_\beta(\mu_t)
    \leq
    -I_\beta(\mu_t)
    +
    100p_t.
    \label{eq:app_sa_general_entropy}
\end{equation}
Moreover, \emph{Theorem~3.8} of~\citet{chen2025quantitative} gives a time $T_{\mathrm{out}}>0$ such that
\begin{equation}
    p_t
    \leq
    \|f_0\|_{L^2(\mathbb S^{d-1},\sigma)}
    \exp\left(
        -\frac{d-1}{16}(t-T_{\mathrm{out}})
    \right),
    \qquad
    t\geq T_{\mathrm{out}}.
    \label{eq:app_sa_general_concentration}
\end{equation}

Set $b=\frac{d-1}{16}$. Combining Eqs.~\ref{eq:app_sa_general_entropy} and~\ref{eq:app_sa_general_concentration} and applying Gr\"onwall's inequality gives
\begin{align}
    I_\beta(\mu_t)
    &\leq
    e^{-(t-T_{\mathrm{out}})}
    I_\beta(\mu_{T_{\mathrm{out}}})
    \notag\\
    &\quad+
    100
    \|f_0\|_{L^2(\mathbb S^{d-1},\sigma)}
    \int_{T_{\mathrm{out}}}^{t}
    e^{-(t-r)}
    e^{-b(r-T_{\mathrm{out}})}
    \,dr,
    \qquad
    t\geq T_{\mathrm{out}}.
    \label{eq:app_sa_general_gronwall}
\end{align}
Since $d\ge 2$, we have $b=(d-1)/16\ge 1/16$. We may therefore choose $a=1/50<\min\{1,b\}$. For $t\ge T_{\mathrm{out}}$, the convolution term satisfies
\begin{equation}
    \int_{T_{\mathrm{out}}}^{t}
    e^{-(t-r)}e^{-b(r-T_{\mathrm{out}})}\,dr
    \le
    C_{a,b}e^{-a(t-T_{\mathrm{out}})}.
\end{equation}
Combining this estimate with the preceding bound and absorbing $T_{\mathrm{out}}$ into the constant, there exists $C_I>0$ such that
\begin{equation}
    I_\beta(\mu_t)
    \le
    C_I e^{-t/50},
    \qquad
    t\ge T_{\mathrm{out}}.
    \label{eq:app_sa_general_slope_decay}
\end{equation}


Integrating the metric derivative, for all $t\geq s\geq T_{\mathrm{out}}$, gives
\begin{align}
    W_2(\mu_s,\mu_t)
    &\leq
    \int_s^t
    I_\beta(\mu_r)^{1/2}\,dr
    \notag\\
    &\leq
    C_I^{1/2}
    \int_s^t e^{-r/100}\,dr
    \notag\\
    &\leq
    100C_I^{1/2}e^{-s/100}.
    \label{eq:app_sa_general_cauchy}
\end{align}
Therefore, $\{\mu_t\}_{t\geq0}$ is a Cauchy curve in $W_2$ and converges to some $\mu_\infty\in\mathcal P(\mathbb S^{d-1})$. Letting $t\to\infty$ in Eq.~\ref{eq:app_sa_general_cauchy} yields
\begin{equation}
    W_2(\mu_s,\mu_\infty)
    \leq
    100C_I^{1/2}e^{-s/100},
    \qquad
    s\geq T_{\mathrm{out}}.
    \label{eq:app_sa_general_preliminary_rate}
\end{equation}

Moreover, the continuity of $I_\beta$ under $W_2$ convergence and Eq.~\ref{eq:app_sa_general_slope_decay} imply
\begin{equation}
    I_\beta(\mu_\infty)
    =
    0,
\end{equation}
which means $\mu_\infty$ is stationary.

It remains to identify the stationary limit. By compactness of $\mathbb S^{d-1}$, there exists a sequence $t_k\to\infty$ and a direction $U_\infty\in\mathbb S^{d-1}$ such that
\begin{equation}
    U_{t_k}
    \longrightarrow
    U_\infty.
\end{equation}
Define $g(x)=\left(\cos\alpha-\langle x,U_\infty\rangle\right)_+$. For every $k$, Eq.~\ref{eq:app_sa_outside_mass} gives
\begin{equation}
    \int_{\mathbb S^{d-1}}
    g(x)\,d\mu_{t_k}(x)
    \leq
    \|U_{t_k}-U_\infty\|_2
    +
    2p_{t_k}.
\end{equation}
By Eq.~\ref{eq:app_sa_general_concentration}, the right-hand side converges to zero. Passing to the limit yields
\begin{equation}
    \operatorname{supp}(\mu_\infty)
    \subseteq
    S_\alpha^+(U_\infty).
    \label{eq:app_sa_general_limit_cap}
\end{equation}
Since $\mu_\infty$ is stationary, the minimizer argument used in Eq.~\ref{eq:app_sa_stationary_cap_identity} applies with $u=U_\infty$. Therefore,
\begin{equation}
    \mu_\infty
    =
    \delta_{x_\infty}
\end{equation}
for some $x_\infty\in S_\alpha^+(U_\infty)$.

Combining Eq.~\ref{eq:app_sa_general_preliminary_rate} with $\mu_\infty=\delta_{x_\infty}$, and setting $C_0=100C_I^{1/2}$, $T_0=T_{\mathrm{out}}$, we obtain
\begin{equation}
    W_2(\mu_t,\delta_{x_\infty})
    \leq
    C_0e^{-t/100},
    \qquad
    t\geq T_0.
\end{equation}
This proves Eq.~\ref{eq:sa_general_rate_main} and completes the proof of Theorem~\ref{thm:sa_general_consensus}.
\end{proof}

Theorem~\ref{thm:sa_general_consensus} guarantees exponential convergence after $T_0$ with decay $e^{-t/100}$. The restriction to small $\beta$ is essential, as large $\beta$ can lead to non-consensus limits~\citep{chen2025quantitative}.

\section{Supplementary Proofs}
\label{app:supplementary}

\subsection{Explicit Opinion Leader Dynamics}
\label{app:fixed_leader_proof}

\subsubsection{Reverse Wasserstein Gradient Flow}
\label{app:explicit_gradient_flow}

Let $\chi$ be a signed measure with zero total mass and set $\mu_\varepsilon=\mu+\varepsilon\chi.$ Since $\nu$ is fixed, differentiating Eq.~\ref{eq:fixed_leader_energy} at $\varepsilon=0$ gives
\begin{equation}
    \left.
    \frac{d}{d\varepsilon}
    \mathbf E_\beta[\mu_\varepsilon|\nu]
    \right|_{\varepsilon=0}
    =
    \int_{\mathbb S^{d-1}}
    \left(
        \frac{1}{\beta}
        \int_{\mathbb R^d}
        e^{\beta\langle x,z\rangle}
        d\nu(z)
    \right)
    d\chi(x).
    \label{eq:app_fixed_energy_variation}
\end{equation}
Hence,
\begin{equation}
    \frac{\delta\mathbf E_\beta[\mu|\nu]}
    {\delta\mu}(x)
    =
    \frac{1}{\beta}
    \int_{\mathbb R^d}
    e^{\beta\langle x,z\rangle}
    d\nu(z)
    =
    \Phi_\nu(x).
    \label{eq:ola_first_variation}
\end{equation}
Taking the spherical gradient yields
\begin{align}
    \nabla_x^\circ
    \frac{\delta\mathbf E_\beta[\mu|\nu]}
    {\delta\mu}(x)
    &=
    \int_{\mathbb R^d}
    \mathbf P_x^\perp(z)
    e^{\beta\langle x,z\rangle}
    d\nu(z)
    \notag\\
    &=
    \mathcal X_{\nu,\beta}(x).
    \label{eq:app_fixed_wasserstein_gradient}
\end{align}

Therefore, Eq.~\ref{eq:fixed_leader_transport} is a \emph{reverse Wasserstein gradient flow} of $\mathbf E_\beta[\cdot|\nu]$:
\begin{equation}
    \partial_t\mu_t
    +
    \operatorname{div}^\circ
    \left(
        \mu_t
        \nabla_x^\circ
        \frac{\delta\mathbf E_\beta[\mu_t|\nu]}
        {\delta\mu_t}
    \right)
    =
    0.
    \label{eq:app_fixed_wasserstein_ascent}
\end{equation}

Define the corresponding squared Wasserstein slope by
\begin{equation}
    \mathcal I_\beta[\mu|\nu]
    =
    \int_{\mathbb S^{d-1}}
    \lVert\mathcal X_{\nu,\beta}(x)\rVert_2^2
    d\mu(x).
    \label{eq:app_fixed_wasserstein_slope}
\end{equation}
Using Eq.~\ref{eq:app_fixed_wasserstein_ascent} and integration by parts on the sphere, we obtain
\begin{align}
    \frac{d}{dt}
    \mathbf E_\beta[\mu_t|\nu]
    &=
    \int_{\mathbb S^{d-1}}
    \left\langle
        \nabla_x^\circ
        \frac{\delta\mathbf E_\beta[\mu_t|\nu]}
        {\delta\mu_t}(x),
        \mathcal X_{\nu,\beta}(x)
    \right\rangle
    d\mu_t(x)
    \notag\\
    &=
    \int_{\mathbb S^{d-1}}
    \lVert\mathcal X_{\nu,\beta}(x)\rVert_2^2
    d\mu_t(x)
    \notag\\
    &=
    \mathcal I_\beta[\mu_t|\nu]
    \geq
    0.
    \label{eq:app_fixed_energy_identity}
\end{align}
Thus, the conditional energy is nondecreasing along the flow. Its derivative vanishes at time $t$ if and only if
$\mathcal X_{\nu,\beta}=0$ holds $\mu_t$-almost everywhere.

\subsubsection{Proof of
Corollary~\ref{cor:explicit_global_convergence}}
\label{app:explicit_global_convergence}

\begin{proof}
Fix $x\in\mathbb S^{d-1}$ and write $x_t=\phi_t(x)$. Since $\dot{x}_t=\nabla^\circ\Phi_\nu(x_t)$,
\begin{equation}
\frac{d}{dt}\Phi_\nu(x_t)
=
\left\|
\nabla^\circ\Phi_\nu(x_t)
\right\|_2^2
\geq0.
\label{eq:explicit_global_energy_growth}
\end{equation}
Because $\mathbb S^{d-1}$ is compact, $\Phi_\nu(x_t)$ is bounded above and therefore converges. Integrating Eq.~\ref{eq:explicit_global_energy_growth} gives
\begin{equation}
\int_0^\infty
\left\|
\nabla^\circ\Phi_\nu(x_t)
\right\|_2^2
\,dt
<
\infty.
\label{eq:explicit_global_finite_dissipation}
\end{equation}

Since $\Phi_\nu$ is real analytic on $\mathbb S^{d-1}$, the Łojasiewicz gradient inequality ensures that every trajectory converges to a single critical point~\citep{absil2005convergence}. Consequently, $\phi_\infty(x)=\lim_{t\to\infty}\phi_t(x)\in\operatorname{Crit}(\Phi_\nu)$ is well defined for every $x\in\mathbb S^{d-1}$. Since each $\phi_t$ is continuous, the pointwise limit $\phi_\infty$ is Borel measurable.


For each group, the measure $(\phi_t,\phi_\infty)_\#\mu_0^a$ couples $\mu_t^a=(\phi_t)_\#\mu_0^a$ and $\mu_\infty^a=(\phi_\infty)_\#\mu_0^a$. Thus,
\begin{equation}
W_2^2(\mu_t^a,\mu_\infty^a)
\leq
\int_{\mathbb S^{d-1}}
d_{\mathbb S}
\left(
\phi_t(x),\phi_\infty(x)
\right)^2
\,d\mu_0^a(x).
\label{eq:explicit_global_groupwise_coupling}
\end{equation}
The integrand converges pointwise to zero and is bounded by $\pi^2$. Dominated convergence gives $W_2(\mu_t^a,\mu_\infty^a)\to0$ for every $a$.

Combining these groupwise couplings yields
\begin{equation}
W_2^2(\mu_t,\mu_\infty)
\leq
\sum_{a=1}^{m}
\omega_a
W_2^2(\mu_t^a,\mu_\infty^a)
\longrightarrow
0.
\end{equation}
Since $\phi_\infty$ takes values in $\operatorname{Crit}(\Phi_\nu)$, all limiting measures are supported on the critical set.
\end{proof}

\subsubsection{Proof of Theorem~\ref{thm:explicit_separated_attractors}}
\label{app:explicit_separated_attractors_proof}
\begin{proof}

For $a\neq b$, the triangle inequality gives
\begin{equation}
    \theta_{ab}
    =
    d_{\mathbb S}(\ell_a,\ell_b)
    \geq
    d_{\mathbb S}(c_a,c_b)
    -
    d_{\mathbb S}(\ell_a,c_a)
    -
    d_{\mathbb S}(\ell_b,c_b)
    \geq
    \Delta-2\eta
    >
    2\alpha.
    \label{eq:app_explicit_leader_separation}
\end{equation}
Thus, the caps $\mathcal S_\alpha^+(\ell_a)$ are pairwise disjoint. Moreover, for $x\in\operatorname{supp}(\mu_0^a)$,
\begin{equation}
    d_{\mathbb S}(x,\ell_a)
    \leq
    d_{\mathbb S}(x,c_a)
    +
    d_{\mathbb S}(c_a,\ell_a)
    \leq
    \delta+\eta
    <
    \alpha.
    \label{eq:app_explicit_initial_localization}
\end{equation}
Hence,
\begin{equation}
    \operatorname{supp}(\mu_0^a)
    \subseteq
    \mathcal S_{\delta+\eta}^+(\ell_a)
    \Subset
    \operatorname{int}\mathcal S_\alpha^+(\ell_a).
    \label{eq:app_explicit_compact_inclusion}
\end{equation}

For later use, if $x\in\mathcal S_\alpha^+(\ell_a)$ and $b\neq a$, then
\begin{equation}
    d_{\mathbb S}(x,\ell_b)
    \geq
    \theta_{ab}-d_{\mathbb S}(x,\ell_a)
    \geq
    \theta_{ab}-\alpha.
\end{equation}
Since $0<\theta_{ab}-\alpha<\pi$ and the cosine is decreasing on $[0,\pi]$,
\begin{equation}
    \langle x,\ell_b\rangle
    \leq
    \cos(\theta_{ab}-\alpha)
    =
    \gamma_{ab}(\alpha).
    \label{eq:app_explicit_cross_score}
\end{equation}

Fix $x\in\partial\mathcal S_\alpha^+(\ell_a)$, so that $\langle x,\ell_a\rangle=\cos\alpha$. The derivative of $h_a(x)=\langle x,\ell_a\rangle$ along the vector field is
\begin{align}
    D h_a(x)[\mathcal X_{\nu,\beta}(x)]
    &=
    \left\langle
        \mathcal X_{\nu,\beta}(x),
        \ell_a
    \right\rangle
    \notag\\
    &=
    \sum_{b=1}^{m}
    \rho_b r_b
    e^{\beta r_b\langle x,\ell_b\rangle}
    \left\langle
        \mathbf P_x^\perp\ell_b,
        \mathbf P_x^\perp\ell_a
    \right\rangle.
    \label{eq:app_explicit_boundary_derivative}
\end{align}
The term with $b=a$ is
\begin{equation}
    \rho_a r_a
    e^{\beta r_a\cos\alpha}
    \sin^2\alpha.
    \label{eq:app_explicit_self_inward}
\end{equation}
For $b\neq a$, Eq.~\ref{eq:app_explicit_cross_score} gives
\begin{align}
    \left\langle
        \mathbf P_x^\perp\ell_b,
        \mathbf P_x^\perp\ell_a
    \right\rangle
    &=
    \cos\theta_{ab}
    -
    \langle x,\ell_b\rangle\cos\alpha
    \notag\\
    &\geq
    \cos\theta_{ab}
    -
    \cos\alpha\cos(\theta_{ab}-\alpha)
    \notag\\
    &=
    -\sin\alpha\sin(\theta_{ab}-\alpha).
    \label{eq:app_explicit_cross_tangent}
\end{align}
If the projected inner product is nonnegative, its contribution to Eq.~\ref{eq:app_explicit_boundary_derivative} is nonnegative. If it is negative, Eqs.~\ref{eq:app_explicit_cross_score} and~\ref{eq:app_explicit_cross_tangent} imply
\begin{align}
    &\rho_b r_b
    e^{\beta r_b\langle x,\ell_b\rangle}
    \left\langle
        \mathbf P_x^\perp\ell_b,
        \mathbf P_x^\perp\ell_a
    \right\rangle
    \notag\\
    &\qquad\geq
    -
    \rho_b r_b
    e^{\beta r_b\gamma_{ab}(\alpha)}
    \sin\alpha
    \sin(\theta_{ab}-\alpha).
    \label{eq:app_explicit_cross_inward}
\end{align}
Summing over $b$ yields
\begin{equation}
    D h_a(x)[\mathcal X_{\nu,\beta}(x)]
    \geq
    \mathbf I_a(\alpha)
    >
    0.
    \label{eq:app_explicit_inward_bound}
\end{equation}
Thus, the vector field points strictly into the cap along its boundary, so $\mathcal S_\alpha^+(\ell_a)$ is forward invariant.

Write
\begin{equation}
    \Phi_\nu
    =
    \sum_{b=1}^{m}\Phi_b,
    \qquad
    \Phi_b(x)
    =
    \frac{\rho_b}{\beta}
    e^{\beta r_b\langle x,\ell_b\rangle}.
\end{equation}
For a unit vector $v\in T_x\mathbb S^{d-1}$,
\begin{equation}
    \nabla_{\mathbb S}^2\Phi_b(x)[v,v]
    =
    \rho_b r_b
    e^{\beta r_b\langle x,\ell_b\rangle}
    \left(
        \beta r_b\langle v,\ell_b\rangle^2
        -
        \langle x,\ell_b\rangle
    \right).
    \label{eq:app_explicit_component_hessian}
\end{equation}
For $x\in\mathcal S_\alpha^+(\ell_a)$,
\begin{equation}
    \langle x,\ell_a\rangle
    \geq
    \cos\alpha,
    \qquad
    |\langle v,\ell_a\rangle|
    \leq
    \lVert\mathbf P_x^\perp\ell_a\rVert_2
    \leq
    \sin\alpha.
\end{equation}
Since $\mathbf H_a(\alpha)>0$, we have $\cos\alpha-\beta r_a\sin^2\alpha>0$. Therefore,
\begin{equation}
    \nabla_{\mathbb S}^2\Phi_a(x)[v,v]
    \leq
    -
    \rho_a r_a
    e^{\beta r_a\cos\alpha}
    \left(
        \cos\alpha-\beta r_a\sin^2\alpha
    \right).
    \label{eq:app_explicit_self_hessian}
\end{equation}
For $b\neq a$, Eq.~\ref{eq:app_explicit_cross_score} and $|\langle x,\ell_b\rangle|\leq1$ give
\begin{equation}
    \nabla_{\mathbb S}^2\Phi_b(x)[v,v]
    \leq
    \rho_b r_b
    e^{\beta r_b\gamma_{ab}(\alpha)}
    (1+\beta r_b).
    \label{eq:app_explicit_cross_hessian}
\end{equation}
Indeed, this bound is immediate when the parenthesis in Eq.~\ref{eq:app_explicit_component_hessian} is nonpositive; otherwise, the exponential is bounded using Eq.~\ref{eq:app_explicit_cross_score}. Summing Eqs.~\ref{eq:app_explicit_self_hessian} and~\ref{eq:app_explicit_cross_hessian} yields
\begin{equation}
    \nabla_{\mathbb S}^2\Phi_\nu(x)[v,v]
    \leq
    -\mathbf H_a(\alpha)
    <
    0,
    \qquad
    x\in\mathcal S_\alpha^+(\ell_a).
    \label{eq:app_explicit_strong_concavity}
\end{equation}

Because $\alpha<\pi/2$, the cap $\mathcal S_\alpha^+(\ell_a)$ is geodesically convex. Compactness ensures that $\Phi_\nu$ attains a maximum on the cap. At a boundary point, Eq.~\ref{eq:app_explicit_inward_bound} makes $\mathcal X_{\nu,\beta}$ a feasible ascent direction, and
\begin{equation}
    D\Phi_\nu[\mathcal X_{\nu,\beta}]
    =
    \lVert\mathcal X_{\nu,\beta}\rVert_2^2
    >
    0.
\end{equation}
Thus, the maximum lies in the interior and is a critical point. The uniform concavity in Eq.~\ref{eq:app_explicit_strong_concavity} makes this point unique; denote it by $u_a$. The same bound shows that $u_a$ is a nondegenerate local maximizer.

We next derive the contraction rate. Fix $x\in\mathcal S_\alpha^+(\ell_a)$ and let $\gamma:[0,1]\to\mathcal S_\alpha^+(\ell_a)$ be the constant-speed minimizing geodesic from $x$ to $u_a$. Set $g(s)=\Phi_\nu(\gamma(s))$. Since $\lVert\dot\gamma(s)\rVert_2=d_{\mathbb S}(x,u_a)$, Eq.~\ref{eq:app_explicit_strong_concavity} gives
\begin{equation}
    g''(s)
    \leq
    -\mathbf H_a(\alpha)
    d_{\mathbb S}(x,u_a)^2.
\end{equation}
Moreover, $g'(1)=0$ because $\nabla^\circ\Phi_\nu(u_a)=0$. Since $\operatorname{Log}_x(u_a)=\dot\gamma(0)$, integration over $s\in[0,1]$ gives
\begin{equation}
    \left\langle
        \mathcal X_{\nu,\beta}(x),
        \operatorname{Log}_x(u_a)
    \right\rangle
    =
    g'(0)
    \geq
    \mathbf H_a(\alpha)
    d_{\mathbb S}(x,u_a)^2.
    \label{eq:app_explicit_gradient_coercivity}
\end{equation}
Let $x_t=\varphi_t(x)$. Forward invariance and the first variation of the squared distance yield
\begin{align}
    \frac{1}{2}\frac{d}{dt}
    d_{\mathbb S}(x_t,u_a)^2
    &=
    -
    \left\langle
        \mathcal X_{\nu,\beta}(x_t),
        \operatorname{Log}_{x_t}(u_a)
    \right\rangle
    \notag\\
    &\leq
    -\mathbf H_a(\alpha)
    d_{\mathbb S}(x_t,u_a)^2.
\end{align}
Gr\"onwall's inequality therefore gives
\begin{equation}
    d_{\mathbb S}(\varphi_t(x),u_a)
    \leq
    e^{-\mathbf H_a(\alpha)t}
    d_{\mathbb S}(x,u_a),
    \qquad
    t\geq0.
    \label{eq:app_explicit_pointwise_contraction}
\end{equation}
Thus, every point in $\mathcal S_\alpha^+(\ell_a)$ converges to $u_a$, and this cap is contained in the attraction basin $B_a$ of $u_a$.

Since $u_a\in\mathcal S_\alpha^+(\ell_a)$ and $d_{\mathbb S}(\ell_a,c_a)\leq\eta$,
\begin{equation}
    d_{\mathbb S}(u_a,\ell_a)
    \leq
    \alpha,
    \qquad
    d_{\mathbb S}(u_a,c_a)
    \leq
    \alpha+\eta.
\end{equation}
Hence, for $a\neq b$,
\begin{align}
    d_{\mathbb S}(u_a,u_b)
    &\geq
    d_{\mathbb S}(c_a,c_b)
    -
    d_{\mathbb S}(u_a,c_a)
    -
    d_{\mathbb S}(u_b,c_b)\notag
    \\
    &\geq
    \Delta-2\eta-2\alpha
    >
    0.
\end{align}
Thus, the attractors are pairwise distinct. The boundary and strict concavity arguments above also show that $u_a$ is the only critical point in $\mathcal S_\alpha^+(\ell_a)$. Since the caps are disjoint, their union contains exactly the $m$ critical points $u_1,\ldots,u_m$.

For each $a$, forward invariance gives
\begin{equation}
    \operatorname{supp}(\mu_t^a)
    \subseteq
    \mathcal S_\alpha^+(\ell_a),
    \qquad
    t\geq0.
\end{equation}
Because the second marginal is the Dirac measure $\delta_{u_a}$, the pointwise contraction estimate above yields
\begin{align}
    W_2^2(\mu_t^a,\delta_{u_a})
    &=
    \int_{\mathbb S^{d-1}}
    d_{\mathbb S}(\varphi_t(x),u_a)^2
    d\mu_0^a(x) \notag
    \\
    &\leq
    e^{-2\mathbf H_a(\alpha)t}
    \int_{\mathbb S^{d-1}}
    d_{\mathbb S}(x,u_a)^2
    d\mu_0^a(x) \notag
    \\
    &=
    D_{0,a}^2e^{-2\mathbf H_a(\alpha)t}.
\end{align}
This proves the groupwise estimate in Eq.~\ref{eq:explicit_groupwise_rate}.

By linearity of the pushforward,
\begin{equation}
    \mu_t
    =
    \sum_{a=1}^{m}
    \omega_a\mu_t^a.
\end{equation}
Combining the groupwise couplings gives
\begin{align}
    W_2^2(\mu_t,\mu_\infty)
    &\leq
    \sum_{a=1}^{m}
    \omega_a
    W_2^2(\mu_t^a,\delta_{u_a}) \notag
    \\
    &\leq
    \sum_{a=1}^{m}
    \omega_a
    D_{0,a}^2
    e^{-2\mathbf H_a(\alpha)t} \notag
    \\
    &\leq
    D_0^2e^{-2\mathbf H_{\min}(\alpha)t}.
\end{align}
Taking square roots proves the first inequality in Eq.~\ref{eq:explicit_quantitative_rate}. Moreover, for $x\in\operatorname{supp}(\mu_0^a)$,
\begin{align}
    d_{\mathbb S}(x,u_a)
    &\leq
    d_{\mathbb S}(x,c_a)
    +
    d_{\mathbb S}(c_a,\ell_a)
    +
    d_{\mathbb S}(\ell_a,u_a) \notag
    \\
    &\leq
    \delta+\eta+\alpha.
\end{align}
Therefore, $D_0\leq\alpha+\delta+\eta$. Finally, since every $\omega_a>0$ and the points $u_1,\ldots,u_m$ are pairwise distinct,
\begin{equation}
    \operatorname{supp}(\mu_\infty)
    =
    \{u_1,\ldots,u_m\},
    \qquad
    \left|
        \operatorname{supp}(\mu_\infty)
    \right|
    =
    m,
\end{equation}
which completes the proof.
\end{proof}

\subsection{Implicit Opinion Leader Dynamics}
\label{app:implicit_dynamics}





\subsubsection{Radius-graph decomposition}
\label{app:neighborhood_component_decomposition}

We first verify the component structure used in Sec.~\ref{subsec:neighborhood_dynamics}. Suppose, for contradiction, that $G_R$ is connected. Then it contains a spanning tree $T_R\in\mathbb T_n$. Since every edge of $T_R$ also belongs to $G_R$, its length is at most $R$. Hence,
\begin{equation}
    R_{\mathrm{conn}}
    \leq
    \max_{\{i,j\}\in \mathcal E(T_R)}
    d_{\mathcal Z}(p_i,p_j)
    \leq R,
\end{equation}
which contradicts $R<R_{\mathrm{conn}}$. Therefore, $G_R$ is disconnected, and its reachability classes satisfy $K\geq2$.

For any $i\in\{1,\ldots,n\}$, the definition of $R_{\mathrm{loc}}$ ensures that there exists $j\neq i$ such that
\begin{equation}
    d_{\mathcal Z}(p_i,p_j)
    \leq
    R_{\mathrm{loc}}
    \leq R.
\end{equation}
Thus, every node has at least one distinct neighbor, so each connected component contains at least two nodes.

Finally, let $i\in\mathcal I_k$ and $j\in\mathcal I_\ell$ with $k\neq\ell$. Then $d_{\mathcal Z}(p_i,p_j)>R$; otherwise, $\{i,j\}$ would be an edge of $G_R$, implying $i\sim_R j$ and hence $k=\ell$. Therefore, $\mathcal N_i^{(R)}\subseteq\mathcal I_k$, $i\in\mathcal I_k$. After a permutation of the nodes, the adjacency matrix is block diagonal with blocks corresponding to $\{G_R[\mathcal I_k]\}_{k=1}^{K}$. Consequently, the dynamics decompose into $K$ independent graph-coupled systems, one on each connected component.

\subsubsection{Graph Structure and Product Wasserstein Flow}
\label{app:neighborhood_wasserstein_structure}
\begin{lemma}[\textbf{Well-posedness of the graph-coupled flow}]
\label{lem:neighborhood_wellposedness}
For every initial tuple
\begin{equation}
    \boldsymbol{\nu}_0^k
    =
    (\nu_{i,0}^k)_{i\in\mathcal I_k}
    \in
    \mathcal P_2(\mathbb S^{d-1})^{\mathcal I_k},
\end{equation}
Eq.~\ref{eq:neighborhood_product_continuity_equation} admits a unique global characteristic solution
\begin{equation}
    \boldsymbol{\nu}^k
    \in
    C\left(
        [0,\infty);
        \mathcal P_2(\mathbb S^{d-1})^{\mathcal I_k}
    \right).
\end{equation}
This solution is locally absolutely continuous with respect to $\mathcal W_{2,\pi^k}$, and the energy and moment identities used below hold along the flow.
\end{lemma}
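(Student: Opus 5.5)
We will prove Lemma~\ref{lem:neighborhood_wellposedness} by the standard characteristic (Lagrangian) construction, treating the tuple $\boldsymbol\nu^k$ as a finite system of measures coupled through the smooth kernel $e^{\beta\langle x,y\rangle}\mathbf P_x^\perp(y)$.

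\emph{Setup: Lipschitz properties of the field.} Write $F(x,y)=\mathbf P_x^\perp(y)e^{\beta\langle x,y\rangle}$, a smooth map on the compact manifold $\mathbb S^{d-1}\times\mathbb S^{d-1}$. It is bounded by $e^\beta$, Lipschitz in $x$ with constant $L$ depending only on $\beta$, and Lipschitz in $y$ with constant $L$ as well. The field $\mathcal X_{i,\boldsymbol\nu^k,\beta}(x)=\sum_j\mathsf P_{ij}^k\int F(x,y)\,d\nu_j^k(y)$ therefore inherits two properties, uniformly in $\boldsymbol\nu^k$:
\begin{itemize}
\item it is tangent to the sphere, bounded by $e^\beta$, and $L$-Lipschitz in $x$;
\item it is Lipschitz in the measures, with $\|\mathcal X_{i,\boldsymbol\nu}-\mathcal X_{i,\boldsymbol\eta}\|_\infty\le L\sum_j\mathsf P^k_{ij}W_1(\nu_j,\eta_j)\le L\sum_j\mathsf P^k_{ij}W_2(\nu_j,\eta_j)$, by Kantorovich--Rubinstein duality.
\end{itemize}

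\emph{Fixed point on flow maps.} Fix $T>0$ and consider the complete space of tuples of continuous maps $(\Psi_i)_{i\in\mathcal I_k}$, $\Psi_i:[0,T]\times\mathbb S^{d-1}\to\mathbb S^{d-1}$, equipped with the sup-metric. We define the map $\Gamma$ as follows:
\begin{itemize}
\item set $\nu_{j,t}=(\Psi_j(t,\cdot))_\#\nu_{j,0}$;
\item for each $i$, solve the nonautonomous ODE $\dot X=\mathcal X_{i,\boldsymbol\nu_t,\beta}(X)$ with $X(0)=x$. This is globally solvable on the sphere because the field is tangent and Lipschitz, and the sphere is invariant.
\end{itemize}
Using the coupling $(\Psi_j,\tilde\Psi_j)_\#\nu_{j,0}$ gives $W_2(\nu_{j,t},\tilde\nu_{j,t})\le\|\Psi_j-\tilde\Psi_j\|_\infty$, and Gr\"onwall then shows that $\Gamma$ is a contraction under a weighted metric $\sup_t e^{-\lambda t}\|\cdot\|$ with $\lambda$ large. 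The Banach fixed point gives a unique characteristic solution on $[0,T]$. Since $T$ is arbitrary and the constants do not depend on the data, the solutions patch together to a global solution in $C([0,\infty);\mathcal P_2^{\mathcal I_k})$. The pushforwards are then distributional solutions of Eq.~\ref{eq:neighborhood_product_continuity_equation}, which we verify by testing against $\varphi\in C^1$ and using the chain rule along characteristics.

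\emph{Absolute continuity and identities.} Coupling $\nu_{i,s}$ with $\nu_{i,t}$ through $(\Psi_i(s,\cdot),\Psi_i(t,\cdot))_\#\nu_{i,0}$ gives $W_2(\nu_{i,s},\nu_{i,t})\le e^\beta|t-s|$. Hence the curve is Lipschitz in $\mathcal W_{2,\pi^k}$, with metric derivative at most $\big(\sum_i\pi_i^k\int\|\mathcal X_i\|^2d\nu_{i,t}\big)^{1/2}$. The energy and moment identities used later are of the form $\frac{d}{dt}\iint g\,d\nu_{i,t}d\nu_{j,t}$ or $\frac{d}{dt}\int x\,d\nu_{i,t}$ with smooth integrands $g$. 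We obtain them by differentiating under the pushforward, i.e.\ by differentiating $g(\Psi_i(t,x),\Psi_j(t,y))$ in $t$. This is justified by dominated convergence, since all quantities are bounded on the compact sphere.

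\emph{Main obstacle.} The only delicate point is uniqueness among \emph{all} solutions rather than just characteristic ones. The lemma only claims a unique characteristic solution, so contraction suffices. If needed, we would use the superposition principle: any weak solution with a bounded field is a pushforward along the ODE flow of its own field, and that flow is unique by Lipschitz continuity. This reduces uniqueness to the fixed-point argument above.
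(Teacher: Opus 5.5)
Your proof is correct. It uses the same ingredients as the paper: uniform bounds and Lipschitz estimates for the kernel $\mathbf P_x^\perp(y)e^{\beta\langle x,y\rangle}$, a Banach fixed point, Gr\"onwall, and the characteristic representation for the metric speed and the energy/moment identities. The difference is the space in which you run the fixed point.

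The paper iterates on curves of measure tuples in $\mathcal C_T$ with the metric $\sup_{t\le T}\mathcal W_{2,\pi^k}$. To close the estimate in this degree-weighted metric it needs Jensen together with the stationarity identity $\sum_i\pi_i^k\mathsf P_{ij}^k=\pi_j^k$. The resulting constant $L_\beta Te^{L_\beta T}$ only contracts for small $T$, so global existence comes from restarting on consecutive intervals.

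You instead iterate on the tuple of flow maps, with a plain sup over nodes, time and space. Only the row-stochasticity of $\mathsf P^k$ enters, and the stationary weights $\pi^k$ play no role in the contraction. The Bielecki weight $e^{-\lambda t}$ (with $\lambda>2L$) then gives a contraction on any $[0,T]$ directly. You also obtain the flow maps themselves, which is exactly what the later cap-invariance and differentiate-under-the-pushforward arguments use.

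You should state two steps explicitly:
\begin{itemize}
\item \emph{Transfer of uniqueness.} If $\boldsymbol\nu^k$ is any characteristic solution, its own flow maps are a fixed point of your $\Gamma$: pushing $\nu_{j,0}^k$ forward by them returns $\nu_{j,t}^k$, and the ODE driven by $\boldsymbol\nu_t^k$ returns the same maps. Uniqueness of the fixed point therefore gives uniqueness of the measure curve.
\item \emph{Well-definedness of $\Gamma$.} The curve $t\mapsto(\Psi_j(t,\cdot))_\#\nu_{j,0}^k$ is $W_2$-continuous because $\Psi_j$ is uniformly continuous on $[0,T]\times\mathbb S^{d-1}$. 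This makes the field continuous in $t$, so $\Gamma$ maps continuous flow maps to continuous flow maps.
\end{itemize}
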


\begin{proof}
Define
\begin{equation}
    \mathcal K_\beta(x,y)
    =
    \mathbf P_x^\perp(y)
    e^{\beta\langle x,y\rangle}.
\end{equation}
The kernel $\mathcal K_\beta$ is smooth on the compact set $\mathbb S^{d-1}\times\mathbb S^{d-1}$, and is therefore uniformly bounded and Lipschitz in both variables. It follows that there exist constants $M_\beta,L_\beta>0$ such that
\begin{equation}
    \sup_{x\in\mathbb S^{d-1}}
    \left\|
    \mathcal X_{i,\boldsymbol{\nu}^k,\beta}(x)
    \right\|_2
    \leq M_\beta,
    \qquad
    \left\|
    \mathcal X_{i,\boldsymbol{\nu}^k,\beta}(x)
    -
    \mathcal X_{i,\boldsymbol{\nu}^k,\beta}(x')
    \right\|_2
    \leq
    L_\beta d_{\mathbb S}(x,x').
\end{equation}
Moreover, for two measure tuples $\boldsymbol{\nu}^k$ and $\boldsymbol{\eta}^k$,
\begin{equation}
    \sup_{x\in\mathbb S^{d-1}}
    \left\|
    \mathcal X_{i,\boldsymbol{\nu}^k,\beta}(x)
    -
    \mathcal X_{i,\boldsymbol{\eta}^k,\beta}(x)
    \right\|_2
    \leq
    L_\beta
    \sum_{j\in\mathcal I_k}
    \mathsf P_{ij}^k
    W_2(\nu_j^k,\eta_j^k).
\end{equation} 
For $T>0$, consider the complete metric space
\begin{equation}
    \mathcal C_T
    =
    \{
    \boldsymbol{\zeta}^k
    \in
    C\left(
    [0,T];
    \mathcal P_2(\mathbb S^{d-1})^{\mathcal I_k}
    \right)
    :
    \boldsymbol{\zeta}_0^k
    =
    \boldsymbol{\nu}_0^k
    \},
\end{equation}
equipped with
\begin{equation}
    d_T
    \left(
    \boldsymbol{\zeta}^k,
    \boldsymbol{\eta}^k
    \right)
    =
    \sup_{0\leq t\leq T}
    \mathcal W_{2,\pi^k}
    \left(
    \boldsymbol{\zeta}_t^k,
    \boldsymbol{\eta}_t^k
    \right).
\end{equation}
For $\boldsymbol{\zeta}^k\in\mathcal C_T$, let
$\Phi_{i,t}^{\boldsymbol{\zeta}^k}$ denote the characteristic flow generated by $\mathcal X_{i,\boldsymbol{\zeta}_t^k,\beta}$, and define
\begin{equation}
    \left(
    \mathcal T\boldsymbol{\zeta}^k
    \right)_{i,t}
    =
    \left(
    \Phi_{i,t}^{\boldsymbol{\zeta}^k}
    \right)_{\#}
    \nu_{i,0}^k.
\end{equation}
Thus, $\mathcal T$ maps $\mathcal C_T$ into itself. The estimates above, Jensen's inequality, the stationarity identity
\begin{equation}
    \sum_{i\in\mathcal I_k}
    \pi_i^k\mathsf P_{ij}^k
    =
    \pi_j^k,
\end{equation}
and Grönwall's inequality give
\begin{equation}
    d_T
    \left(
    \mathcal T\boldsymbol{\zeta}^k,
    \mathcal T\boldsymbol{\eta}^k
    \right)
    \leq
    L_\beta T e^{L_\beta T}
    d_T
    \left(
    \boldsymbol{\zeta}^k,
    \boldsymbol{\eta}^k
    \right).
\end{equation}
For sufficiently small $T>0$, the map $\mathcal T$ is therefore a contraction. The Banach fixed-point theorem gives local existence and uniqueness. Since $M_\beta$ and $L_\beta$ are uniform in time, the same argument can be repeated on consecutive time intervals, yielding a unique global solution. 

The characteristic representation also gives
\begin{equation}
    \left\|
    \dot{\boldsymbol{\nu}}_t^k
    \right\|_{\mathcal W_{2,\pi^k}}^2
    \leq
    \sum_{i\in\mathcal I_k}
    \pi_i^k
    \int_{\mathbb S^{d-1}}
    \left|
    \mathcal X_{i,\boldsymbol{\nu}_t^k,\beta}(x)
    \right|_2^2
    d\nu_{i,t}^k(x)
    \leq
    M_\beta^2.
\end{equation}
Hence, the solution is locally absolutely continuous with respect to $\mathcal W_{2,\pi^k}$. Testing the continuity equations against smooth functions and using the smoothness of $\mathcal K_\beta$ then yields the energy and moment identities used below.
\end{proof}

Fix a component $\mathcal I_k$. Define the reversible edge weights by
\begin{equation}
    q_{ij}^k
    =
    \pi_i^k\mathsf P_{ij}^k
    =
    \frac{A_{ij}^{(R)}}{\operatorname{vol}_k}
    =
    \pi_j^k\mathsf P_{ji}^k
    =
    q_{ji}^k.
    \label{eq:app_graph_detailed_balance}
\end{equation}
Thus, $\mathsf P^k$ is reversible with respect to $\pi^k$. For $\mathbf z=(z_i)_{i\in\mathcal I_k}$, let $\overline z_{\pi^k}=\sum_{i\in\mathcal I_k}\pi_i^k z_i$ and define the graph Dirichlet form
\begin{equation}
    \mathcal D_k(\mathbf z)
    =
    \frac{1}{2}
    \sum_{i,j\in\mathcal I_k}
    \pi_i^k\mathsf P_{ij}^k
    \|
        z_i-z_j
    \|_2^2.
    \label{eq:app_graph_dirichlet_form}
\end{equation}
Since the normalized Laplacian is similar to
$\mathbf I_{n_k}-\mathsf P_k$, the spectral gap $\lambda_k$ yields the Poincar\'e inequality
\begin{equation}
    \lambda_k
    \sum_{i\in\mathcal I_k}
    \pi_i^k
    \|
        z_i-\overline z_{\pi^k}
    \|_2^2
    \leq
    \mathcal D_k(\mathbf z)
    \leq
    2
    \sum_{i\in\mathcal I_k}
    \pi_i^k
    \|
        z_i-\overline z_{\pi^k}
    \|_2^2.
    \label{eq:app_graph_poincare}
\end{equation}
The upper bound follows because the spectrum of the normalized Laplacian is contained in $[0,2]$. The vector-valued inequality follows by applying the scalar inequality coordinatewise.

Because every node has a self-loop, $\mathsf P_{ii}^k=1/d_i^{(R)}$. Moreover, $d_i^{(R)}\leq n_k$, and hence, $\operatorname{tr}(\mathsf P_k)=\sum_{i\in\mathcal I_k}\frac{1}{d_i^{(R)}}\geq1$.

Let $1=\theta_1^k\geq\theta_2^k\geq\cdots\geq\theta_{n_k}^k$ be the eigenvalues of $\mathsf P_k$. Since $\sum_{r=2}^{n_k}\theta_r^k\geq0$, we have $\theta_2^k\geq0$. Therefore,
\begin{equation}
    0<\lambda_k=1-\theta_2^k\leq1.
    \label{eq:app_spectral_gap_upper_bound}
\end{equation}

The graph interaction energy is
\begin{equation}
    \mathcal E_{\beta,k}^{G}
    \left[
        \boldsymbol\nu^k
    \right]
    =
    \frac{1}{2\beta}
    \sum_{i,j\in\mathcal I_k}
    q_{ij}^k
    \iint_{\mathbb S^{d-1}\times\mathbb S^{d-1}}
    e^{\beta\langle x,y\rangle}
    \,d\nu_i^k(x)d\nu_j^k(y).
    \label{eq:neighborhood_product_energy}
\end{equation}

We next verify the product Wasserstein gradient structure. By symmetry of $q_{ij}^k$, the first variation of $\mathcal E_{\beta,k}^{G}$ with respect to the $i$-th species is
\begin{equation}
    \frac{\delta\mathcal E_{\beta,k}^{G}}
    {\delta\nu_i^k}(x)
    =
    \frac{1}{\beta}
    \sum_{j\in\mathcal I_k}
    q_{ij}^k
    \int_{\mathbb S^{d-1}}
    e^{\beta\langle x,y\rangle}
    \,d\nu_j^k(y).
    \label{eq:app_product_first_variation}
\end{equation}
Since $q_{ij}^k=\pi_i^k\mathsf P_{ij}^k$, taking the spherical gradient gives
\begin{align}
    \nabla_x^\circ
    \left(
        \frac{1}{\pi_i^k}
        \frac{\delta\mathcal E_{\beta,k}^{G}}
        {\delta\nu_i^k}(x)
    \right)
    &=
    \sum_{j\in\mathcal I_k}
    \mathsf P_{ij}^k
    \int_{\mathbb S^{d-1}}
    \mathbf P_x^\perp(y)
    e^{\beta\langle x,y\rangle}
    \,d\nu_j^k(y)
    \notag\\
    &=
    \mathcal X_{i,\boldsymbol\nu^k,\beta}(x).
    \label{eq:app_product_wasserstein_gradient}
\end{align}

To verify this identification, let $\xi_i\in L^2(\nu_i^k;T\mathbb S^{d-1})$ and consider the perturbation $\nu_{i,\varepsilon}^k=(\exp_x(\varepsilon\xi_i(x)))_{\#}\nu_i^k$.
Then
\begin{equation}
    \left.
    \frac{d}{d\varepsilon}
    \mathcal E_{\beta,k}^{G}
    \left[
        \boldsymbol\nu_\varepsilon^k
    \right]
    \right|_{\varepsilon=0}
    =
    \sum_{i\in\mathcal I_k}
    \pi_i^k
    \int_{\mathbb S^{d-1}}
    \left\langle
        \mathcal X_{i,\boldsymbol\nu^k,\beta}(x),
        \xi_i(x)
    \right\rangle
    \,d\nu_i^k(x).
    \label{eq:app_product_directional_derivative}
\end{equation}
Hence, $\left(\mathcal X_{i,\boldsymbol\nu^k,\beta}\right)_{i\in\mathcal I_k}$ is the \emph{Wasserstein gradient} of $\mathcal E_{\beta,k}^{G}$ under $\mathcal W_{2,\pi^k}$.

Define the squared Wasserstein slope by
\begin{equation}
    \mathcal I_{\beta,k}^{G}
    \left[
        \boldsymbol\nu^k
    \right]
    =
    \sum_{i\in\mathcal I_k}
    \pi_i^k
    \int_{\mathbb S^{d-1}}
    \|
        \mathcal X_{i,\boldsymbol\nu^k,\beta}(x)
    \|_2^2
    \,d\nu_i^k(x).
    \label{eq:neighborhood_product_slope}
\end{equation}
Using the continuity equation in Eq.~\ref{eq:neighborhood_product_continuity_equation} and spherical integration by parts, we obtain
\begin{align}
    \frac{d}{dt}
    \mathcal E_{\beta,k}^{G}
    \left[
        \boldsymbol\nu_t^k
    \right]
    &=
    \sum_{i\in\mathcal I_k}
    \pi_i^k
    \int_{\mathbb S^{d-1}}
    \|
        \mathcal X_{i,\boldsymbol\nu_t^k,\beta}(x)
    \|_2^2
    \,d\nu_{i,t}^k(x)
    \notag\\
    &=
    \mathcal I_{\beta,k}^{G}
    \left[
        \boldsymbol\nu_t^k
    \right]
    \geq0.
    \label{eq:app_product_energy_identity}
\end{align}
Thus, $\boldsymbol\nu_t^k$ is a \emph{reverse Wasserstein gradient flow} of $\mathcal E_{\beta,k}^{G}$ in the weighted product space.

\subsubsection{Proof of Theorem~\ref{thm:neighborhood_multicluster}}
\label{app:neighborhood_proof}
\begin{proof}
By Assumption~\ref{ass:neighborhood_mask} and the radius-graph decomposition in Appendix~\ref{app:neighborhood_component_decomposition}, the sets $\{\mathcal I_k\}_{k=1}^{K}$ are the nontrivial connected components of $G_R$. Assumption~\ref{ass:neighborhood_initial_states} further provides pairwise disjoint initial caps. Fix a component $\mathcal I_k$ and set $c_\alpha=\cos(2\alpha)>0$. We first prove forward invariance. The case $\alpha=0$ is immediate, so suppose that $\alpha>0$. Let $x\in\partial S_\alpha^+(u_k)$ and $y\in S_\alpha^+(u_k)$. Write
\begin{equation}
    x
    =
    \cos\alpha\,u_k
    +
    \sin\alpha\,v,
    \qquad
    y
    =
    \cos\theta\,u_k
    +
    \sin\theta\,w,
\end{equation}
where $\theta\leq\alpha$ and
$v,w\in u_k^\perp$ are unit vectors. Then
\begin{align}
    \left\langle
        \mathbf P_x^\perp(y),
        u_k
    \right\rangle
    &=
    \langle y,u_k\rangle
    -
    \langle x,y\rangle
    \langle x,u_k\rangle
    \notag\\
    &=
    \sin\alpha
    \left(
        \sin\alpha\cos\theta
        -
        \cos\alpha\sin\theta
        \langle v,w\rangle
    \right)
    \notag\\
    &\geq
    \sin\alpha\sin(\alpha-\theta)
    \geq0.
    \label{eq:app_measure_cap_inward}
\end{align}
Therefore,
\begin{equation}
    \left\langle
        \mathcal X_{i,\boldsymbol\nu_t^k,\beta}(x),
        u_k
    \right\rangle
    \geq0
\end{equation}
for every boundary point $x$ whenever all nodewise measures are supported in the cap. The characteristic flow associated with Eq.~\ref{eq:neighborhood_product_continuity_equation} consequently preserves the cap:
\begin{equation}
    \operatorname{supp}
    \left(
        \nu_{i,t}^k
    \right)
    \subseteq
    S_\alpha^+(u_k),
    \qquad
    t\geq0,
    \quad
    i\in\mathcal I_k.
    \label{eq:app_measure_cap_invariance}
\end{equation}

Define the degree-weighted mean by
\begin{equation}
    \overline x_t^k
    =
    \sum_{i\in\mathcal I_k}
    \pi_i^k
    \int_{\mathbb S^{d-1}}
    x
    \,d\nu_{i,t}^k(x)
    =
    \int_{\mathbb S^{d-1}}
    x
    \,d\widehat\mu_t^k(x),
    \label{eq:neighborhood_weighted_mean}
\end{equation}
and define the degree-weighted disagreement by
\begin{align}
    V_t^k
    &=
    1-
    \|
        \overline x_t^k
    \|_2^2
    \notag\\
    &=
    \int_{\mathbb S^{d-1}}
    \|
        x-\overline x_t^k
    \|_2^2
    \,d\widehat\mu_t^k(x)
    \notag\\
    &=
    \frac{1}{2}
    \iint_{\mathbb S^{d-1}\times\mathbb S^{d-1}}
    \|x-y\|_2^2
    \,d\widehat\mu_t^k(x)
    d\widehat\mu_t^k(y).
    \label{eq:neighborhood_weighted_disagreement}
\end{align}

Define the edge-weighted disagreement by
\begin{equation}
    \mathcal D_k
    \left[
        \boldsymbol\nu_t^k
    \right]
    =
    \frac{1}{2}
    \sum_{i,j\in\mathcal I_k}
    q_{ij}^k
    \iint_{\mathbb S^{d-1}\times\mathbb S^{d-1}}
    \|x-y\|_2^2
    \,d\nu_{i,t}^k(x)
    d\nu_{j,t}^k(y).
    \label{eq:app_measure_graph_disagreement}
\end{equation}
Differentiating the weighted mean through the continuity equations gives
\begin{equation}
    \frac{d}{dt}
    \overline x_t^k
    =
    \sum_{i\in\mathcal I_k}
    \pi_i^k
    \int_{\mathbb S^{d-1}}
    \mathcal X_{i,\boldsymbol\nu_t^k,\beta}(x)
    \,d\nu_{i,t}^k(x).
\end{equation}
Using $q_{ij}^k=q_{ji}^k$ and symmetrizing in
$(i,x)$ and $(j,y)$, we obtain
\begin{align}
    \frac{d}{dt}V_t^k
    &=
    -2
    \left\langle
        \overline x_t^k,
        \frac{d}{dt}
        \overline x_t^k
    \right\rangle
    \notag\\
    &=
    -
    \sum_{i,j\in\mathcal I_k}
    q_{ij}^k
    \iint_{\mathbb S^{d-1}\times\mathbb S^{d-1}}
    e^{\beta\langle x,y\rangle}
    \bigl(
        1-\langle x,y\rangle
    \bigr)
    \left\langle
        \overline x_t^k,
        x+y
    \right\rangle
    \,d\nu_{i,t}^k(x)
    d\nu_{j,t}^k(y).
    \label{eq:app_measure_disagreement_identity}
\end{align}
Forward invariance implies
\begin{equation}
    \langle x,y\rangle
    \geq
    c_\alpha,
    \qquad
    \left\langle
        \overline x_t^k,
        x+y
    \right\rangle
    \geq
    2c_\alpha
\end{equation}
on the supports of the nodewise measures. Hence,
\begin{equation}
    \frac{d}{dt}V_t^k
    \leq
    -2c_\alpha e^{\beta c_\alpha}
    \mathcal D_k
    \left[
        \boldsymbol\nu_t^k
    \right].
    \label{eq:app_measure_disagreement_decay_pre}
\end{equation}

To apply the graph Poincar\'e inequality, define
\begin{equation}
    \mathcal{M}_{i,t}^k
    =
    \int_{\mathbb S^{d-1}}
    x
    \,d\nu_{i,t}^k(x),
    \qquad
    \sigma_{i,t}^{k\,2}
    =
    \int_{\mathbb S^{d-1}}
    \|
        x-\mathcal{M}_{i,t}^k
    \|_2^2
    \,d\nu_{i,t}^k(x).
\end{equation}
Then
\begin{equation}
    V_t^k
    =
    \sum_{i\in\mathcal I_k}
    \pi_i^k
    \sigma_{i,t}^{k\,2}
    +
    \sum_{i\in\mathcal I_k}
    \pi_i^k
    \|
        \mathcal{M}_{i,t}^k-\overline x_t^k
    \|_2^2,
    \label{eq:app_measure_variance_decomposition}
\end{equation}
while
\begin{align}
    \mathcal D_k
    \left[
        \boldsymbol\nu_t^k
    \right]
    &=
    \sum_{i\in\mathcal I_k}
    \pi_i^k
    \sigma_{i,t}^{k\,2}
    \notag\\
    &\quad+
    \frac{1}{2}
    \sum_{i,j\in\mathcal I_k}
    q_{ij}^k
    \|
        \mathcal{M}_{i,t}^k-\mathcal{M}_{j,t}^k
    \|_2^2.
    \label{eq:app_measure_dirichlet_decomposition}
\end{align}
Applying the graph Poincar\'e inequality to
$(\mathcal{M}_{i,t}^k)_{i\in\mathcal I_k}$ and using
$\lambda_k\leq1$ gives
\begin{equation}
    \mathcal D_k
    \left[
        \boldsymbol\nu_t^k
    \right]
    \geq
    \lambda_kV_t^k.
    \label{eq:app_measure_poincare}
\end{equation}
Therefore,
\begin{equation}
    \frac{d}{dt}V_t^k
    \leq
    -2\lambda_k
    \cos(2\alpha)
    e^{\beta\cos(2\alpha)}
    V_t^k
    =
    -2\kappa_kV_t^k.
\end{equation}
Gronwall's inequality yields
\begin{equation}
    V_t^k
    \leq
    V_0^k
    e^{-2\kappa_k t}.
    \label{eq:app_measure_disagreement_rate}
\end{equation}

We next estimate the metric speed of the product Wasserstein flow. Using Jensen's inequality, $\langle x,y\rangle\leq1$, and $\|\mathbf P_x^\perp(y)\|_2\leq\|x-y\|_2$, we obtain
\begin{align}
    \mathcal I_{\beta,k}^{G}
    \left[
        \boldsymbol\nu_t^k
    \right]
    &\leq
    e^{2\beta}
    \sum_{i,j\in\mathcal I_k}
    q_{ij}^k
    \iint_{\mathbb S^{d-1}\times\mathbb S^{d-1}}
    \|x-y\|_2^2
    \,d\nu_{i,t}^k(x)
    d\nu_{j,t}^k(y)
    \notag\\
    &=
    2e^{2\beta}
    \mathcal D_k
    \left[
        \boldsymbol\nu_t^k
    \right].
    \label{eq:app_measure_slope_dirichlet}
\end{align}
The upper graph Poincar\'e bound and
Eq.~\ref{eq:app_measure_dirichlet_decomposition} give
\begin{equation}
    \mathcal D_k
    \left[
        \boldsymbol\nu_t^k
    \right]
    \leq
    2V_t^k.
\end{equation}
Consequently,
\begin{equation}
    \left|
        \dot{\boldsymbol\nu}_t^k
    \right|_{\mathcal W_{2,\pi^k}}
    \leq
    \left(
        \mathcal I_{\beta,k}^{G}
        \left[
            \boldsymbol\nu_t^k
        \right]
    \right)^{1/2}
    \leq
    2e^\beta
    \left(
        V_0^k
    \right)^{1/2}
    e^{-\kappa_k t}.
    \label{eq:app_measure_metric_speed}
\end{equation}
Therefore, for $s>t$,
\begin{align}
    \mathcal W_{2,\pi^k}
    \left(
        \boldsymbol\nu_t^k,
        \boldsymbol\nu_s^k
    \right)
    &\leq
    \int_t^s
    \left|
        \dot{\boldsymbol\nu}_r^k
    \right|_{\mathcal W_{2,\pi^k}}
    \,dr
    \notag\\
    &\leq
    \frac{2e^\beta}{\kappa_k}
    \left(
        V_0^k
    \right)^{1/2}
    e^{-\kappa_k t}.
    \label{eq:app_measure_tail_length}
\end{align}

The product Wasserstein space is complete, so
$\boldsymbol\nu_t^k$ converges to a limit
$\boldsymbol\nu_\infty^k$. Since $V_t^k\to0$, the aggregate limit has zero variance and must be a Dirac mass. Because every $\pi_i^k>0$, there exists $x_\infty^k\in S_\alpha^+(u_k)$ such that
\begin{equation}
    \boldsymbol\nu_\infty^k
    =
    \left(
        \delta_{x_\infty^k}
    \right)_{i\in\mathcal I_k}.
\end{equation}
Letting $s\to\infty$ in
Eq.~\ref{eq:app_measure_tail_length} and using $C_k=2e^\beta/\kappa_k$, we obtain
\begin{align}
    \mathcal W_{2,\pi^k}
    \left(
        \boldsymbol\nu_t^k,
        \boldsymbol\nu_\infty^k
    \right)
    &=
    W_2
    \left(
        \widehat\mu_t^k,
        \delta_{x_\infty^k}
    \right)
    \notag\\
    &\leq
    C_k
    \left(
        V_0^k
    \right)^{1/2}
    e^{-\kappa_k t}.
    \label{eq:app_measure_product_rate}
\end{align}

Since $1/n_k\leq\chi_k\pi_i^k$ for every $i\in\mathcal I_k$, we also have
\begin{equation}
    W_2
    \left(
        \mu_t^k,
        \delta_{x_\infty^k}
    \right)
    \leq
    \sqrt{\chi_k}\,
    C_k
    \left(
        V_0^k
    \right)^{1/2}
    e^{-\kappa_k t}.
    \label{eq:app_measure_uniform_rate}
\end{equation}

For $k\neq\ell$, cap separation gives $d_{\mathbb S}(x_\infty^k,x_\infty^\ell)\geq d_{\mathbb S}(u_k,u_\ell)-2\alpha>0$. Hence, the limiting directions are pairwise distinct.

Finally, combining the componentwise couplings gives
\begin{align}
    W_2^2
    \left(
        \mu_t,
        \mu_\infty
    \right)
    &\leq
    \sum_{k=1}^{K}
    m_k
    W_2^2
    \left(
        \mu_t^k,
        \delta_{x_\infty^k}
    \right)
    \notag\\
    &\leq
    \sum_{k=1}^{K}
    m_k\chi_kC_k^2V_0^k
    e^{-2\kappa_k t}
    \notag\\
    &\leq
    C_G^2
    e^{-2\kappa_{\min}t}.
    \label{eq:app_measure_global_rate}
\end{align}
Taking square roots completes the proof.
\end{proof}

\section{Extended Experiment Results}
\label{app:extended_results}

\subsection{Implementation Details}
\label{app:implementation_details}

\subsubsection{Numerical Simulations}
\label{app:simulation_implementation}

Table~\ref{tab:simulation_parameters} summarizes the implementation details for all four attention dynamics, including the shared numerical setup, initialization schemes, representative configurations for the \emph{explicit opinion leader dynamics}, and graph configurations for the \emph{implicit opinion leader dynamics}.

\begin{table*}[htbp]
\centering
\small
\caption{Implementation details of the numerical simulations.}
\label{tab:simulation_parameters}
\setlength{\tabcolsep}{5pt}

\resizebox{\textwidth}{!}{%
\begin{tabular}{l|cccc}
\toprule
\multirow{2}{*}{\textbf{Hyperparameters}}
& \multicolumn{4}{c}{\textbf{Attention dynamics}} \\
\cmidrule(l){2-5}
& \textbf{Standard}
& \textbf{Explicit (General)}
& \textbf{Explicit (Conditional)}
& \textbf{Implicit} \\
\midrule

Number of particles $n$
& 180 & 180 & 180 & 180 \\

Inverse temperature $\beta$
& 1.0 & 1.0 & 1.0 & 1.0 \\

Integrator
& RK4 & RK4 & RK4 & RK4 \\

Simulation Time $T$
& 10 & 10 & 0.05 & 10 \\

Time step $\Delta t$
& 0.05 & 0.05 & 0.01 & 0.05 \\
\midrule

Initialization
& $\mathrm{vMF}(e_1,\kappa=1)$
& Uniform on $\mathbb S^2$
& Grouped spherical caps
& Grouped spherical caps \\

vMF concentration $\kappa$
& 1.0 & \xmark & \xmark & \xmark \\

Number of representatives $m$
& \xmark & 6 & 6 & \xmark \\

Representative directions
& \xmark
& $\{\pm e_1,\pm e_2,\pm e_3\}$
& $\{\pm e_1,\pm e_2,\pm e_3\}$
& \xmark \\
Component-center directions
& \xmark
& \xmark
& \xmark
& $\{\pm e_1,\pm e_2,\pm e_3\}$ \\
Representative radii $r_a$
& \xmark
& $1.85\;(\times 6)$
& \texttt{auto} ($r_a \approx 6.0468$)
& \xmark \\

Number of groups or components
& \xmark & \xmark & 6 & 6 \\
Group or component sizes
& \xmark & \xmark & $30\;(\times 6)$ & $30\;(\times 6)$ \\
Initial cap radius
& \xmark & \xmark & $\delta=0.25$ & $\alpha=0.75$ \\
Explicit invariant-cap radius $\alpha_{\mathrm{exp}}$
& \xmark & \xmark & $\alpha_{\mathrm{exp}}=0.28$ & \xmark \\
Interaction radius $R$
& \xmark & \xmark & \xmark & 1.5 \\
Coordinate spacing
& \xmark & \xmark & \xmark & 1.0 \\
Intercomponent gap
& \xmark & \xmark & \xmark & 6.0 \\
\bottomrule
\end{tabular}
}
\end{table*}

\subsubsection{Frontier LLM Analysis}
\label{app:frontier_llm_analysis}

Table~\ref{tab:llm_analysis_parameters} reports the implementation details of the hidden-state analysis for the four frontier LLMs, including GPU allocations, benchmark sampling, dimensionality reduction, and final-layer HDBSCAN clustering. All models follow the same analysis protocol.

\begin{table*}[htbp]
\centering
\small
\caption{Implementation details of the hidden-state analysis across four frontier LLMs.}
\label{tab:llm_analysis_parameters}
\setlength{\tabcolsep}{5pt}

\resizebox{\textwidth}{!}{%
\begin{tabular}{l|cccc}
\toprule
\multirow{2}{*}{\textbf{Hyperparameters}}
& \multicolumn{4}{c}{\textbf{Models}} \\
\cmidrule(l){2-5}
& \textbf{GLM-4.7-Flash}
& \textbf{DeepSeek-V4-Flash}
& \textbf{Kimi-K3}
& \textbf{MiniMax-M3} \\
\midrule

GPUs
& $2\times\text{H100}$
& $4\times\text{H100}$
& $32\times\text{H100}$
& $16\times\text{H100}$ \\

\midrule

Benchmarks
& \multicolumn{4}{c}{
HumanEval, ARC-Easy, HellaSwag, and MATH
} \\

Samples per benchmark
& 100 & 100 & 100 & 100 \\

Hidden-state capture
& Prefill & Prefill & Prefill & Prefill \\

Maximum captured tokens
& 2048 & 2048 & 2048 & 2048 \\

Maximum sequence length
& 2048 & 2048 & 2048 & 2048 \\

Observed Transformer layers
& 12 & 12 & 12 & 12 \\

Hidden-state normalization
& $\ell_2$ & $\ell_2$ & $\ell_2$ & $\ell_2$ \\

\midrule

Reduction methods
& \multicolumn{4}{c}{
UMAP, spherical UMAP
} \\

UMAP clustering dimension
& 32 & 32 & 32 & 32 \\

UMAP neighbors
& 15 & 15 & 15 & 15 \\

UMAP minimum distance
& 0.1 & 0.1 & 0.1 & 0.1 \\

Reducer input metric
& cosine & cosine & cosine & cosine \\

\midrule

Clustering algorithm
& HDBSCAN & HDBSCAN & HDBSCAN & HDBSCAN \\

Clustering distance
& cosine & cosine & cosine & cosine \\

Minimum cluster fraction
& 0.03 & 0.03 & 0.03 & 0.03 \\

Minimum samples fraction
& 0.01 & 0.01 & 0.01 & 0.01 \\

Cluster selection method
& EOM & EOM & EOM & EOM \\

HDBSCAN $\alpha_{\mathrm{HDB}}$
& 1.0 & 1.0 & 1.0 & 1.0 \\

Cluster selection $\epsilon$
& 0.0 & 0.0 & 0.0 & 0.0 \\

Noise-token assignment
& Nearest cluster & Nearest cluster & Nearest cluster & Nearest cluster \\

Cases with undefined $S_{\mathrm{cos}}$
& Report N/A & Report N/A & Report N/A & Report N/A \\

\bottomrule
\end{tabular}
}
\end{table*}

\subsection{Cosine Silhouette Scores}
\label{app:s_cos}

We give the formal definition of the \emph{cosine silhouette score} $S_{\mathrm{cos}}$ used in our analysis. Let $y_i$ denote the reduced representation of token $i$, and let $\mathcal C(i)$ be its assigned cluster. For nonzero vectors, the cosine distance is
\begin{equation}
    d_{\mathrm{cos}}(y_i,y_j)
    =
    1-
    \frac{\langle y_i,y_j\rangle}
    {\lVert y_i\rVert_2\lVert y_j\rVert_2}.
    \label{eq:cosine_distance}
\end{equation}
This distance measures differences in direction rather than magnitude, making it natural for the spherical representations considered here. For unit vectors, it simplifies to $1-\langle y_i,y_j\rangle$.

For a token in a cluster containing at least two members, define
\begin{equation}
    \begin{aligned}
        a_i
        &=
        \frac{1}{|\mathcal C(i)|-1}
        \sum_{\substack{j\in\mathcal C(i)\\j\neq i}}
        d_{\mathrm{cos}}(y_i,y_j),
        \\
        b_i
        &=
        \min_{\mathcal C\neq\mathcal C(i)}
        \frac{1}{|\mathcal C|}
        \sum_{j\in\mathcal C}
        d_{\mathrm{cos}}(y_i,y_j).
    \end{aligned}
    \label{eq:silhouette_cohesion_separation}
\end{equation}
Here, $a_i$ measures the average distance from token $i$ to its own cluster, while $b_i$ measures its average distance to the nearest competing cluster. The tokenwise silhouette and its average over the $N$ evaluated tokens are
\begin{equation}
    s_i
    =
    \frac{b_i-a_i}{\max\{a_i,b_i\}},
    \qquad
    S_{\mathrm{cos}}
    =
    \frac{1}{N}\sum_{i=1}^{N}s_i.
    \label{eq:cosine_silhouette_score}
\end{equation}
The score lies in $[-1,1]$. Values near $1$ indicate that a token is much closer to its own cluster than to any competing cluster; values near $0$ indicate comparable distances to its own and a competing cluster; negative values indicate that another cluster is closer on average. A higher $S_{\mathrm{cos}}$ therefore indicates more compact and better-separated clusters. Because the score compares distances relatively, it measures cluster separability rather than absolute distance to consensus.

The silhouette score is defined only when the clustering contains at least two clusters. Accordingly, we compute $S_{\mathrm{cos}}$ only when $2 \leq K_{\mathrm{cluster}} \leq N-1$, where $K_{\mathrm{cluster}}$ denotes the number of non-noise clusters identified by HDBSCAN. When HDBSCAN identifies fewer than two non-noise clusters, corresponding to either a single non-noise cluster or an all-noise result, we report $S_{\mathrm{cos}}$ as \emph{N/A}.

\subsection{Explicit Opinion Leader Dynamics under Conditional Initialization}
\label{app:explicit_conditional_simulation}

\begin{figure}[htbp]
    \centering
    \includegraphics[width=.99\linewidth]{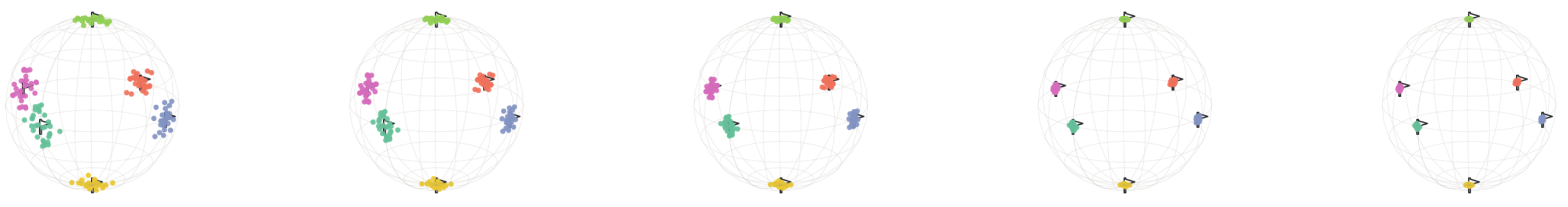}
    \caption{Finite-particle simulation on $\mathbb S^2$ of explicit opinion leader dynamics under conditional initialization. Colors assigned at $t=0$ indicate membership in six groups defined by representatives and are retained throughout the simulation; flags mark the fixed representatives. Unlike general initialization, the conditional setting satisfies stronger concentration and separation conditions, under which the groups remain separated and converge exponentially to distinct \emph{explicit opinion leaders}.}
    \label{fig:conditional_explicit_simulation}
\end{figure}

This experiment complements the general initialization simulation in the main text by considering stronger assumptions. Under conditional initialization, the $n=180$ particles are divided into six equal groups, each initialized with antithetically paired samples drawn from a spherical cap of radius $\delta=0.25$ centered at its associated direction. We set the invariant-cap radius to $\alpha=0.28$ and choose a common representative magnitude such that Assumption~\ref{ass:explicit_separated_basins} holds for every group. Fig.~\ref{fig:conditional_explicit_simulation} shows that the groups remain separated and converge to distinct limiting directions, consistent with the exponential convergence established in Theorem~\ref{thm:explicit_separated_attractors}.

\subsection{Extended Analysis of Token Dynamics in Frontier LLMs}
\label{app:extended_llm_analysis}

\subsubsection{Results across Reduced Embedding Dimensions}
\label{app:embedding_dimensionality}

\begin{table}[htbp]
  \caption{Cosine silhouette scores for \textcolor{explicitcolor}{\textbf{DeepSeek-}}\textcolor{implicitcolor}{\textbf{V4-Flash}} on ARC-Easy across spherical UMAP embedding dimensions. The \emph{upper} row reports mean final-layer scores across $100$ random samples after projecting hidden states with spherical UMAP, while the \emph{lower} row gives the Gaussian reference score at the same hidden dimension.}
  \label{tab:reduced_dimension_effect}
  \centering
  \small
  \setlength{\tabcolsep}{3pt}
  \begin{NiceTabularX}{\linewidth}{
    @{}l *{4}{>{\centering\arraybackslash}X}@{}
  }
    \toprule
    \multirow{2}{*}{\textbf{Model}} & \multicolumn{4}{c}{\textbf{Spherical UMAP embedding dimension}} \\
    \cmidrule(lr){2-5}
    & \textbf{16}
    & \textbf{32}
    & \textbf{48}
    & \textbf{64} \\
    \midrule

    & \textbf{0.86 $\pm$ 0.11} ($\uparrow$ \textbf{0.66})
    & \textbf{0.85 $\pm$ 0.12} ($\uparrow$ \textbf{0.63})
    & \textbf{0.85 $\pm$ 0.11} ($\uparrow$ \textbf{0.64})
    & \textbf{0.85 $\pm$ 0.13} ($\uparrow$ \textbf{0.63}) \\

    \multirow{-2}{*}{\textcolor{explicitcolor}{\textbf{DeepSeek-}}\textcolor{implicitcolor}{\textbf{V4-Flash}}}
    & 0.20 $\pm$ 0.13
    & 0.22 $\pm$ 0.12
    & 0.21 $\pm$ 0.12
    & 0.22 $\pm$ 0.11 \\
    \bottomrule
  \end{NiceTabularX}
\end{table}

We examine the sensitivity of $S_{\mathrm{cos}}$ to the dimension of the reduced embedding. Using \textcolor{explicitcolor}{\textbf{DeepSeek-}}\textcolor{implicitcolor}{\textbf{V4-Flash}} on ARC-Easy, we repeat the quantitative analysis with spherical UMAP embedding dimensions of $16$, $32$, $48$, and $64$. As shown in Table~\ref{tab:reduced_dimension_effect}, the mean cosine silhouette scores remain nearly unchanged at $\mathbf{0.85}$--$\mathbf{0.86}$, while the matched Gaussian references range from $0.20$ to $0.22$. The resulting gaps are similarly stable at $\mathbf{0.63}$--$\mathbf{0.66}$. The consistent results across embedding dimensions support the robustness of the observed clustering behavior across projection settings.

\subsubsection{Results across Dimensionality-Reduction Methods}

\begin{figure}[t]
    \centering
    \includegraphics[width=.99\linewidth]{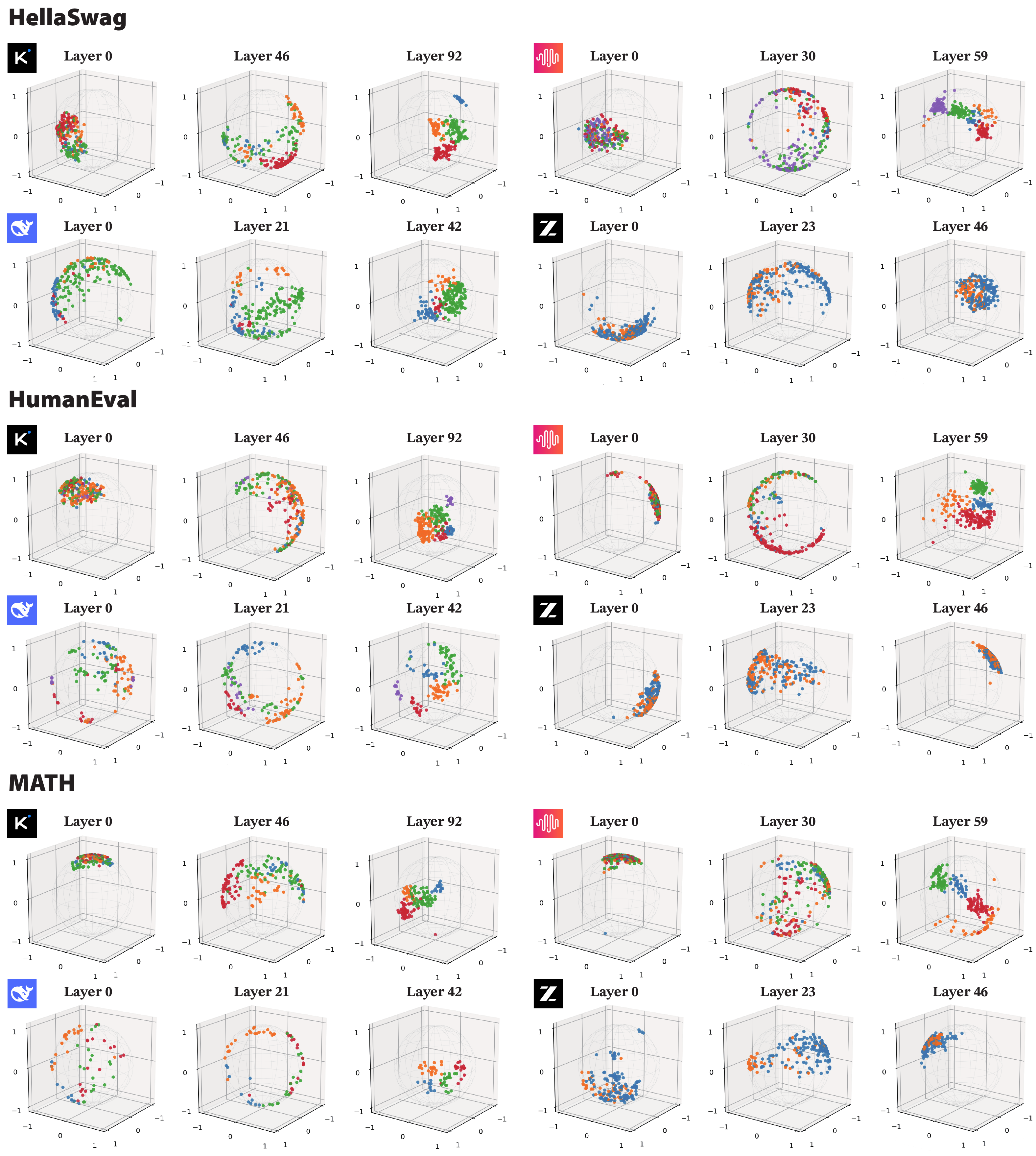}
    \caption{Layerwise evolution of token hidden states for HellaSwag, HumanEval, and MATH samples in \textcolor{explicitcolor}{\textbf{Kimi-K3}}, \textcolor{implicitcolor}{\textbf{MiniMax-M3}}, \textcolor{explicitcolor}{\textbf{DeepSeek-}}\textcolor{implicitcolor}{\textbf{V4-Flash}}, and \textcolor{standardcolor}{\textbf{GLM-4.7-Flash}}, visualized on unit sphere using spherical UMAP.}
    \vspace{-3mm}
    \label{fig:llm_observation_extended_riemann_umap}
\end{figure}

In addition to the spherical UMAP visualizations for ARC-Easy presented in the main text, we extend the same layerwise analysis of token hidden states to HellaSwag, HumanEval, and MATH for all four LLMs. As shown in Fig.~\ref{fig:llm_observation_extended_riemann_umap}, \textcolor{explicitcolor}{\textbf{Kimi-K3}}, \textcolor{implicitcolor}{\textbf{MiniMax-M3}}, and \textcolor{explicitcolor}{\textbf{DeepSeek-}}\textcolor{implicitcolor}{\textbf{V4-Flash}} consistently exhibit distinct final-layer clusters. By contrast, \textcolor{standardcolor}{\textbf{GLM-4.7-Flash}} exhibits weaker group separation and ends with a more concentrated final configuration. These results further demonstrate that the same qualitative pattern holds across benchmarks and aligns with the distinction in our framework between separated token groups and global alignment.

\begin{table}[htbp]
  \caption{
    Cosine silhouette scores across LLMs and benchmarks. For each LLM, the \emph{upper} row reports mean final-layer scores across $100$ random samples after projecting hidden states with UMAP, while the \emph{lower} row gives the Gaussian reference score at the same hidden dimension.}
  \label{tab:extended_umap}
  \centering
  \small
  \setlength{\tabcolsep}{3pt}

  \begin{NiceTabularX}{\linewidth}{
    @{}l *{4}{>{\centering\arraybackslash}X}@{}
  }
    \toprule
    \textbf{Model}
    & \textbf{HumanEval}
    & \textbf{ARC-Easy}
    & \textbf{HellaSwag}
    & \textbf{MATH} \\
    \midrule

    & \textbf{0.76 $\pm$ 0.27} ($\uparrow$ \textbf{0.58})
    & \textbf{0.89 $\pm$ 0.14} ($\uparrow$ \textbf{0.71})
    & \textbf{0.76 $\pm$ 0.18} ($\uparrow$ \textbf{0.61})
    & \textbf{0.90 $\pm$ 0.14} ($\uparrow$ \textbf{0.71}) \\

    \multirow{-2}{*}{\textcolor{explicitcolor}{\textbf{Kimi-K3}}}
    & 0.18 $\pm$ 0.04
    & 0.18 $\pm$ 0.06
    & 0.15 $\pm$ 0.04
    & 0.19 $\pm$ 0.05 \\
    \midrule

    & \textbf{0.87 $\pm$ 0.08} ($\uparrow$ \textbf{0.71})
    & \textbf{0.83 $\pm$ 0.08} ($\uparrow$ \textbf{0.66})
    & \textbf{0.86 $\pm$ 0.08} ($\uparrow$ \textbf{0.70})
    & \textbf{0.84 $\pm$ 0.10} ($\uparrow$ \textbf{0.66}) \\

    \multirow{-2}{*}{\textcolor{implicitcolor}{\textbf{MiniMax-M3}}}
    & 0.16 $\pm$ 0.03
    & 0.17 $\pm$ 0.04
    & 0.16 $\pm$ 0.03
    & 0.18 $\pm$ 0.03 \\
    \midrule

    & \textbf{0.73 $\pm$ 0.16} ($\uparrow$ \textbf{0.54})
    & \textbf{0.85 $\pm$ 0.12} ($\uparrow$ \textbf{0.63})
    & \textbf{0.64 $\pm$ 0.19} ($\uparrow$ \textbf{0.46})
    & \textbf{0.85 $\pm$ 0.11} ($\uparrow$ \textbf{0.64}) \\

    \multirow{-2}{*}{\textcolor{explicitcolor}{\textbf{DeepSeek-}}\textcolor{implicitcolor}{\textbf{V4-Flash}}}
    & 0.19 $\pm$ 0.07
    & 0.22 $\pm$ 0.11
    & 0.18 $\pm$ 0.04
    & 0.21 $\pm$ 0.13 \\
    \midrule

    & 0.34 $\pm$ 0.30 ($\uparrow$ 0.19)
    & 0.22 $\pm$ 0.16 ($\uparrow$ 0.05)
    & 0.32 $\pm$ 0.25 ($\uparrow$ 0.17)
    & 0.30 $\pm$ 0.16 ($\uparrow$ 0.13) \\

    \multirow{-2}{*}{\textcolor{standardcolor}{\textbf{GLM-4.7-Flash}}}
    & 0.15 $\pm$ 0.07
    & 0.17 $\pm$ 0.12
    & 0.15 $\pm$ 0.04
    & 0.17 $\pm$ 0.13 \\
    \bottomrule
  \end{NiceTabularX}
\end{table}

To examine whether such observed patterns depend on the choice of dimensionality-reduction methods, we repeat the analysis with UMAP in place of spherical UMAP. As reported in Table~\ref{tab:extended_umap}, the three sparse-attention LLMs score $\mathbf{0.64}$--$\mathbf{0.90}$ across all benchmarks, exceeding their matched Gaussian references by $\mathbf{0.46}$--$\mathbf{0.71}$. By comparison, \textcolor{standardcolor}{\textbf{GLM-4.7-Flash}} scores $0.22$--$0.34$, with smaller differences of $0.05$--$0.19$. The agreement between the results obtained with UMAP and spherical UMAP indicates that the stronger group separation observed in sparse-attention LLMs is robust across dimensionality-reduction methods, providing further empirical support for the clustering behavior predicted by our theory.

Fig.~\ref{fig:llm_observation_extended_umap} visualizes the layerwise evolution of token hidden states after projection with UMAP. Across all four benchmarks, the sparse-attention LLMs exhibit clearly separated final-layer token groups, whereas \textcolor{standardcolor}{\textbf{GLM-4.7-Flash}} shows less distinct group structure, with its final-layer token representations concentrated in a smaller region. This pattern agrees with the quantitative results in Table~\ref{tab:extended_umap} and closely matches the spherical UMAP findings presented in the main text, indicating that the observed contrast is consistent across dimensionality-reduction methods.

\begin{figure}[htbp]
    \centering
    \includegraphics[width=.99\linewidth]{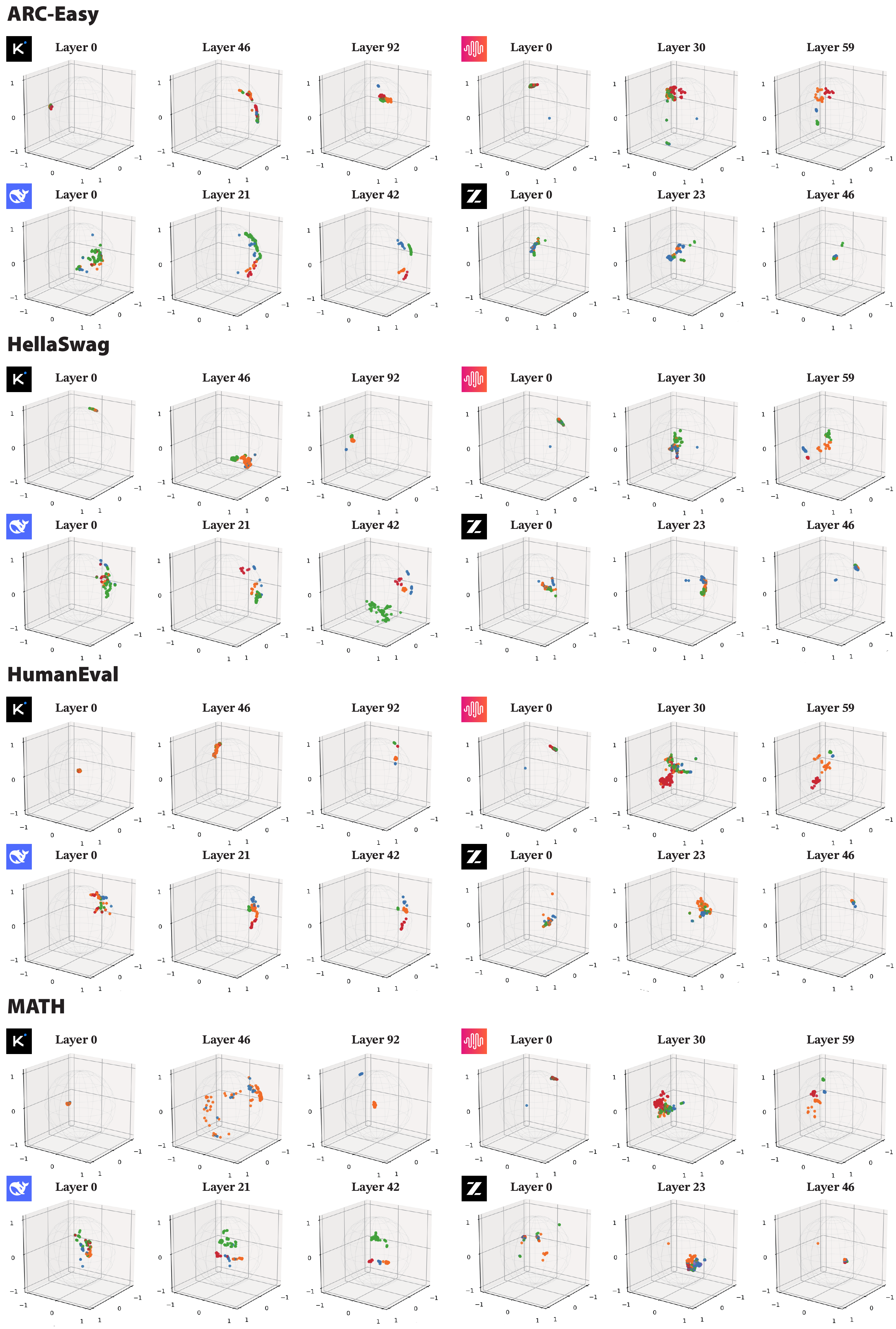}
    \caption{Layerwise evolution of token hidden states for ARC-Easy, HellaSwag, HumanEval, and MATH samples in \textcolor{explicitcolor}{\textbf{Kimi-K3}}, \textcolor{implicitcolor}{\textbf{MiniMax-M3}}, \textcolor{explicitcolor}{\textbf{DeepSeek-}}\textcolor{implicitcolor}{\textbf{V4-Flash}}, and \textcolor{standardcolor}{\textbf{GLM-4.7-Flash}}, visualized on unit sphere using UMAP.}
    \vspace{-3mm}
    \label{fig:llm_observation_extended_umap}
\end{figure}

\subsubsection{Results on RULER Long-Context Variable Tracking}

To further examine token dynamics under longer contexts, we analyze \textbf{4K}-token inputs from the RULER~\citep{hsieh2024ruler} variable-tracking (VT) task with \textcolor{implicitcolor}{\textbf{MiniMax-M3}} and \textcolor{explicitcolor}{\textbf{DeepSeek-}}\textcolor{implicitcolor}{\textbf{V4-Flash}}. Under a fixed selection budget, longer prompts reduce the fraction of available tokens or blocks retained by sparse attention, producing a more restrictive interaction pattern. Following the protocol in our main analysis, we visualize the layerwise evolution of token hidden states with spherical UMAP and report the mean final-layer $S_{\mathrm{cos}}$ over $100$ samples, together with a Gaussian reference matched to each hidden dimension. 

\begin{wraptable}{l}{0.45\textwidth}
  \centering
  \small
  \setlength{\tabcolsep}{3pt}

  \caption{Cosine silhouette scores for \textcolor{explicitcolor}{\textbf{DeepSeek-}}\textcolor{implicitcolor}{\textbf{V4-Flash}} and \textcolor{implicitcolor}{\textbf{MiniMax-M3}} on RULER VT task (\textbf{4K}-token length). The \emph{upper} row reports mean final-layer scores across $100$ random samples after projecting hidden states with spherical UMAP, while the \emph{lower} row gives the Gaussian reference score at the same hidden dimension.}
  \label{tab:4klength}

  \begin{NiceTabularX}{\linewidth}{
    @{}l >{\centering\arraybackslash}X@{}
  }
    \toprule
    \textbf{Model}
    & \textbf{RULER VT} \\
    \midrule

    \multirow{2}{*}{
      \textcolor{explicitcolor}{\textbf{DeepSeek-}}\textcolor{implicitcolor}{\textbf{V4-Flash}}
    }
    & \textbf{ 0.58 $\pm$ 0.19 } \\
    & N/A {\scriptsize($<2$ non-noise clusters)} \\
    \midrule

    \multirow{2}{*}{
      \textcolor{implicitcolor}{\textbf{MiniMax-M3}}
    }
    & \textbf{ 0.82 $\pm$ 0.05 } \\
    & N/A {\scriptsize($<2$ non-noise clusters)} \\
    \bottomrule
  \end{NiceTabularX}
\end{wraptable}

Table~\ref{tab:4klength} reports mean final-layer $S_{\mathrm{cos}}$ scores of $\mathbf{0.82}\pm\mathbf{0.05}$ for \textcolor{implicitcolor}{\textbf{MiniMax-M3}} and $\mathbf{0.58}\pm\mathbf{0.19}$ for \textcolor{explicitcolor}{\textbf{DeepSeek-}}\textcolor{implicitcolor}{\textbf{V4-Flash}}. For both LLMs, HDBSCAN identifies fewer than two non-noise clusters in every matched Gaussian reference sample. Since the cosine silhouette score requires at least two clusters, we report $S_{\mathrm{cos}}$ as \emph{N/A}. The findings provide quantitative evidence of multiple-group organization in the reduced token representation space of both sparse-attention LLMs. The layerwise visualizations in Fig.~\ref{fig:llm_observation_longcontext} further illustrate how this structure develops across layers: token groups that largely overlap at the input layer reorganize and become more distinct and separated towards the final layers. Together, these results suggest that the multiple-group pattern observed in the main analysis persists under longer-context inputs.

\begin{figure}[htbp]
    \centering
    \includegraphics[width=.99\linewidth]{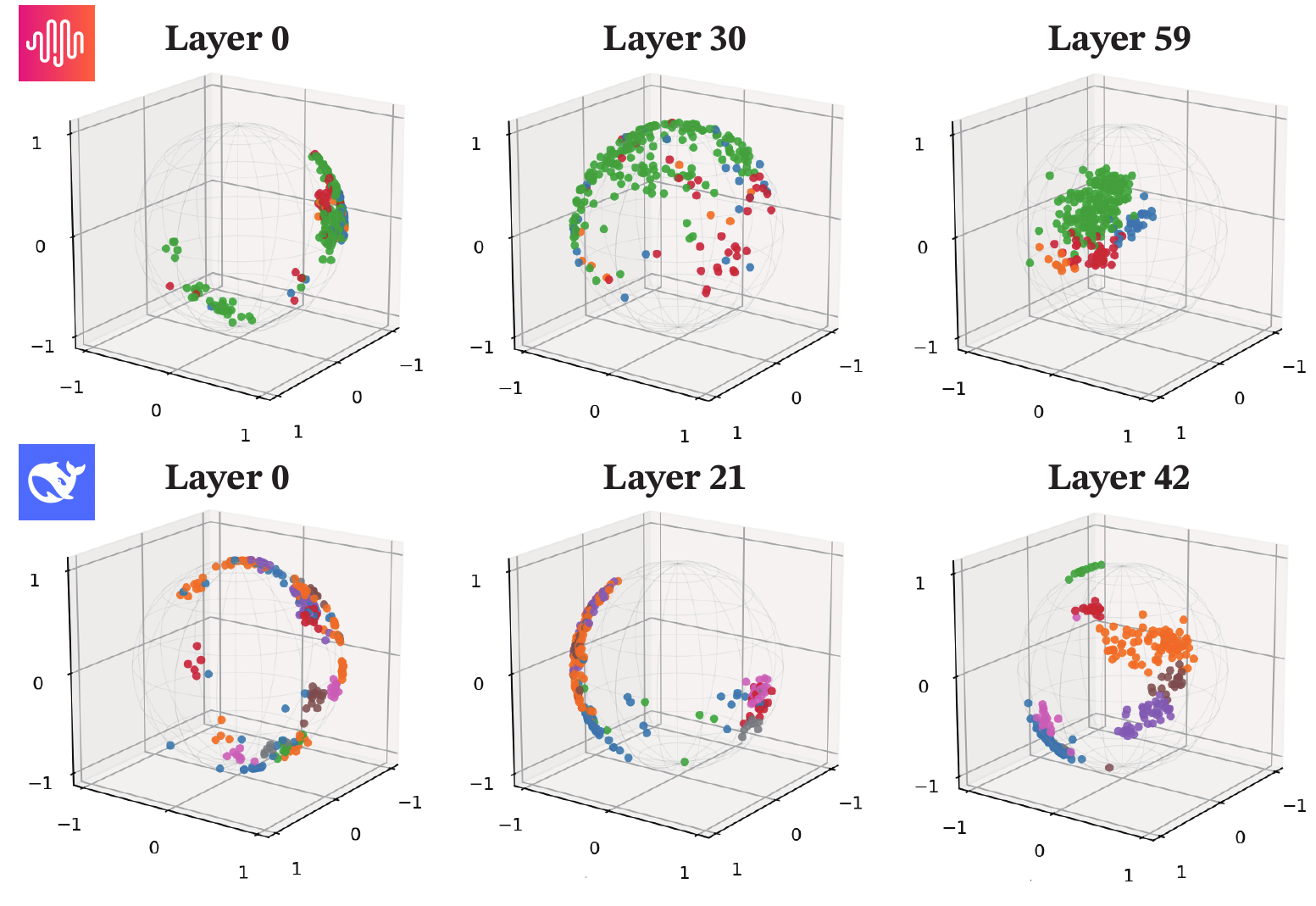}
    \caption{Layerwise evolution of token hidden states for RULER VT samples in \textcolor{implicitcolor}{\textbf{MiniMax-M3}} and \textcolor{explicitcolor}{\textbf{DeepSeek-}}\textcolor{implicitcolor}{\textbf{V4-Flash}}, visualized on unit sphere using spherical UMAP.}
    \vspace{-3mm}
    \label{fig:llm_observation_longcontext}
\end{figure}